\def\eqref#1{equation~\ref{#1}}
\def\1{\bm{1}}
\DeclareMathAlphabet{\mathsfit}{\encodingdefault}{\sfdefault}{m}{sl}
\SetMathAlphabet{\mathsfit}{bold}{\encodingdefault}{\sfdefault}{bx}{n}
\DeclareMathOperator*{\argmax}{arg\,max}
\newcommand{\updated}[1]{#1}
\newtheorem{lemma}{Lemma}
\newtheorem{theorem}{Theorem}
\newtheorem{remark}{Remark}
\newtheorem{example}{Example}
\newcommand{\boinf}{Best-of-$\infty$}
\newcommand{\boinflower}{best-of-$\infty$}
\newcommand{\boinfshort}{Bo$\infty$}
\newcommand{\textsetone}{Experimental Set 1: Effectiveness of adaptive sampling}
\newcommand{\textsettwo}{Experimental Set 2: Advantage of LLM ensemble over single LLM}
\newcommand{\textsetthree}{Experimental Set 3: Learning a good weight}
\newcommand{\textsetfour}{Experimental Set 4: Transfer learning of the optimal weight}
\newcommand{\textsetfive}{Experimental Set 5: Comparison with other answer-selection methods}
\title{\boinf{} -- Asymptotic Performance of Test-Time LLM Ensembling}
\author{Junpei Komiyama \\
Mohamed bin Zayed University of Artificial Intelligence\\New York University\\RIKEN AIP \\
\texttt{junpei@komiyama.info} \\
\and
Daisuke Oba \\
Institute of Science Tokyo \\
\texttt{daisuke.oba@nlp.comp.isct.ac.jp} \\
\and
Masafumi Oyamada \\
NEC Corporation \\
\texttt{stillpedant@gmail.com}
}
\begin{document}

\maketitle

\begin{abstract}
We study best-of-$N$ for large language models (LLMs) where the selection is based on majority voting. In particular, we analyze the limit $N \to \infty$, which we denote as \boinflower. While this approach achieves impressive performance in the limit, it requires an infinite test-time budget. To address this, we propose an adaptive generation scheme that selects $N$ based on answer agreement, thereby efficiently allocating inference-time computation. Beyond adaptivity, we extend the framework to weighted ensembles of multiple LLMs, showing that such mixtures can outperform any individual model. The optimal ensemble weighting is formulated and efficiently computed as a mixed-integer linear program. Extensive experiments demonstrate the effectiveness of our approach.
Our code is available at \url{https://github.com/jkomiyama/BoInf-code-publish/}.
\end{abstract}

\section{Introduction}\label{sec_intro}

The last few years have witnessed remarkable advancements in large language models (LLMs), in their industrial successes including closed models such as Gemini \citep{Gemini2.5}, GPT \citep{OpenAI2023GPT4}, and Claude \citep{Anthropic2025Claude4} as well as open-weight models such as Llama  \citep{dubey2024llama3}, Deepseek \citep{deepseekrone}, Qwen \citep{qwen3technicalreport,ye2025limoreasoning,cheng2025k2thinkparameterefficientreasoning} and many others including \cite{liu2023llm360fullytransparentopensource,Almazrouei2023Falcon,Cerebras2023GPT,jiang2023mistral7b,biderman2023pythiasuiteanalyzinglarge,workshop2023bloom176bparameteropenaccessmultilingual,openai2025gptoss120bgptoss20bmodel,wang2025testtimescalingreflectivegenerative,nvidia2025nvidianemotronnano2,abdin2025phi4reasoningtechnicalreport,ji2025amthinkingv1advancingfrontierreasoning,exaone-deep}. One of the largest interests in the realm of LLMs is on their ability to perform complex reasoning tasks. A breakthrough in the reasoning of LLMs was the introduction of chain-of-thought prompting \citep{Wei2022ChainOfThought,kojima2023largelanguagemodelszeroshot}, which allows models to generate intermediate reasoning steps before arriving at an answer. Instruction-tuned LLMs optimized to generate longer chains of thought have drastically increased performance in these tasks \citep{muennighoff2025s1simpletesttimescaling}.

Spending more computational resources at test time, in particular by generating multiple answers, leads to more reliable inference \citep{snell2025scaling,DBLP:journals/corr/abs-2407-21787}.
A simple yet effective strategy is the best-of-$N$ (BoN) approach, where we generate $N$ answers and select the best one based on some criteria.
\begin{figure}[b]
\centering
\includegraphics[width=\linewidth]{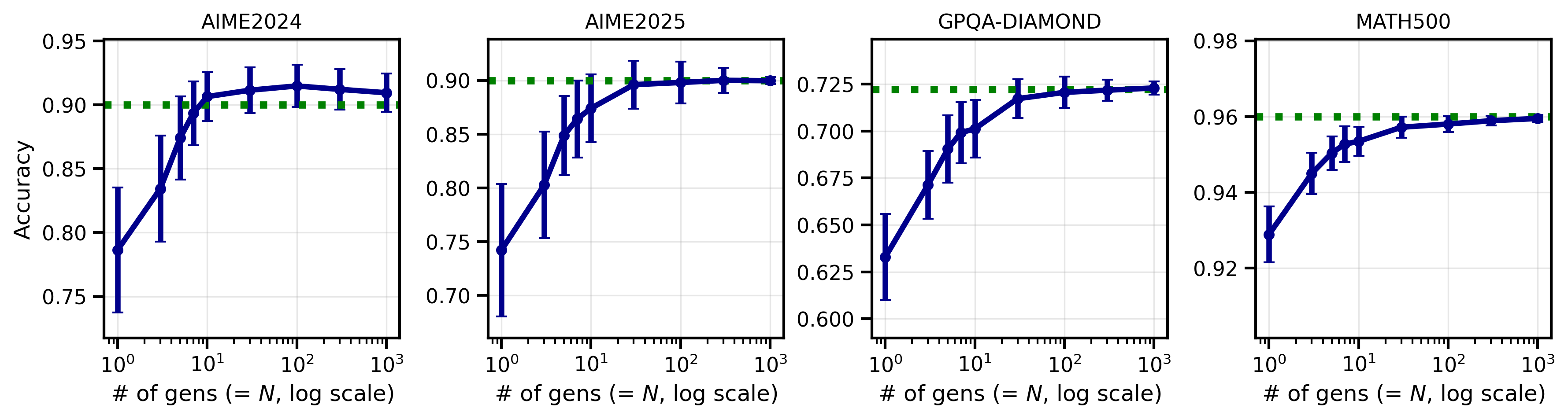}
\caption{Accuracy of Best-of-$N$ with majority voting as a function of $N$ (GPT-OSS-20B (Medium)) with four datasets \citep{jia2024aime,opencompass2025aime,rein2023gpqagraduatelevelgoogleproofqa,hendrycksmath2021}. Green line indicates the asymptotic accuracy of $N \rightarrow \infty$. For each problem, BoN benefits from increasing $N$, at least from $N = 10^1$ to $10^2$.}
\label{fig:nincrease}
\end{figure}%
There are several ways to implement the BoN strategy. One common approach is to use a reward model to select the best answer \citep{uesato2022,DBLP:conf/nips/RafailovSMMEF23,DBLP:conf/icml/WanFWM00024,dong2024rlhf,liu2024skywork,DBLP:journals/corr/abs-2401-06080,DBLP:conf/iclr/WuSLWY25} or asking LLM to choose a preferable answer \citep{DBLP:journals/corr/abs-2410-12832,son2024llmasajudgerewardmodel,guo2025rewardreasoningmodel,chen2025rmr1rewardmodelingreasoning}. 
Another approach is majority voting \citep{wang2023selfconsistency} in which the most frequent answer is selected. 

Despite its simplicity, majority voting has several advantages.
First, it does not require any additional modeling or further text generation.
Second, compared with other methods, majority voting is robust to reward hacking and benefits from additional generations with minimal risk, unlike reward-based models where increasing $N$ can lead to overfitting \citep{huang2025bestofnbestthemcoverage}.
\updated{Third, for reasoning tasks, majority vote is reported to be very effective \citep{10.5555/3737916.3739371}.}
Across datasets, with a few exceptions \citep{chen2024are}, majority voting performance generally increases with $N$ (Figure~\ref{fig:nincrease}).

\begin{figure}[t]
\centering
\hspace{-2em}
\includegraphics[width=0.8\linewidth]{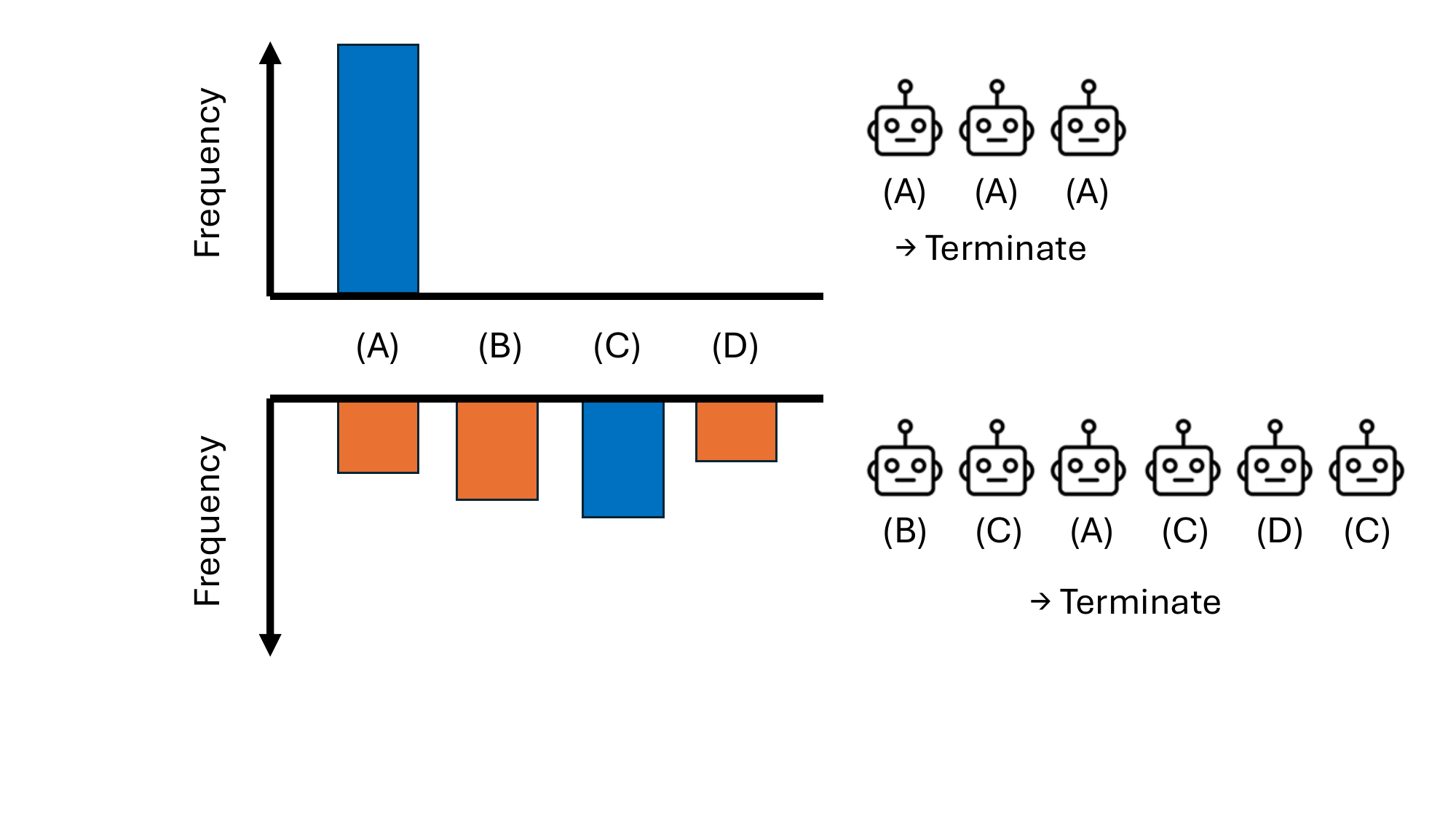}
\vspace{-2em}
\caption{An illustration of adaptive sampling (Algorithm \ref{alg:adaptive_sampling}). The histogram shows the distribution of answers generated by an LLM for a single problem. Each answer generation can be viewed as a sample from the underlying distribution. Blue indicates the most frequent answer, and orange indicates the others. In the top example, three generations agree, so sampling stops. In the bottom example, more samples are needed to determine the majority. This maximizes the accuracy under a given compute budget. Confidence in the majority is based on the Bayes factor.}
\label{fig:hist}
\end{figure}%

While we desire to achieve such Best-of-$N$ performance of $N \rightarrow \infty$, which we call \boinflower{}\ performance, it requires an infinite number of generations (samples), which is infeasible in real-world scenarios. Yet, for the same test-time budget, we can utilize the available budget more effectively. As shown in Figure \ref{fig:hist}, we can generate samples adaptively until we determine the majority with some confidence level. We introduce a principled method to determine when to stop generating answers and when to continue using Bayesian modeling (Section \ref{subsec_weightuse}).

Our scheme can be naturally extended to ensembles of multiple LLMs.
Importantly, ensemble majority voting can naturally benefit from complementarity. For example, in the AIME2025 dataset, the \boinflower{} performance of GPT-OSS-20B \citep{openai2025gptoss120bgptoss20bmodel} and Nemotron-Nano-9B-v2 \citep{nvidia2025nvidianemotronnano2} are 90.0\% and 73.0\%, respectively, but their ensemble achieves 93.3\%. A weak LLM can contribute to the ensemble if it has complementary strengths.

\updated{
A key theoretical contribution of this work is the formulation of optimal ensemble weighting as a tractable optimization problem. We show that finding the optimal weight vector that maximizes \boinflower{} accuracy can be reduced to a mixed-integer linear program (MILP) (Section \ref{sec_llmensemble}). This formulation is enabled by considering the asymptotic limit: while optimizing weights for finite $N$ requires enumerating an exponentially large number of answer combinations, the \boinflower{} framework yields a polytope structure that allows efficient optimization via standard MILP solvers. To our knowledge, this is the first work to provide a computationally tractable method for finding provably optimal ensemble weights in the context of LLM majority voting.
}

Finally, we evaluate the performance of the proposed method (Section \ref{sec_experiment}). Our experimental results include 11 instruction-tuned LLMs and four heavy-reasoning problem sets (AIME2024, AIME2025, GPQA-DIAMOND, MATH500), with at least 80 generations for each LLM–problem set combination. This represents a significantly larger scale of test-time computation than prior work. We demonstrate that the MILP-optimized ensemble weights consistently outperform both uniform weighting and single-model selection across all benchmarks. We release our generation results for subsequent research. 
Related work is discussed in Appendix \ref{sec_related}.

\section{\boinf{}\ in finite samples}\label{subsec_weightuse}

\begin{algorithm}
\caption{Approximated \boinf{}: Determining answer for single problem}
\label{alg:adaptive_sampling}
\begin{algorithmic}[1]
\REQUIRE Maximum samples $N_{\max}$, concentration parameter $\alpha$, Bayes factor threshold $B$.
\FOR{$n=1,2,\dots$}
    \IF{we use LLM Ensemble (Section \ref{sec_llmensemble})}
        \STATE Choose LLM with probability $\{w_i\}_{i \in \mathcal{K}}$.
    \ENDIF
    \STATE Ask the LLM for the answer of the problem to obtain answer.
    \IF{$n = N_{\max}$ or $\mathrm{BF}(n) \ge B$ }
        \STATE \textbf{break}
    \ENDIF
\ENDFOR
\RETURN The most frequent answer.
\end{algorithmic}
\end{algorithm}

While \boinf\ defines an idealized best-of-$N$ ensemble in the limit $N \to \infty$, its literal realization would require unbounded test-time compute. 
We now develop a finite-sample procedure that closely tracks this limit.
Our core idea is to adaptively samples (i.e., ask LLM to generate the answers) until we are sure the population majority vote with a desired confidence level. 
In other words, we aim to terminate the answer generation process as soon as sufficient statistical evidence has been obtained to support the conclusion that the currently most frequent response corresponds to the true majority, which allows different number of $N$ across problems.
A distinctive challenge of this problem lies in the fact that the support of the answer distribution generated by large language models (LLMs) is unknown. For instance, in one case an LLM may produce two candidate answers, such as 42 with probability 70\% and 105 with probability 30\%, whereas in another case it may yield four distinct outputs, such as 111 with probability 40\%, 1 with probability 25\%, 2 with probability 20\%, and 702 with probability 15\%. Given such uncertainty in the variation of generated responses, a particularly well-suited approach is to employ nonparametric Bayesian modeling.
In particular, we adopt a Dirichlet process $\mathrm{DP}(H, \alpha)$ prior over the answer space that captures the unknown distribution of answers. Here, $H$ is a base distribution\footnote{The base distribution can have a possibly infinite support, such as all possible integers. For some tasks, such as GPQA, the answer is given in a finite domain (e.g., A, B, C, D), and thus the base distribution is of a finite support. In such cases, Dirichlet process is exactly the same as the Dirichlet distribution. The advantage of the Dirichlet process is to unify the treatment for both finite and infinite answer spaces, as well as having some reguralization with a hyperparameter $\alpha$.} over the answer space, and $\alpha > 0$ is a concentration parameter that controls the likelihood of generating new answers. Intuitively speaking, $\alpha$ is the strength of the prior belief in the existence of new answers. Assume that, at round $n$, we observe $s(n)$ different answer $A_1,A_2,\dots,A_{s(n)}$ with corresponding counts $N_1 \ge N_2 \ge N_3 \dots \ge N_{s(n)}$. Then, the posterior distribution is 
\begin{equation}\label{ineq_posterior_dp}
\mathrm{DP}\biggl(\underbrace{\frac{\alpha}{\alpha + n} H}_{\text{base distribution}}
+ \underbrace{\frac{1}{\alpha + n} \sum_{j=1}^{s(n)} N_j \delta_{A_j}}_{\text{empirical distribution}},\,
\alpha + n  \biggr).
\end{equation}
The first argument of the posterior above states that the posterior is increasingly concentrated around the observed answers as more data is collected. 

We use the Bayes factor \citep{Jeffreys1935,good1967bayesian,KassRaftery1995,lindon2022anytimevalid} to measure the evidence of true majority.\footnote{The use of the Bayes factor for categorical data is not new. Unlike their case, our case starts from an unknown number of categories, which is handled by the Dirichlet process prior and via some approximation.
\updated{\cite{aggarwal-etal-2023-lets} applies Dirichlet distribution to majority voting in the context of LLM consistency, and approximated the posterior probability with Beta distribution on top-two majority answers. \cite{wang2025dynscalingefficientverifierfreeinference} applies frequentist confidence interval for adaptive stopping.}}

Formally, we define the hypotheses as follows:
\begin{align}
H_0 &: \text{The most frequent answer $A_1$ is not the true majority.} \\
H_1 &: \text{The most frequent answer $A_1$ is the true majority.} 
\end{align}
and define the Bayes factor (BF), which quantifies the strength of evidence in the data for $H_1$, as   
\begin{equation}\label{ineq_bayesfactor}
\mathrm{BF} :=
\frac{\mathbb{P}(\mathcal{D}(n)|H_1)}{\mathbb{P}(\mathcal{D}(n)|H_0)},
\end{equation}
where $\mathcal{D}(n)$ is the observed data so far. Here, $\mathbb{P}(\mathcal{D}(n)|H_1), \mathbb{P}(\mathcal{D}(n)|H_0)$ are the evidence (marginal likelihood) based on the observed data.
Then, the Bayes factor of \eqref{ineq_bayesfactor} can be computed as follows:
\begin{align}
\mathrm{BF}(n) 
&:= \frac{\mathbb{P}(\mathcal{D}(n)| H_1)}{\mathbb{P}(\mathcal{D}(n)| H_0)} 
= \frac{\mathbb{P}(H_1 | \mathcal{D}(n))}{\mathbb{P}(H_0 | \mathcal{D}(n))} \cdot \frac{\mathbb{P}(H_0)}{\mathbb{P}(H_1)} \text{\ \ \ (Bayes' theorem)}\\
&\approx 
    s(n) \frac{\mathbb{P}(H_1 | \mathcal{D}(n))}{\mathbb{P}(H_0 | \mathcal{D}(n))}
    \text{\ \ \ (approximating the prior ratio by uniform prior)}\\
&= 
    s(n) \frac{\mathbb{P}(H_1 | \mathcal{D}(n))}{1 - \mathbb{P}(H_1 | \mathcal{D}(n))}
    \text{\ \ \ ($H_0 \cup H_1$ is the entire space)}
\label{ineq_bayesfactor_stopping}
\end{align}
where $\mathbb{P}(H_1 | \mathcal{D}(n)), \mathbb{P}(H_0 | \mathcal{D}(n))$ are the corresponding posteriors.
Note that, in the second line, we approximated the DP prior with a uniform prior over the existing answers. 

When $n$ is sufficiently large compared with $\alpha$, $\mathbb{P}(H_1|\mathcal{D}(n)) $ of the DP posterior can approximated by a Dirichlet distribution as:
\begin{equation}\label{ineq_dirichlet_approx}
\mathbb{P}(H_1|\mathcal{D}(n)) \approx
\updated{\Pr[X_1 \ge \max_{i \neq 1} X_i, X \sim \mathrm{Dirichlet}(N_1, N_2, \ldots, N_{s(n)}, \alpha)],}
\end{equation}
by approximating the probability of $A_1$ appearing in the base distribution $H$ to be zero. The Dirichlet distribution is a conjugate distribution of the categorical distribution of $s(n)+1$ of answers, where the last dimension corresponds to the unobserved answers.
Here, the final component of weight $\alpha$ is added to account for the base distribution $H$.
While this quantity is not trivial to compute, it can be estimated using Monte Carlo methods by sampling from the Dirichlet distribution.

The following theorem states that, if we set $N_{\max}$ and $B$ sufficiently large, the performance of Algorithm \ref{alg:adaptive_sampling} converges to the \boinflower{} performance. 
The proof is given in Appendix \ref{sec:consistency}.
\begin{theorem}{\rm (Consistency)}\label{thm_consistency}
Assume that the LLM generates a finite number of answers $1,2,\dots,s$.
For ease of discussion, let $p_j$ be the probability of answer $j$ and assume that $p_1 > p_2 \ge p_3 \ge \ldots \ge p_s > 0$.
Namely, there are no ties for the most frequent answer, and each answer is generated with a non-zero probability.
Then, as $N_{\max}, B \rightarrow \infty$, the algorithm's performance converges to the \boinflower{}\ performance almost surely. Namely, the algorithm returns the true majority answer with probability $1$.
\end{theorem}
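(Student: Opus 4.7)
The plan is to couple all generations on a single probability space, apply the strong law of large numbers (SLLN) to the samples, use concentration of the Dirichlet posterior to drive the posterior probability of $H_1$ to $1$, and finally argue that the two stopping thresholds eventually trigger on the correct majority.

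Let $Y_1,Y_2,\dots$ be i.i.d.\ draws from the categorical distribution $(p_1,\dots,p_s)$ and set $N_j(n)=\#\{t\le n:Y_t=j\}$. By SLLN, $N_j(n)/n\to p_j$ almost surely for each $j$; since every $p_j>0$, Borel--Cantelli gives $s(n)=s$ for all $n$ large enough a.s. Because $p_1>p_j$ for every $j\ne 1$, there is an a.s.-finite random time $n_0$ such that $N_1(n)>N_j(n)$ for all $n\ge n_0$ and all $j\ne 1$; in particular, the stored most-frequent answer equals the true majority $A_1=1$ for every $n\ge n_0$.

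Next I show $\mathrm{BF}(n)\to\infty$ a.s. Let
\[
q(n):=\mathbb{P}\bigl[X_1\ge \max_{j\ne 1}X_j \,\big|\, \mathcal{D}(n)\bigr],\qquad X\sim\mathrm{Dirichlet}(N_1+1,\dots,N_s+1,\alpha).
\]
Using the gamma representation $X_j=G_j/\sum_k G_k$ with independent $G_j\sim\mathrm{Gamma}(N_j+1,1)$ and $G_{s+1}\sim\mathrm{Gamma}(\alpha,1)$, one has $\mathbb{E}[X_j\mid \mathcal{D}(n)]=(N_j+1)/(n+s+\alpha)\to p_j$ for $j\le s$ and $\to 0$ for $j=s+1$, while $\mathrm{Var}(X_j\mid\mathcal{D}(n))=O(1/n)$. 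A Chebyshev bound on each of the $s-1$ margins $X_1-X_j$ (whose mean-gap tends to $p_1-p_j>0$) followed by a finite union bound gives $q(n)\to 1$ a.s., and therefore
\[
\mathrm{BF}(n)\;=\;s(n)\,\frac{q(n)}{1-q(n)}\;\longrightarrow\;\infty\qquad\text{a.s.}
\]
Define $\tau_B:=\inf\{n:\mathrm{BF}(n)\ge B\}$. The previous display yields $\tau_B<\infty$ a.s.\ for every finite $B$, and since $\mathrm{BF}(n)$ is a deterministic, bounded function of $(N_1(n),\dots,N_s(n))$ for each fixed $n$, $\tau_B$ is non-decreasing in $B$ with $\tau_B\to\infty$ a.s.\ as $B\to\infty$. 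Hence on the probability-one event above, for any sequence $(N_{\max}^{(k)},B^{(k)})\to(\infty,\infty)$ we eventually have $n_0\le\tau_{B^{(k)}}\le N_{\max}^{(k)}$, so the algorithm terminates at $n=\tau_{B^{(k)}}$ and returns $A_1=1$, matching the \boinflower{} answer.

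The step I expect to be the main obstacle is the Dirichlet-concentration claim $q(n)\to 1$: this is a statement about a posterior probability whose defining parameters $(N_1+1,\dots,N_s+1,\alpha)$ are themselves random and evolving, so one must upgrade the per-$n$ Chebyshev bound into an a.s.\ statement along the realized sample path. The gamma-ratio representation makes this clean by reducing the question to the SLLN for sums of exponentials, but some care is required to handle the $\alpha$-weighted $(s+1)$-st coordinate (whose mass vanishes but is positive) and the fact that $s(n)$ is itself random before becoming eventually constant.
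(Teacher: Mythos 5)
Your proof is correct and follows essentially the same route as the paper's: establish that beyond some finite time the empirical mode coincides with the true majority (you via the SLLN, the paper via Hoeffding plus a union bound at confidence level $1-\delta$), and then note that since $\mathrm{BF}(n)$ can take only finitely many values over the first $n_0$ samples, choosing $B$ large enough rules out premature stopping. Your additional argument that $\mathrm{BF}(n)\to\infty$ a.s.\ via Dirichlet concentration is sound but not needed for the stated conclusion (if $\tau_B$ exceeds $N_{\max}$ the algorithm still stops at $N_{\max}\ge n_0$ and returns the correct answer), which is why the paper omits it.
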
%

\section{LLM Ensemble}\label{sec_llmensemble}

Algorithm \ref{alg:adaptive_sampling} is naturally extended to use more than one LLM.
Let $i \in \mathcal{K}$ index the LLMs, and let $w = (w_1,w_2,\dots,w_K)$ be the weight vector, where $w_i \ge 0$ and $\sum_{i \in \mathcal{K}} w_i = 1$. Algorithm \ref{alg:adaptive_sampling} with an LLM ensemble proceeds as follows: for each generation, we first select an LLM $i$ with probability $w_i$, and then ask the selected LLM for the answer.

Let us consider the optimal weighting scheme for the BoN inference.
Let $q \in \mathcal{Q}$ be the problem. Each problem is associated with answer domain $\mathcal{A}_q$. 
\begin{example}[AIME2025]
For AIME2025, $\mathcal{A}_q \subseteq \{0,1,2,\dots,999,U\}$, where $U$ denotes either an out-of-range integer, fractional number, or a failure to emit a final answer; $U$ is always incorrect.
\end{example}%
For each problem, let $g_q \in \mathcal{A}_q$ be the gold answer. 
Each LLM-problem pair $(i,q)$ is the probability distribution $\mathcal{P}_{iq}$ over $\mathcal{A}_q$. 

For each problem $q$, we obtain multiple generations from the LLMs and take a majority vote to produce $a_q$. The total number of correct answers is
\begin{equation}\label{ineq_obj}
f(\{a_q\}) := \sum_{q \in \mathcal{Q}} \mathbf{1}[ a_q = g_q ].
\end{equation}
We aim to maximize it in expectation: $\mathbb{E}[f(\{a_q\})]$.
Here, the expectation is taken over the randomness in the generation of LLMs.

\subsection{Best-of-one}

Before going into \boinf{}, we first consider the best-of-one (Bo1) policy, which first selects an LLM with probability proportional to $w$, and then uses the LLM to generate a single answer. An immediate observation is that the optimal weight is to put all the weight on the best LLM. 
\begin{lemma}{\rm (Optimal Bo1)}
The accuracy of Eq.~\eqref{ineq_obj} is maximized when we choose $w_{i^*} = 1$ and $w_j=0$ for all $j \neq i^*$, where $w_{i^*}$ is the weight for the best LLM $i^*$. 
Namely, let $p_{i}^q = (p_{i,1}^q, p_{i,2}^q, \ldots, p_{i,|\mathcal{A}_q|}^q) \in \Delta_{\mathcal{A}_q}$ be the probability distribution on $\mathcal{A}_q$ of the answers that LLM $i$ generates.
Then, $p_{i,g_q}^q$ be the probability that LLM $i$ generates the gold answer $g_q$ for problem $q$.
The average accuracy of LLM $i$ is
$
\sum_q p_{i,g_q}^q,
$
and the best LLM, which maximizes this quantity, is
$
i^* = \argmax_{i \in \mathcal{K}} \sum_q p_{i,g_q}^q.
$
\end{lemma}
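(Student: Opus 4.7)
The plan is to write the expected accuracy as an explicit linear function of the weight vector $w$ and then invoke the fact that a linear functional on the probability simplex attains its maximum at a vertex.

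First I would unpack the Bo1 sampling process. For a fixed problem $q$, the algorithm picks LLM $i$ with probability $w_i$ and then draws a single answer from $\mathcal{P}_{iq}$, so the chance that the returned answer equals $g_q$ is $\sum_{i \in \mathcal{K}} w_i\, p_{i,g_q}^q$. Summing over $q$ and using linearity of expectation gives
\begin{equation*}
\mathbb{E}[f(\{a_q\})] \;=\; \sum_{q \in \mathcal{Q}} \sum_{i \in \mathcal{K}} w_i\, p_{i,g_q}^q \;=\; \sum_{i \in \mathcal{K}} w_i \Bigl( \sum_{q \in \mathcal{Q}} p_{i,g_q}^q \Bigr).
\end{equation*}
Denote $c_i := \sum_{q} p_{i,g_q}^q$, so the objective reduces to $\sum_i w_i c_i$, a linear functional in $w$ over the simplex $\Delta_{\mathcal{K}} = \{w : w_i \ge 0,\ \sum_i w_i = 1\}$.

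Next I would apply the standard fact that a linear function on a compact convex polytope is maximized at an extreme point, and the extreme points of $\Delta_{\mathcal{K}}$ are exactly the standard basis vectors $e_i$. Evaluating the objective at each vertex gives $c_i$, so the maximum value is $\max_i c_i$, achieved by taking $w = e_{i^*}$ with $i^* \in \argmax_i c_i = \argmax_{i \in \mathcal{K}} \sum_q p_{i,g_q}^q$. This exactly matches the claimed form of the optimal Bo1 weight.

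There is essentially no obstacle: the only subtlety is that the argmax need not be unique (ties between equally good LLMs), in which case any vertex attaining the maximum is optimal and convex combinations thereof are equally optimal, which does not contradict the statement. I would add one sentence to flag this non-uniqueness. The proof should fit in a few lines once $c_i$ is defined.
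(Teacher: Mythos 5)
Your proposal is correct and is essentially the paper's own argument: the paper likewise writes the expected accuracy as the convex combination $\sum_i w_i \bigl(\sum_q p_{i,g_q}^q\bigr)$ and bounds it by $\max_i \sum_q p_{i,g_q}^q$. Your version merely spells out the linearity-of-expectation step and the vertex-of-the-simplex reasoning in more detail, plus the (harmless) tie-breaking remark.
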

\begin{proof}
It is easy to see that 
\[
f(\{a_q\}) := \sum_i w_i \left(
\sum_{q \in \mathcal{Q}} p_{i,g_q}^q
\right) \le \max_i \left(\sum_{q \in \mathcal{Q}} p_{i,g_q}^q\right).
\]
\end{proof}
For Bo1, the optimal weight is to put all the weight on the best LLM. However, this is no longer the case for BoN with $N>1$. 
Put differently, under multi-generation majority voting, appropriately mixing non-optimal LLMs can be beneficial.

\subsection{Best-of-$\infty$}\label{sec_weightopt}

As in the Bo1 setting, our design choice is to take a weighted majority vote with $w = (w_1,\dots,w_K)$.
When we consider the large-sample limit, the answer for problem $n$ is deterministic:\footnote{We use the term deterministic to describe a non-random quantity.}
\[
a_q = \argmax_j
\left\{
\sum_{i \in \mathcal{K}} w_i p_{i,j}
\right\},
\]
\updated{with ties broken arbitrarily.}
Consequently $f({a_q})$ is also deterministic:
\[
f(\{a_q\}) = \sum_{q \in \mathcal{Q}} \mathbf{1}[ a_q = g_q ].
\]
Here, for ease of discussion, we omit the consideration for a tie.
Henceforth, since our design choice is on the weight vector $w$, we denote it $f(w)$ and use $f(\{a_n\})$ and $f(w)$ interchangeably.

Our central question is how to choose a weight vector $w$ that maximizes the accuracy $f(w)$.
The following lemma implies the hardness of optimizing $f(w)$.
\begin{lemma}{\rm (Non-concavity)}
$f(w)$ is a non-concave function on the simplex space of $w$.
\end{lemma}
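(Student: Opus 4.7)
The plan is to disprove concavity by an explicit counterexample in the smallest nontrivial setting. Consider $K = 2$ LLMs and a single problem $q$ with answer domain $\mathcal{A}_q = \{g, x, y\}$, where $g = g_q$ is the gold answer, and take the LLM distributions
\[
p_1^q = (0.4,\, 0.6,\, 0), \qquad p_2^q = (0.4,\, 0,\, 0.6).
\]
Evaluating the mixed distribution $w_1 p_1^q + w_2 p_2^q$ at three weight vectors on the $1$-simplex yields $(0.4, 0.6, 0)$, $(0.4, 0.3, 0.3)$, and $(0.4, 0.45, 0.15)$, whose respective $\argmax$es are $x$, $g$, $x$. Hence
\[
f\bigl((1,0)\bigr) = 0, \qquad f\bigl((1/2, 1/2)\bigr) = 1, \qquad f\bigl((3/4, 1/4)\bigr) = 0.
\]

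Since $(3/4, 1/4) = \tfrac12 (1,0) + \tfrac12 (1/2, 1/2)$, concavity would force
\[
f\bigl((3/4, 1/4)\bigr) \;\ge\; \tfrac12 f\bigl((1,0)\bigr) + \tfrac12 f\bigl((1/2, 1/2)\bigr) = \tfrac12,
\]
contradicting the computed value $0$. This settles the case $(K, |\mathcal{Q}|) = (2, 1)$. For general $K \ge 2$ and $|\mathcal{Q}| \ge 1$, the construction transfers: additional LLMs may be assigned zero weight throughout, and any additional problems may be chosen to contribute a fixed constant to $f$ (e.g., trivial problems on which every LLM always returns the gold answer), preserving the strict violation.

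The only subtlety is that the lemma's statement elides tie-breaking in $\argmax$; the three evaluations above all have strict margins ($0.6 > 0.4$, $0.4 > 0.3$, and $0.45 > 0.4$), so no tie-breaking convention is invoked. Beyond this bookkeeping, the argument is one line of arithmetic per evaluation, and I do not foresee any further obstacle. A more conceptual remark worth adding is that $f$ is integer-valued on the simplex and piecewise constant across the hyperplane arrangement defined by $\sum_i w_i (p_{i,j}^q - p_{i,j'}^q) = 0$; any such non-constant piecewise constant function must fail concavity somewhere, but the explicit example above suffices.
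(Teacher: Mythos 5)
Your proof is correct and follows essentially the same strategy as the paper's: exhibit a tiny two-LLM, one-problem instance where $f$ violates the concavity inequality on a line segment in the simplex (the paper uses one always-correct and one always-wrong LLM, so that $f$ drops to $0$ below the chord joining $f((1,0))=1$ and $f((0,1))=0$). Your particular example, with the pattern $f=0,1,0$ along a segment, is equally valid and has the incidental virtue of also illustrating complementarity --- the mixture strictly beats both individual models --- but the underlying argument is the same explicit-counterexample approach.
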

\begin{proof} %
Consider a dataset of just one question with two LLMs, where one LLM correctly answer the question and the other LLM fails. Namely, $f((1,0)) = 1$ and $f((0,1)) = 0$. 
Then the weighted combination is $0$ at somewhere in between, which implies it is non-concave.
\end{proof}%
While the proof above is an extremely simple case of two LLMs with a single problem, we will demonstrate the non-concavity in more complex cases.

\begin{figure}[h]
\vspace{-1em}
\centering
\includegraphics[width=0.5\textwidth]{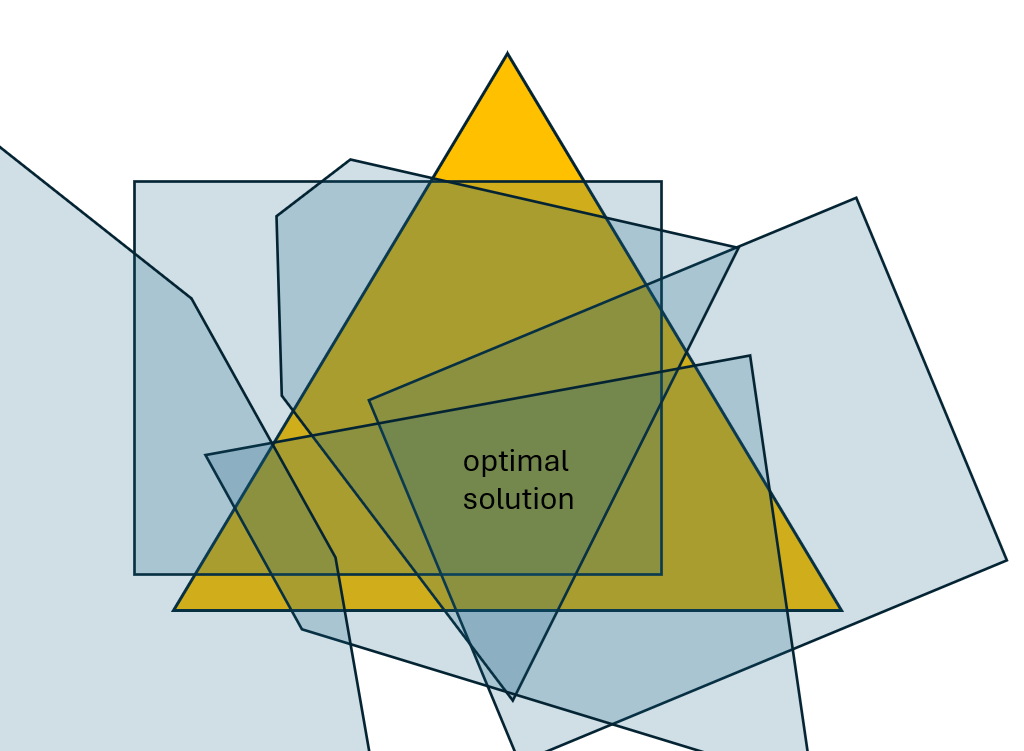}
\caption{Visualization of the non-concave objective function $f(w)$ over the weight simplex $w$. The yellow simplex corresponds to $w$ in the simplex of the weights of the three LLMs. The gray region of the five polytopes (= five problems) are the region where the weighted majority of the corresponding weight correctly answer to the problem. The optimal solution is the intersection of four polytopes at the center, which corresponds to the case where four out of five problems are correctly answered.}
\label{fig:simplex}
\end{figure}

Although non-concavity implies sub-optimality of gradient-based methods, a combinatorial optimization approach can be adopted for instances of typical scale.
The crux in optimizing $f(w)$ is that the summand in \eqref{ineq_obj} takes value one within a polytope.
\begin{lemma}{\rm (Polytope lemma)}\label{lem_polytope}
Let $\{p_{ij}^q\}_{i \in [K], j\in \mathcal{A}_q}$ be the arbitrary distributions of the answers. Then, the following set, which implies that answer $j$ is the most frequent answer, is a polytope:
\begin{equation}\label{ineq_nbest}
\left\{
w \in \Delta_K: \sum_i w_i p_{ij}^q > \max_{j' \ne j} \sum_i w_i p_{ij'}^q
\right\}.
\end{equation}
\end{lemma}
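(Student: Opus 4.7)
The plan is to unfold the $\max$ into a finite family of pairwise linear inequalities and then observe that the resulting set is the intersection of the simplex with finitely many half-spaces, which by definition is a polytope.

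First I would rewrite the defining condition. The inequality $\sum_i w_i p_{ij}^q > \max_{j' \ne j} \sum_i w_i p_{ij'}^q$ holds if and only if, for every $j' \in \mathcal{A}_q \setminus \{j\}$, one has $\sum_i w_i (p_{ij}^q - p_{ij'}^q) > 0$. This step trades the nonlinear $\max$ operation for a finite conjunction of linear inequalities in $w$; since $\mathcal{A}_q$ is finite, only finitely many such inequalities appear.

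Next I would identify each of these inequalities with a half-space in $\mathbb{R}^K$. Defining the coefficient vector $c_{j'} \in \mathbb{R}^K$ by $(c_{j'})_i = p_{ij}^q - p_{ij'}^q$, the condition becomes $\sum_i (c_{j'})_i w_i > 0$. The simplex $\Delta_K = \{w \in \mathbb{R}^K : w_i \ge 0,\ \sum_i w_i = 1\}$ is itself a polytope given by $K$ nonnegativity constraints together with one affine hyperplane. Intersecting it with the $|\mathcal{A}_q|-1$ half-spaces indexed by $j'$ yields the set in \eqref{ineq_nbest}. A finite intersection of half-spaces with an affine subspace is, by definition, a polytope.

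The only subtlety, which I view as bookkeeping rather than a genuine obstacle, is that \eqref{ineq_nbest} is written with a strict inequality, so the set is technically the relative interior of a polytope (its closure, obtained by replacing $>$ with $\ge$, is a polytope in the standard closed-set sense). This distinction does not affect the downstream MILP formulation, since ties can be handled either by a small margin $\varepsilon > 0$ in the linear constraints or by an arbitrary deterministic tie-breaking convention.
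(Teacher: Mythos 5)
Your proposal is correct and follows essentially the same route as the paper's proof: unfold the $\max$ into the finite family of pairwise half-space constraints indexed by $j' \ne j$, then intersect with the simplex $\Delta_K$ to obtain boundedness and hence a polytope. Your added remark that the strict inequalities make the set the relative interior of a polytope rather than a closed polytope is a point the paper glosses over, and it is handled downstream exactly as you suggest, via the margin $\xi$ in the max-margin MILP.
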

\begin{proof}
The region of \eqref{ineq_nbest} is an intersection of the following half-spaces:
\[
w: \sum_i w_i p_{ij}^q > \sum_i w_i p_{ij'}^q
\]
for all $j' \ne j$, which is a polyhedron.
Since the desired space is an intersection of a polyhedron and a simplex $\Delta_{\mathcal{A}_q}$, it is finite. Therefore, it is a polytope.
\end{proof}
Lemma \ref{lem_polytope} states that the maximization on the number of correct answers is equivalent to the maximization on the number of polytopes that contain $w$ (Figure \ref{fig:simplex}). By introducing auxiliary variable $y_q$ that indicates the correctness for each answer, this can be formulated as a mixed-integer linear programming (MILP) problem.
\begin{lemma}{\rm (MILP formulation)}
The \eqref{ineq_obj} is equivalent to the following MILP problem:
\begin{align} %
\max_{w \in \Delta^K, y \in \{0,1\}^N} & \sum_q y_q\\ %
\mathrm{s.t.} & \quad w_i \ge 0\ \forall_i\\
& \sum_i w_i = 1\\
& A_q w \ge - m (1 - y_q)\ \forall q\label{ineq_constraint}
\end{align}
where $A_q$ is a matrix of size $\mathbb{R}^{|\mathcal{A}_q| \times K}$ such that its $j,i$ entry is
$p_{i, g_q}^q - p_{i, j}^q$, and the $j$-th row corresponds to the fact that the total weight of the gold answer $g_q$ is larger than that of a wrong answer $j$.
The vector $m > 0$ is chosen sufficiently large, so that $A_q w \ge m$ is never satisfied when $A_q w$ has a negative component.
\end{lemma}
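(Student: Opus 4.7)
The plan is to show that the MILP's optimum equals $\max_w f(w)$ by checking that, for every feasible $w$, the optimal choice of $y$ given $w$ satisfies $y_q = \mathbf{1}[a_q = g_q]$ (ignoring ties, per the standing assumption of the section).

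First I would unpack the constraint matrix. By construction, the $j$-th row of $A_q w$ equals
\[
(A_q w)_j = \sum_{i \in \mathcal{K}} w_i \bigl(p_{i,g_q}^q - p_{i,j}^q\bigr),
\]
which is precisely the margin by which the gold answer $g_q$ beats answer $j$ under the mixture $w$. Hence $A_q w \ge \mathbf{0}$ componentwise iff $\sum_i w_i p_{i,g_q}^q \ge \sum_i w_i p_{i,j}^q$ for every $j \in \mathcal{A}_q$, i.e., iff $g_q$ is a (weak) weighted-majority answer at $w$. This is exactly the condition picked out by Lemma \ref{lem_polytope} applied with $j = g_q$.

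The next step is the standard big-$M$ argument. Since $w \in \Delta^K$ and $p_{ij}^q \in [0,1]$, every entry of $A_q w$ lies in $[-1,1]$, so any $m \ge 1$ makes the constraint $A_q w \ge -m(1-y_q)\mathbf{1}$ vacuous whenever $y_q = 0$, while it collapses to $A_q w \ge \mathbf{0}$ whenever $y_q = 1$. Because the objective $\sum_q y_q$ is monotone in each $y_q$ and the constraints on different $q$ are decoupled (given $w$), at the MILP optimum one sets $y_q = 1$ precisely when $A_q w \ge \mathbf{0}$, that is, exactly when $g_q$ is the weighted-majority answer at $w$. Therefore the inner maximum over $y$ at fixed $w$ equals $f(w)$, and maximizing over $w \in \Delta^K$ establishes the equivalence of the two optimization problems.

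The main obstacle is really just the bookkeeping around ties and the choice of $m$. The MILP constraint $A_q w \ge \mathbf{0}$ is a weak inequality and so credits $w$ when $g_q$ merely ties for the maximum, whereas $a_q = \argmax_j \sum_i w_i p_{ij}^q$ in the definition of $f(w)$ is implicitly strict; the lemma already disclaims this ("for ease of discussion, we omit the consideration for a tie"), and since ties occur only on the measure-zero union of finitely many hyperplanes in $\Delta^K$, the two objectives agree almost everywhere on the simplex. A strict-inequality variant of the lemma would simply replace $\mathbf{0}$ on the right-hand side by a small positive slack $\varepsilon \mathbf{1}$. The remaining detail of picking $m$ is resolved by the $[0,1]$ bound on the probabilities and the simplex bound on $w$, for which $m = 1$ suffices.
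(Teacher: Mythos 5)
Your proof is correct, and it supplies essentially the argument the paper leaves implicit: the lemma is stated without a proof, with the surrounding text only gesturing at Lemma \ref{lem_polytope} plus the introduction of indicator variables $y_q$, and your big-$M$ reasoning (constraint vacuous when $y_q=0$ since $m\ge 1$ bounds the entries of $A_q w$, collapsing to $A_q w \ge \mathbf{0}$ when $y_q=1$, with the inner maximum over $y$ decoupling across $q$) is exactly the standard completion of that sketch. Your handling of the weak-versus-strict inequality at ties is also consistent with the paper, which disclaims ties in the definition of $f(w)$ and addresses the issue operationally via the max-margin slack $\xi$ in Section \ref{sec_weightopt}, matching your proposed $\varepsilon$-slack variant.
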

The size of the problem instance depends on the number of LLMs $K$, the number of problems $N$, and the size of the possible set of answers $\mathcal{A}_q$. 
General MILP solving is NP-hard; in practice, however, open-source solvers scale smoothly to $K \approx 10^{1}$ LLMs and $N \approx 10^{3}$ problems, where typical size of $\mathcal{A}_q$ is $\approx 10^1$.

\paragraph{Max margin solutions}

As we illustrated in Figure \ref{fig:simplex}, the objective function $f(w)$ has continuous region of optimal solutions. While any interior point on these position is optimal in \boinflower, its finite-$N$ performance can vary. In this paper, we adopt a ``max margin'' solution, that is at the most interior of the solution. 
Namely, we introduce a margin $\xi > 0$ and replaces $A_q w$ in \eqref{ineq_constraint} with $A_q w - \xi$. We choose the supremum of the margin $\xi$ such that the objective value $\sum_q y_q$ does not decrease, and adopts the solution on such margin. 
The optimization of margin can be done a binary search on the space of $\xi \in [0, m]$ where $m$ is a sufficiently large constant. This is a binary search problem of a monotone objective, which is practically feasible.

\begin{table}
\centering 
\caption{Statistics of the large-scale generation dataset that we used in our experiments. Each file corresponds to a single answer. We release it at \url{https://figshare.com/s/ea10a6bd76bcf41e30bd}}
\begin{tabular}{lrrr}
\hline
LLM & \# of files & total generated tokens & total file size (MB) \\
\hline
AM-Thinking-v1 & 4,800 & 79,438,111 & 185.95 \\
Datarus-R1-14B-preview & 4,800 & 49,968,613 & 127.03 \\
EXAONE-Deep-32B & 60,640 & 478,575,594 & 1,372.35 \\
GPT-OSS-20B & 68,605 & 244,985,253 & 98.59\tablefootnote{We do not use chain-of-thought (CoT) in our experiments and thus the file size is small; however, we also include an updated dataset that contains CoT.} \\
LIMO-v2 & 6,095 & 77,460,567 & 219.45 \\
MetaStone-S1-32B & 60,757 & 806,737,009 & 2,458.48 \\
NVIDIA-Nemotron-Nano-9B-v2 & 60,640 & 295,466,626 & 897.82 \\
Phi-4-reasoning & 168,138 & 558,980,037 & 1,841.06 \\
Qwen3-4B & 20,640 & 547,170,887 & 1,704.28 \\
Qwen3-14B & 44,800 & 666,466,780 & 1,822.13 \\
Qwen3-30B-A3B-Thinking-2507 & 60,640 & 436,865,220 & 1,234.28 \\
\hline
\end{tabular}
\label{tbl:dataset_scale}
\end{table}

\section{Experiments}\label{sec_experiment}

This section reports our experimental results.
We considered heavy-reasoning tasks on open-weight LLMs that we can test on our local environment. 
We set Algorithm \ref{alg:adaptive_sampling}'s hyperparameter $\alpha = 0.3$ for all the experiments.
To solve MILPs, we use highspy, an open-source Python interface to the HiGHS optimization suite \citep{Huangfu2018}, which provides state-of-the-art solvers for large-scale LP, MIP, and MILP. We adopt the max-margin solution described in Section \ref{sec_weightopt}. Unless specified otherwise, all results are estimated from 100 independent runs. 
\updated{We use the Dirichlet posterior with each count $N_i(t)+1$ rather than $N_i(t)$ to have faster stopping.} %
The Bayes factor is calculated with 1,000 Monte Carlo samples from the posterior.
Due to page limits, we show only several experimental results in the main text. More results are available in Appendix \ref{sec_additional_experiments}.

\subsection{Tested open-weight LLMs and datasets}

We evaluate open-weight LLMs ($\le$ 32B parameters) across four reasoning benchmarks. We use the following problem sets: AIME2024 \citep{jia2024aime}, AIME2025 \citep{opencompass2025aime}, GPQA-DIAMOND (Graduate-Level Google-Proof Q\&A Benchmark; \citealt{rein2023gpqagraduatelevelgoogleproofqa}), and MATH500 \citep{hendrycksmath2021}. Details of the LLMs and datasets are provided in Appendix \ref{sec_details_llm_dataset}.
These datasets are challenging mathematical and scientific reasoning tasks. 
We did not test GSM8K \citep{cobbe2021gsm8k} as it is too easy for the LLMs we tested.

\paragraph{Large-scale generation dataset} 
We generate a set of candidate answers by querying the LLM with the problem statement. For each pair of (LLM, problem), we generate at least 80 answers---an order of magnitude greater than the typical 8 generations reported in most LLM technical reports. We believe the difficulty of the problems as well as the scale of generated tokens are significantly larger than existing work on test-time computing.\footnote{Also note that, for adaptive sampling scheme, around 80 samples are usually sufficient to achieve accuracy fairly close to the \boinflower{} performance.}
Table \ref{tbl:dataset_scale} shows the statistics of the datasets used in our experiments. 
Base performance (Bo1, \boinflower) of these LLMs are shown in Appendix \ref{sec_bo1_boinf_per_model}. 
Every sample of answer in our subsequent experiments is drawn from this dataset. \boinf{} performance is also estimated from these samples. We remove the unparseable answers, which benefits some of the LLMs with lower performance.

\subsection{Experimental results}

\paragraph{\textsetone{}}

\begin{figure}[h]
\centering
\includegraphics[width=0.9\textwidth]{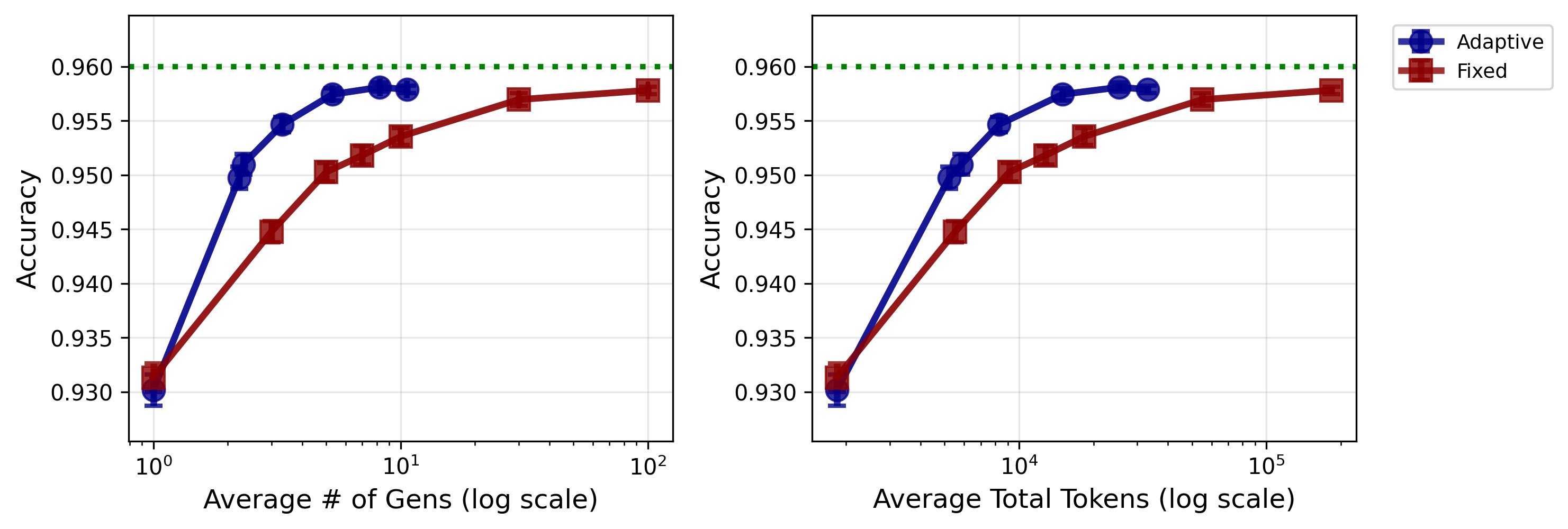}
\caption{Cost-analysis of our proposed method and fixed BoN. GPT-OSS-20B on MATH500. ``Adaptive'' Algorithm \ref{alg:adaptive_sampling} with average sample size of $\bar{N} = 3$ achieves the same accuracy as ``fixed'' sample of $N=10$, and the algorithm with average sample size $\bar{N} \approx 10$ achieves the same accuracy as fixed $N=100$. Thus, the adaptive sampling in this plot reduced the computation times by 2x-5x order. Both approach the \boinflower{} performance (green dashed line).}
\label{fig:adaptive_cost_gpt}
\end{figure}%
First, we investigate the impact of adaptive sampling scheme of Algorithm \ref{alg:adaptive_sampling} on the performance of majority voting. We set $N_{\max} = 100$ and tested varying Bayes factor $B= \{2, 3, 5, 7, 10, 30, 100, 300, \dots\}$. 
Figure \ref{fig:adaptive_cost_gpt} (left) compares the performance of Algorithm \ref{alg:adaptive_sampling} with fixed budget of samples (BoN), where $x$-axis is the number of average samples per problem (log-scale), and $y$-axis is the accuracy. The figure clearly shows that the blue curve (Algorithm \ref{alg:adaptive_sampling}) achieves the same accuracy as the red curve (fixed BoN) with substantially fewer samples. Figure \ref{fig:adaptive_cost_gpt} (right) shows the average total number of tokens as a function of accuracy. The adaptive method again demonstrates a significant reduction in token usage to achieve the same accuracy level compared to the fixed method, although the gap is smaller than that of the sample count. This is because the adaptive method tends to stop sampling early for easier problems, which often require fewer tokens per generation.  

\paragraph{\textsettwo{}}

\begin{figure}[h]
\centering
\includegraphics[width=\linewidth]{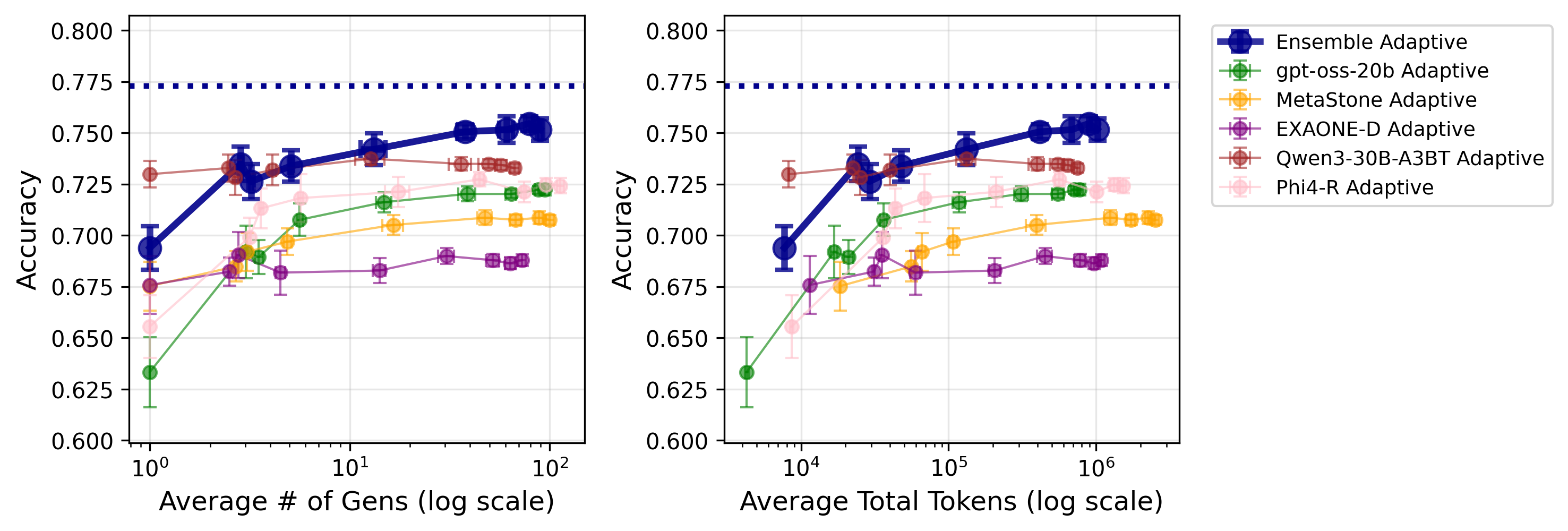}
\caption{Performance comparison of the LLM ensemble of EXAONE-Deep-32B, MetaStone-S1-32B, Phi-4-reasoning, Qwen3-30B-A3B-Thinking, and GPT-OSS-20B on GPQA-Diamond. The weight is optimized to $w = (0.0176, 0.0346, 0.2690, 0.4145, 0.2644)$.  The LLM ensemble outperforms any single LLM with $N \ge 5$ and approaches the blue dashed line of \boinflower{} performance.}
\label{fig:adaptive_cost_ensemble}
\end{figure}%
Second, we investigate the advantage of LLM ensemble over single LLM. We compare the performance of the single LLM with the optimal mixture of LLMs. The results in Figure \ref{fig:adaptive_cost_ensemble} show that the ensemble method achieves higher accuracy than any single LLM, demonstrating the effectiveness of combining multiple models.

\paragraph{\textsetthree{}}

\begin{figure}[h]
\centering
\includegraphics[width=0.7\textwidth]{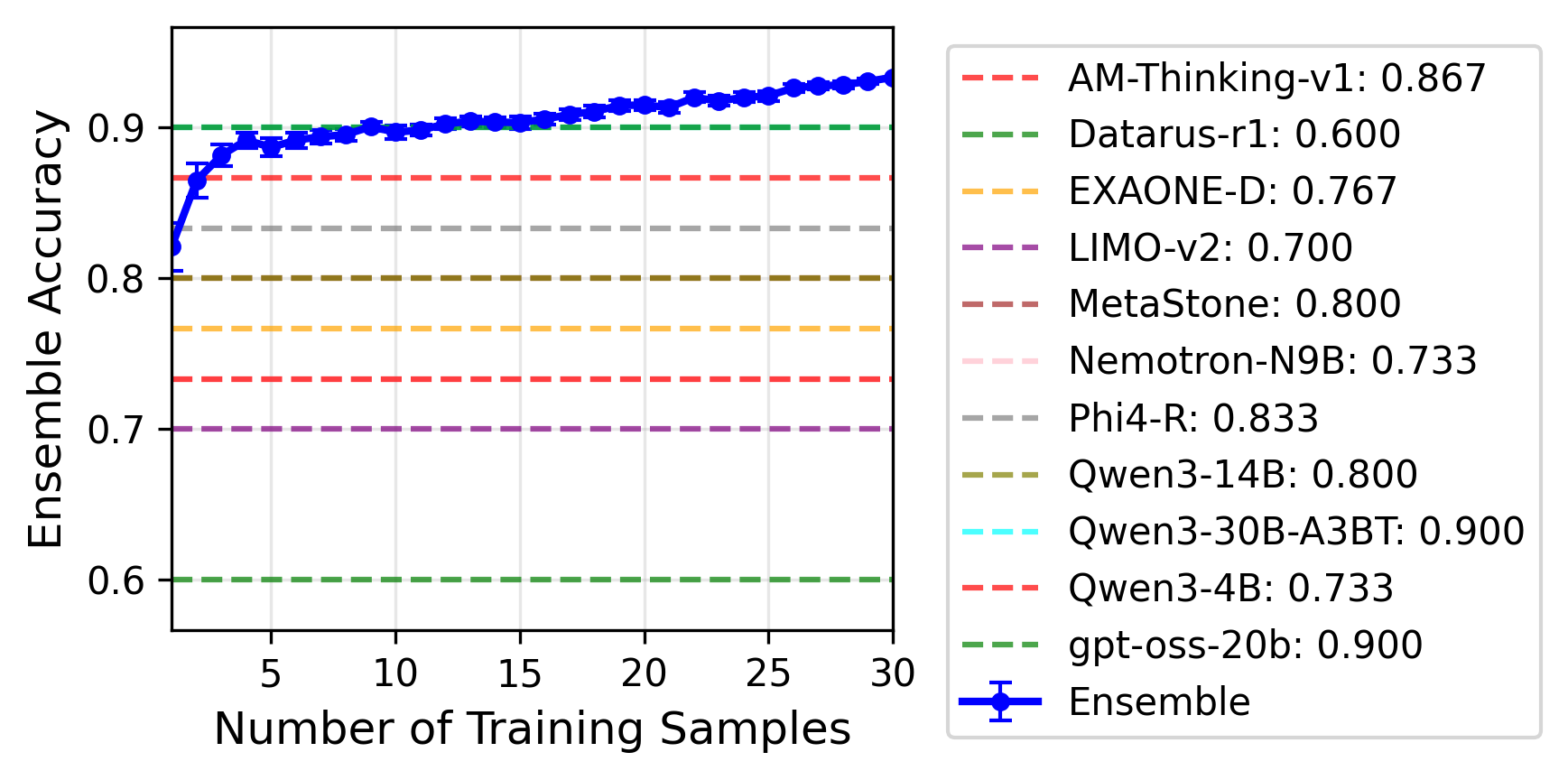}
\caption{The number of samples to determine the weight (x-axis) as a performance of \boinflower\ (y-axis) on AIME2025. The $x$-axis indicates the number of problems used to learn the weight and the $y$-axis indicates the \boinflower\ performance with all problems. The score is averaged over 100 runs. The optimal weight has achieved the limit accuracy of 93.3\%, whereas the best single LLM has the limit accuracy of 90.0\%. Dashed lines indicate the \boinflower{} performance of each LLM.}
\label{fig:training_gpt_phi4}
\end{figure}%
Third, we investigate the generalization ability of our weight optimization method (Section \ref{sec_llmensemble}).
Figure \ref{fig:training_gpt_phi4} shows the performance of the learned weights as a function of the number of training problems on AIME2025. With five training problems, the learned weights approach the best single-LLM performance.

\paragraph{\textsetfour{}}

To assess transferability, we trained weights on AIME2024 and tested on AIME2025; across 165 three-model combinations, the ensemble matched or exceeded the strongest individual model in 106 cases (64.2\%). %

\paragraph{\textsetfive{}} We finally compared the majority voting scheme with other selection scheme in the best-of-five (Bo5) test-time inference (Table \ref{tab:reward_gptoss_aime2025}). On AIME2025, majority voting outperforms random selection, self-certainty, reward models, and LLM-as-a-judge; full tables and settings are provided in the appendix (Appendix~\ref{subsec_reward_models}).
\begin{table}[htbp]
\centering
\caption{The accuracy of several selection methods on the best-of-five (Bo5) setting on the AIME2025 dataset. Answers are generated by GPT-OSS-20B.
The scores are averaged over $16$ trials and we report the two-sigma confidence intervals. Omniscient is a hypothetical upper bound that always selects the correct answer if it is present in the candidate answers, which requires the gold answer. Random, which selects one of $N$ answers uniformly at random, should match the performance of Bo1. Details of each method are described in Appendix \ref{subsec_reward_models}.}
\begin{tabular}{ll}
\toprule
                               Method &        Mean $\pm$ CI \\
\midrule
                           Omniscient & 91.04 $\pm$ 1.32 \\
                             Majority voting & 85.42 $\pm$ 2.01 \\
       LLM-as-a-judge (tournament) &  82.92 $\pm$ 2.57  \\
       LLM-as-a-judge (set) &  81.25 $\pm$ 2.42  \\
          INF-ORM-Llama3.1-70B & 79.79 $\pm$ 2.54 \\
Skywork-Reward-V2-Llama-3.1-8B & 79.79 $\pm$ 2.47 \\
    Skywork-Reward-V2-Qwen3-8B & 80.00 $\pm$ 2.51 \\
                       Self-certainty & 75.83 $\pm$ 2.47 \\
                               Random & 76.25 $\pm$ 2.71  \\
\bottomrule
\end{tabular}
\label{tab:reward_gptoss_aime2025}
\end{table}%

\section{Conclusion}\label{sec_conclusion}

In this paper, we view the majority voting as sampling from the underlying answer distribution,
Its limit-$N$ performance naturally defined. To approximate this limit with a finite number of samples, we introduce an adaptive sampling method based on the Bayes factor.
We also study the problem of aggregating responses from multiple LLMs and propose a majority voting that effectively leverages the strengths of individual models. The limit performance for large $N$ has advantage because the weights of LLM ensembles can be optimized by solving a mixed-integer linear programming problem.
Through extensive experiments on challenging reasoning tasks, we have verified robustness across LLMs and problem sets. The scale of our generated answers is substantially larger than in existing work, and we release these generations for future research.

\subsubsection*{Acknowledgments}
\updated{
We thank Naoto Iwase for providing the guidance to use the MedRECT dataset.
}

\bibliography{sample}
\bibliographystyle{apalike}

\clearpage

\appendix

\section{Usage of LLMs}

We have used LLMs to write paragraphs as well as to find relevant literature.
Programs are written with the aid of LLMs based on agentic programming tools (i.e., Cursor). 
However, all the responsibility of the content lies with us.
Theorems and proofs are written by us.

\section{Related work}\label{sec_related}

This section describes the related work on aggregating multiple answers from different individuals and LLMs.

\subsection{Aggregation of multiple answers from individuals}\label{sec_related_aggregation}

\paragraph{Controlling $N$ in majority voting}

Ensembling multiple predictions is a widely used technique for improving the accuracy of various machine learning tasks.
One of the classic papers by \cite{JMLR:v8:kolter07a} considers online ensemble learning, where the system can dynamically add or remove experts based on their performance.
Regarding the optimal control of $N$, the idea closest to ours is the Urn model by \cite{soto2016}, which calculates the Bayesian probability that the empirical majority matches the true majority. However, their model samples without replacement, whereas LLM generation fits sampling with replacement. Their method also requires candidate answers and is thus not directly applicable to our setting.
Motivated by ensembling methods for deep image classifiers, \cite{pmlr-v89-inoue19a} proposed an adaptive ensemble prediction method that adaptively aggregates the outputs of multiple probabilistic classifiers.

\paragraph{Opinion aggregation in crowdsourcing} 

A relevant lines of works in pre-LLM era is the opinion aggregation in crowdsourcing \citep{sheng08,jiyihyperquestions}. 
One of the most popular methods in opinion aggregation is the method by David and Skene \citep{DawidSkene79}. They introduced a probabilistic model that estimates the true labels of items by leveraging the agreement among multiple annotators. It comprises a confusion matrix $\pi$, whose $jk$ entry represents the probability such that each annotator's label is $j$ when the true label is $k$. Such a confusion matrix is not directly applicable to our setting because we cannot generally assume a fixed domain of answers in LLM generation. For example, in the AIME datasets, building a confusion matrix of 1,000 rows (possible answers are integers from 0 to 999) is not very practical.
The Dawid-Skene model also assumes that the most frequent answer is the correct one.
Subsequent works \citep{whitehill2009,DBLP:conf/aaai/KajinoTK12,takamatsu-etal-2012-reducing} addressed this issue by introducing a difficulty parameter for each problem. One of the largest difference between crowdsourcing and LLM ensemble is that the former typically assumes a single answer from each annotator, whereas the latter can generate multiple answers from the same LLM.

\subsection{Aggregation of multiple answers from LLMs}

Our method belong to a large umbrella of LLM Ensemble methods, where the forecaster uses multiple LLMs for a better output. 
A comprehensive survey on this topic \citep{chen2025harnessingmultiplelargelanguage} categorizes ensemble LLM methods into several categories.\footnote{Figure 2 therein.} Our method falls into the category of ``ensemble after inference'', where we aggregate the outputs of multiple LLMs after they have generated their responses. 

Within this category, \cite{chen2025harnessingmultiplelargelanguage} classified methods into three sub-categories: (1) selection, (2) selection-then-regeneration, and (3) cascade. The first directly selects the answer from generated outputs (our setting). The second selects a subset of LLMs and then merges their outputs using another LLM or a trained model. The third uses a cascade of LLMs, invoking a stronger model only when needed to save cost. Methods in (1) and (2) typically assume a fixed number of generations per LLM and optimize aggregation. In contrast, we primarily consider dynamically controlling the number of generations. Methods in (3) focus on minimizing total cost of calling LLMs.

\paragraph{(1) Selection} 
\cite{li2024more} proposed AgentForest that aggregates the predictions of multiple agents by using similarity agreement. 
\cite{guha2024smoothie} introduced Smoothie, a graphical-model based method to choose the best LLM for each problem.
\cite{si2023getting} introduced Mixture of Reasoning Experts (MORE) framework that adopts multiple prompting strategy to obtain a mixture of experts and aggregates them by random forest classifier.
Our methods belongs to this sub-category. Compared with these methods, our method is a simple average while others may use more complex aggregation strategies. Note also that these methods primarily consider a single generation per each LLM, whereas our paper primarily considers large number of generations per each LLM. 
A recent paper by \cite{zhao2025majorityrightrltraining} proposes an aggregation method of multiple solutions by using reinforcement learning from verifiable rewards.

\paragraph{(2) Selection-then-Regeneration}
\cite{jiang-etal-2023-llm} introduced LLM-Blender, an ensemble LLM method that comprises two modules: PAIRRANKER and GENFUSER. PairRanker chooses $K$ among $N$ LLMs, and GenFuser merges the outputs.
\cite{tekin-etal-2024-llm} introduced LLM-TOPLA, an LLM ensemble method that maximizes the diversity of the answers. Based on the answer distribution of $N$ LLMs, they choose $K$ subset of LLMs that maximizes the diversity, and then train an aggregator (like multi-layer perceptron) that minimizes the cross-entropy loss.
\cite{lv-etal-2024-urg} proposed an end-to-end method that integrates the subset selection and regeneration. 
Most of these methods are based on the idea of using many LLMs (or same LLM with different prompts) and single generation per each prompt, whereas our paper primarily considers a relatively small subset of LLMs for each prompt, and large number of generations per each LLM.

\paragraph{(3) Cascading}
\cite{varshney2022modelcascadingjointlyimproving} is one of the earliest work that introduced cascading. 
\cite{DBLP:conf/iclr/YueZZDY24} proposed an aggregation of weak and strong LLMs. In their model, if the weak LLM and the cascade LLM disagree with the answer, then the strong LLM is invoked. 
The primal motivation in cascading is to save the cost of calling strong LLMs, which is orthogonal to our goal of improving the accuracy of maximizing the accuracy given large amount of computation.

\paragraph{Answer selection based on reward models and LLM-as-a-judge}

A common approach to aggregate multiple answers from LLMs is to use reward models or LLM-as-a-judge methods. 
Typically, reward models are constructed on top of language models. These approaches can be broadly categorized into two groups: those in which the reward model directly outputs a scalar value \citep{DBLP:conf/nips/RafailovSMMEF23,liu2024skywork}, and those in which the reward model provides comparative judgments or rankings over multiple responses \citep{DBLP:journals/corr/abs-2410-12832,dong2024rlhf,son2024llmasajudgerewardmodel,guo2025rewardreasoningmodel,chen2025rmr1rewardmodelingreasoning}. The methods of the latter category are referred to as generative reward models, reward reasoning models, or LLM-as-a-judge. Compared to our approach, these methods incur additional computational cost due to the reliance on reward models. 
Also, in our experiments, we did not observe particular advantage of using reward models (see Table \ref{tab:reward_gptoss_aime2025}).

\section{Proof of Theorem \ref{thm_consistency}}
\label{sec:consistency}

\begin{proof}[Proof of Theorem \ref{thm_consistency}]
Let $\hat{p}_a(n) = N_j(n)/n$ be the empirical mean of answer $j$ at round $n$.
Hoeffding's inequality implies that 
\[
\mathbb{P}[|\hat{p}_a(n) - p_a| \ge \epsilon] \le 2\exp(-2n\epsilon^2).
\]
Let $\Delta = \min_{j \neq 1} (p_1 - p_j) > 0$ be the gap between the most frequent answer and the second most frequent answer.
Then it holds that 
\begin{align}
\mathbb{P}\left[\bigcap_{n=N_0}^{\infty} |\hat{p}_j(n) - p_j| \ge \frac{\Delta}{2} \right]
&\le \sum_{n=N_0}^{\infty} \mathbb{P}\left[|\hat{p}_j(n) - p_j| \ge \frac{\Delta}{2} \right] \tag{Union bound}\\
&\le \sum_{n=N_0}^{\infty} 2\exp\left(-n\frac{\Delta^2}{2}\right) \tag{Hoeffding's inequality}\\
&= \frac{2 e^{-N_0 \Delta^2 / 2}}{1 - e^{-\Delta^2 / 2}}.
\end{align}
and by choosing $N_0 = N_0(\delta)$ sufficiently large, the right-hand side can be made no larger than $\delta/s$. 
Union bound over all $s$ answers implies that, with probability at least $1-\delta$, it holds that
\[
\hat{p}_1(n) - \hat{p}_j(n) \ge p_1 - p_j - 2 \times \frac{\Delta}{2} \ge 0, \forall j \neq 1, \forall n \ge N_0(\delta).
\]
Namely, at least with probability $1-\delta$, the empirical most frequent answer is indeed the true majority answer for all $n \ge N_0(\delta)$, and thus, if stopping time is longer than $N_0(\delta)$, the algorithm returns the true majority answer.
By choosing $N_{\max} \ge N_0(\delta)$ and $B$ sufficiently large\footnote{This is because, the possible combination of answers with the first $N_0$ samples is finite, and thus, the possible value\footnote{The Bayes factor (BF) is a deterministic function of $N_1,\dots,N_s$ that never diverges.} of $\mathrm{BF}$ that it can take until the first $N_0$ sample is finite. If we set $B$ larger than that the largest of such values, then the algorithm never stops before the $N_0$ samples.}, the algorithm stops after $N_0(\delta)$ with probability $1$, and thus, the algorithm returns the true majority answer with probability at least $1-\delta$.
Since $\delta > 0$ is arbitrary, the algorithm returns the true majority answer with probability arbitrarily close to $1$.
Proof of Theorem \ref{thm_consistency} is complete.
\end{proof}

\begin{remark}{\rm (Frequentist stopping criteria)}
While Dirichlet posterior naturally fits with our task, we may consider frequentist stopping criteria based on the observed data.
Advantages of the frequentist approach include its closed formula as well as rigorous guarantee in view of a frequentist. 
A drawback is that its configuration of the hyperparameter tends to be conservative: the confidence level that it requires is often higher than what actually is, potentially leading to oversampling. 
To bound the error probability, it needs to consider the correction due to adaptive sampling \citep{kaufmann2021mixture}, as well as a multiple-testing correction with respect to the size of answer set $s(t)$. The latter seems particularly problematic, as $s(t)$ is unknown and potentially unbounded.
For this reason, we do not see any existing work that adopts a frequentist approach to testing adaptive majority voting.
For example, existing methods on majority voting, such as \cite{soto2016}, which we will elaborate in Section \ref{sec_related_aggregation}, also adopt Bayesian approach.
Therefore, we do not pursue this direction in this paper.
\end{remark}

\subsection{Finite-time analysis of Stopping time}
\label{sec:finite_time_analysis}

\updated{In this section, we conduct a finite-time analysis of the stopping time of Algorithm \ref{alg:adaptive_sampling}.
\begin{theorem}{\rm (Finite-time stopping)}\label{thm_finite_time_stopping}
Algorithm \ref{alg:adaptive_sampling} stops within 
\[
O\left( \frac{1}{\Delta^2} \log\left(|\mathcal{A}| \max(B, 1/\delta)\right) \right)
\]
rounds with probability at least $1 - \delta$, where $\Delta$ is the gap between the most frequent answer and the second most frequent answer, and $\mathcal{A}$ is the set of possible answers by LLM.
\end{theorem}
Note that this rate is optimal for $\delta$-correct identification because of the lower bound of the best-arm identification problem (e.g., \cite{DBLP:journals/corr/GarivierK16}) with two arms, which is about identifying the larger of two Bernoulli distributions with gap $\Delta$, is $\Omega(\log(\delta^{-1})/\Delta^2)$ as well.
}

\begin{proof}[Proof of Theorem \ref{thm_finite_time_stopping}]
\updated{
Since we focus on a particular problem $q$ we drop $q$ and  denote $a_g$ be the gold answer and $\mathcal{A}$ be the set of possible answers by LLM. 
}

\updated{
Let $P(n) = \Pr[X_1 \ge \max_{i \neq 1} X_i, X \sim \mathrm{Dirichlet}(N_1+1, N_2+1, \ldots, N_{s(n)}+1, \alpha)]$.
Let $P_{a_g,a}(n)$ be the Beta posterior probability such that the parameter of answer $a_g$ is larger than that of answer $a$. By using the fact that a Dirichlet distribution restricted to two dimensions is a Beta distribution, it is equivalent to 
\begin{align}
1 - \mathbb{P}[X \ge 1/2], X \sim \mathrm{Beta}(N_a, N_{a_g}).
\end{align}
A sufficient condition for stopping at round $n$ is (c.f., Eq.~\eqref{ineq_bayesfactor_stopping} and Eq.~\eqref{ineq_dirichlet_approx}) is 
\begin{align}
\{ B \ge g(n) \frac{P(n)}{1 - P(n))} \}
&\supseteq
\{ B \ge \frac{P(n)}{1 - P(n))} \}\\
&\supseteq
\{ 2 B \ge \frac{1}{1 - P(n))} \}
\text{\ \ \ \ \ (for $B \ge 2$)}
\\
&\supseteq
\{ P(n) \ge 1 - \frac{1}{2B} \}\\
&\supseteq
\bigcap_{a \ne a_g} \left\{ 
    P_{a_g,a}(n) \ge 1 - \frac{1}{2|\mathcal{A}|B} 
\right\}.
\end{align}
By Hoeffding's inequality, for any $a$, 
\begin{equation}\label{ineq_hoeffding_correct}
|\hat{p}_a(n) - p_a| \le \frac{\Delta}{4}
\end{equation}
holds with probability at least $1 - \exp(- n \Delta^2 / 8)$. If we fix $n$ such that
\begin{equation}\label{ineq_n_condone}
n \ge \frac{8}{\Delta^2} \log\left( \frac{|\mathcal A|}{\delta} \right),
\end{equation}
then \eqref{ineq_hoeffding_correct} holds for all $a \in \mathcal{A}$ with probability at least $1 - \delta$. Under \eqref{ineq_hoeffding_correct}, it holds that
\begin{align}
\hat{p}_{a_g}(n) - \hat{p}_a(n) 
\ge p_{a_g} - p_a - 2 \times \frac{\Delta}{4}
\ge \Delta - \frac{\Delta}{2} = \frac{\Delta}{2}.
\end{align}
for all $a \ne a_g$. Therefore, by letting $\mu = \hat{p}_a(n)/(\hat{p}_{a_g}(n) + \hat{p}_a(n)) < 1/2$, we have
\begin{align}
P_{a_g,a}(n) \ge 1 - \frac{7}{1/2 - \mu}\exp(- n d(\mu, 1/2))
\text{\ \ \ \ (by Lemma \ref{lem_beta_tail})}
\end{align}
If we choose $n$ such that
\begin{equation}\label{ineq_n_condtwo}
n \ge \frac{1}{d(\mu, \frac{1}{2})}
  \log\left(\frac{7|\mathcal{A}|B}{\frac{1}{2} - \mu}\right),
\end{equation}
then 
\begin{equation}
1 - \frac{7}{1/2 - \mu}\exp(- n d(\mu, 1/2))
\ge 1 - \frac{1}{2|\mathcal{A}|B}.
\end{equation}
In summary, if we choose $n$ such that both \eqref{ineq_n_condone} and \eqref{ineq_n_condtwo} hold, then the algorithm stops at (or before) $n$ with probability at least $1 - \delta$. Regarding the order of \eqref{ineq_n_condtwo}, we have
\begin{align}
n &= \frac{1}{d(\mu, \frac{1}{2})}\log\left(\frac{7|\mathcal{A}|B}{\frac{1}{2} - \mu}\right)\\
&= O\left( \frac{1}{\Delta^2}\log\left(|\mathcal{A}|B\right) \right)
\text{\ \ \ \ (Pinsker's inequality: $d(p,q) \ge 2(p-q)^2$)}\\
\end{align}
and thus, $n$ such that both \eqref{ineq_n_condone} and \eqref{ineq_n_condtwo} hold is
\begin{equation}
n = O\left( \frac{1}{\Delta^2} \log\left(|\mathcal{A}| \max(B, 1/\delta)\right) \right).
\end{equation}
}
\end{proof}

\updated{
\begin{lemma}[Beta tail]\label{lem_beta_tail}
Let $X \sim \mathrm{Beta}(1 + n\mu, 1 + n (1-\mu))$ be a random variable following the Beta distribution. Then, for any $a > \mu$, it holds that
\[
\mathbb{P}[X \ge a] \le \frac{7}{a - \mu}\exp(- n d(\mu, a)).
\]
where $d(\mu, a) = \mu \log\frac{\mu}{a} + (1-\mu) \log \frac{1-\mu}{1-a}$ is the KL divergence between two Bernoulli distributions.
\end{lemma}
}
\begin{proof}[Proof of Lemma \ref{lem_beta_tail}]
\updated{
Let $B(a,b)$ and $\Gamma(x)$ be the Beta function and the gamma function, respectively.
\begin{align}
\lefteqn{
\mathbb{P}[X \ge a]
}\\
&= \frac{1}{B(1 + n\mu, 1 + n(1-\mu))} \int_a^1 x^{n\mu} (1-x)^{n(1-\mu)} dx\\
&= \frac{1}{B(1 + n\mu, 1 + n(1-\mu))}\left(
    \left[\frac{1}{\frac{n\mu}{x} - \frac{n(1-\mu)}{(1-x)} }\cdot x^{n\mu}(1-x)^{n(1-\mu)}\right]^1_a - \int_a^1 \frac{n\mu + n(1-\mu)}{(\frac{n\mu}{x} - \frac{n(1-\mu)}{(1-x)})^2} x^{n\mu}(1-x)^{n(1-\mu)} dx
\right)
\\&\text{\ \ \ \ \ (by integration by parts)}\\
&\le \frac{1}{B(1 + n\mu, 1 + n(1-\mu))} \frac{1}{\frac{n(1-\mu)}{(1-1)} \frac{n\mu}{a}} a^{n\mu} (1-a)^{n(1-\mu)}\\
&\le \frac{\Gamma(2+n)}{\Gamma(1 + n \mu) \Gamma(1 + n(1-\mu))} \frac{a(1-a)}{n(a-\mu)} a^{n\mu} (1-a)^{n(1-\mu)}\\
\end{align}
and by Stirling's formula $\sqrt{2 \pi} \le \frac{\Gamma(z)}{z^{z-1/2}e^{-z}} \le \sqrt{2\pi} e^{1/12}$, we have
\begin{align}
\mathbb{P}[X \ge a]
&\le \frac{a(1-a)e^{1/12}}{n(a-\mu)\sqrt{2\pi}}\sqrt{\frac{(n+2)^3}{(n\mu+1)(n(1-\mu)+1)}} \frac{(n+2)^n}{(n\mu)^{n\mu}(n(1-\mu))^{n(1-\mu)}} a^{n\mu} (1-a)^{n(1-\mu)}\\
&\le \frac{a(1-a)e^{1/12}}{n(a-\mu)\sqrt{2\pi}}\sqrt{\frac{27n^3}{n(1-\mu)}}e^2e^{-n d(\mu, a)}\\
&\le \frac{ae^{25/12}}{(a-\mu)\sqrt{2\pi}}\sqrt{\frac{27}{1-a}}\exp(-n d(\mu, a))\\
&\le \frac{e^{25/12}}{5(a-\mu)}\sqrt{\frac{54}{\pi}}e^{-n d(\mu, a)}\\
&\le \frac{7}{a-\mu}e^{-n d(\mu, a)}
\end{align}
and the proof is complete.
}
\end{proof}

\section{List of LLMs and Problem sets}\label{sec_details_llm_dataset}

We tested the following LLMs. The model temperature is 0.6 unless otherwise specified. We follow the model recommendation to set the temperature and other hyperparameters. The maximum model length is $\min(X, \text{maximum context length of LLM}) - 2500$ tokens, where $X=100000$ all but GPQA-DIAMOND, whereas $X = 50000$ for GPQA-DIAMOND. 
The $2500$ token margin is reserved for the prompt; we believe that this does not matter to MATH500 and GPQA-DIAMOND at all, and to AIME2024/2025 very slightly.

\begin{itemize}
\item Phi-4-reasoning \citep{abdin2025phi4reasoningtechnicalreport} is a 14-billion-parameter (14B) reasoning-oriented model developed by Microsoft, released in April 2025. It builds on the Phi-4 base model using supervised fine-tuning on a dataset of chain-of-thought traces and reinforcement learning. 
We set temperature to 0.8.
\item GPT-OSS-20B \citep{openai2025gptoss120bgptoss20bmodel} is the smaller version of the two LLMs released in October 2025 by OpenAI. This model has 21B parameters in total. We set the reasoning effort to be medium (default setting).  
\item AM-Thinking‑v1 \citep{ji2025amthinkingv1advancingfrontierreasoning} is a 32B dense model released in May 2025 by the a-m-team. It is built upon the pre-trained Qwen 2.5-32B-Base, then enhanced through a specialized post-training pipeline featuring Supervised Fine-Tuning (SFT) followed by reinforcement learning (RL).
\item EXAONE-Deep-32B \citep{exaone-deep} is a 32B model released in May 2025 by LG AI Research as part of the EXAONE Deep series. Built with 64 Transformer layers, a 102K vocabulary, and a 32K-token context window, it is designed to excel in reasoning-intensive tasks such as mathematics and coding. %
\item Nemotron-Nano-9B \citep{nvidia2025nvidianemotronnano2} is a 9-billion-parameter hybrid reasoning model by NVIDIA, released in August 2025. It features a Mamba-2 + Transformer hybrid architecture, replacing most attention layers with efficient Mamba-2 layers. It was pretrained from scratch (using a 12B base model over 20 trillion tokens) and then compressed via distillation. Post-training includes SFT, GRPO, DPO, and RLHF.
\item MetaStone-S1-32B \citep{wang2025testtimescalingreflectivegenerative} is a 32B reflective generative reasoning model, released around July 2025. It introduces a novel Reflective Generative Form, merging policy generation and process reward modeling within a single shared backbone, enabled by a lightweight Self-supervised Process Reward Model (SPRM). 
\item Qwen3, released in April 2025 by Alibaba Cloud \citep{qwen3technicalreport}, is the third-generation open-source large language model family featuring hybrid reasoning, long context support, agentic capabilities, and multilingual fluency. 
We use three versions of Qwen3. Namely, Qwen3-4B, Qwen3-14B, and Qwen3-30B-A3B-Thinking-2507.
\item LIMO-v2 \citep{ye2025limoreasoning} is a 32B Qwen2.5-based reasoning model released in July 2025, fine-tuned on $\sim$800 carefully curated samples to achieve top-tier math reasoning with remarkable data efficiency—embodying the ``Less-Is-More'' principle.
\end{itemize}  
We tested the following datasets:
\begin{itemize}
  \item AIME2024 \citep{jia2024aime} consists of 30 problems that were used American Invitational Mathematics Examination (AIME) held during January 31 and   February 1, 2024. AIME2024 tests mathmatical problem-solving skills in vast field of mathmatical topics. High-scoring high-school students are invited to participate in the United States of America Mathematics Olympiad (USAMO). All answers are integers between 1--999.
  \item AIME2025 \citep{opencompass2025aime} consists of 30 problems that were used American Invitational Mathematics Examination (AIME) held from February 10 to February 12, 2025. Its format is identical to AIME2024.
  \item GPQA-DIAMOND (Graduate-Level Google-Proof Q\&A Benchmark, \citealt{rein2023gpqagraduatelevelgoogleproofqa}) is a set of multiple-choice questions crafted by PhD‑level experts in biology, physics, and chemistry. The Diamond is a subset of 198 GPQA problems that distinguishes Ph.D. level experts from the others. The answers are in multiple-choice format (A–D).
\item MATH500 \citep{hendrycksmath2021} is a benchmark derived from the MATH dataset, which contains challenging competition-level mathematics problems covering algebra, geometry, number theory, probability, and other advanced topics. The MATH500 subset consists of 500 carefully selected problems used in recent evaluation studies, and is designed to test mathematical problem-solving skills beyond high-school level. All problems require generating detailed reasoning and solutions rather than multiple-choice responses. The answer format varies, including numeric integers, fractions, complex numbers, and vectors.
\end{itemize}

Among these datasets, AIME2024/2025 benefits for a long chain of thought (CoT) reasoning, as the problems are challenging and require multi-step reasoning.\footnote{Regarding the scaling of performance as a function of CoT length, see, e.g., Figure 1 of \cite{muennighoff2025s1simpletesttimescaling} and Figure 7 of \cite{yan2025inftythinkbreakinglengthlimits}.} GPQA-DIAMOND and MATH500 also require long CoT, but the benefit of it is less significant than AIME2024/2025. 
We did not include GSM8K \citep{cobbe2021gsm8k} because these problems are relatively easy and finishes with a short CoT for the tested LLMs, and thus the benefit of ensemble was not significant.

\clearpage

\section{Bo1 and \boinf{} performance of each model}\label{sec_bo1_boinf_per_model}

We list Bo1 (averaged) and \boinflower{} performance of each model in Table \ref{tbl:summary_performance_bo1boinf_all}. 
We have used the same prompt (Section \ref{sec_prompts_generation}) for all models, which might be sub-optimal for some models. The evaluated performance also depends on the answer parser.
While we used the consistent and a reasonably flexible answer parser for all models, we acknowledge some examples\footnote{In particular, MATH500 where the answer format varies.} where the parsing is imperfect.
We have not specified any tool call option.
Also note that GPT-OSS-20B's reasoning mode is set to medium (default setting), which is the second best setting.
Finally, we clarify our goal is not to argue superiority of some models over the others, but to give some idea on the performance of each model that we use for the verification of our methods of adaptive sampling (Algorithm \ref{alg:adaptive_sampling}).

\begin{table}
\centering
\caption{Summary performance per model across datasets. The scores are estimated from at least 80 generation for each model and dataset. GPQA-D is an abbreviation of GPQA-DIAMOND.}
\begin{tabular}{l|rr|rr|rr|rr}
LLM & \multicolumn{2}{c|}{AIME2024} & \multicolumn{2}{c|}{AIME2025} & \multicolumn{2}{c|}{GPQA-D} & \multicolumn{2}{c}{MATH500} \\
 & Bo1 & \boinfshort{} & Bo1 & \boinfshort{} & Bo1 & \boinfshort{} & Bo1 & \boinfshort \\
\hline
AM-Thinking-v1 & 0.789 & 0.900 & 0.762 & 0.867 & -- & -- & -- & -- \\
Datarus-R1-14B-preview & 0.516 & 0.733 & 0.370 & 0.600 & -- & -- & -- & -- \\
EXAONE-Deep-32B & 0.715 & 0.867 & 0.627 & 0.767 & 0.661 & 0.692 & 0.945 & 0.962 \\
GPT-OSS-20B & 0.780 & 0.900 & 0.744 & 0.900 & 0.642 & 0.722 & 0.928 & 0.960 \\
LIMO-v2 & 0.620 & 0.800 & 0.527 & 0.700 & -- & -- & -- & -- \\
MetaStone-S1-32B & 0.820 & 0.867  & 0.747 & 0.800 & 0.670 & 0.707 & 0.947 & 0.950 \\
NVIDIA-Nemotron-Nano-9B-v2 & 0.716 & 0.867 & 0.600 & 0.733 & 0.584 & 0.626 & 0.938 & 0.956 \\
Phi-4-reasoning & 0.729 & 0.867 & 0.643 & 0.833 & 0.658 & 0.727 & 0.878 & 0.944 \\
Qwen3-4B & 0.735 & 0.800 & 0.655 & 0.733 & -- & -- & -- & -- \\
Qwen3-14B & 0.830 & 0.867 & 0.744 & 0.800 & -- & -- & 0.946 & 0.956 \\
Qwen3-30B-A3B-Thinking-2507 & 0.905 & 0.933 & 0.858 & 0.900 & 0.720 & 0.732 & 0.954 & 0.960 \\
\end{tabular}
\label{tbl:summary_performance_bo1boinf_all}
\end{table}

\subsection{Prompts for answer generation}\label{sec_prompts_generation}

We send the following request to a LLM that we launched as a vllm process: 
\begin{quote}
\{"role": "user", "content": prompt\}
\end{quote}
where the examples of the prompt are given below: The first prompt is from AIME2024, and the second prompt is from GPQA-DIAMOND.

\begin{promptbox}
Let $x,y$ and $z$ be positive real numbers that satisfy the following system of equations: 
\[\log_2\left({x \over yz}\right) = {1 \over 2}\]
\[\log_2\left({y \over xz}\right) = {1 \over 3}\]
\[\log_2\left({z \over xy}\right) = {1 \over 4}\]
Then the value of $\left|\log_2(x^4y^3z^2)\right|$ is $\tfrac{m}{n}$ where $m$ and $n$ are relatively prime positive integers. Find $m+n$.
 Please reason step by step, and put your final answer within \boxed{}.
\end{promptbox}

\begin{promptbox}
Among the following exoplanets, which one has the highest density?

a) An Earth-mass and Earth-radius planet.
b) A planet with 2 Earth masses and a density of approximately 5.5 g/cm^3.
c) A planet with the same composition as Earth but 5 times more massive than Earth.
d) A planet with the same composition as Earth but half the mass of Earth.

A. d
B. a
C. b
D. c
 Please reason step by step, and put your final answer as the letter choice (A), (B), (C), etc. within \boxed{}.
\end{promptbox}

For NVIDIA Nemotron-Nano-9B, we prepend the recommended system message ``/think''.

\subsection{Prompts for LLM-as-a-judge}\label{sec_prompts_judge}

The following illustrates a prompt used to instruct an LLM-as-a-judge to select the best answer among a set of candidates. In this prompt, \texttt{last\_part\_1}, \texttt{last\_part\_2}, … denote the final 5000 characters of each answer preceding the \texttt{</think>} tag.

\begin{promptbox}
Please evaluate the following 5 answer excerpts for this mathematical problem and determine which answer you think is the most correct.

Problem:
Let $x,y$ and $z$ be positive real numbers that satisfy the following system of equations: 
\[\log_2\left({x \over yz}\right) = {1 \over 2}\]
\[\log_2\left({y \over xz}\right) = {1 \over 3}\]
\[\log_2\left({z \over xy}\right) = {1 \over 4}\]
Then the value of $\left|\log_2(x^4y^3z^2)\right|$ is $\tfrac{m}{n}$ where $m$ and $n$ are relatively prime positive integers. Find $m+n$.

Answer 1 (Last 5000 chars before </think>):
{last_part_1}

Answer 2 (Last 5000 chars before </think>):
{last_part_2}

Answer 3 (Last 5000 chars before </think>):
{last_part_3}

Answer 4 (Last 5000 chars before </think>):
{last_part_4}

Answer 5 (Last 5000 chars before </think>):
{last_part_5}

Among the above 5 answer excerpts (showing the last parts before </think> tag), which answer do you think is the most correct, logical, and complete?

Please provide detailed reasoning for your judgment, and then output the number of the answer you think is correct (1, 2, 3, 4, 5) enclosed in \boxed{}.

Example: \boxed{1}

Judgment:
\end{promptbox}

\subsection{Source code}

Our source code is available at \url{https://figshare.com/s/8bd1830a255278e57830}.

\section{Complementarity in LLM Ensembles for AIME 2025}\label{sec_aime2025}

In the AIME2025 dataset, we explored the combination of Phi-4-reasoning (Table \ref{tbl:single_performance_phi-4-reasoning_aime2025}) and GPT-OSS-20B (Table \ref{tbl:single_performance_gpt-oss-20b_aime2025}) to enhance performance on complex reasoning tasks. By leveraging the strengths of both models, we aimed to achieve better accuracy and robustness in our predictions.
In this case, Phi-4-reasoning can solve Problem 30 that GPT-OSS-20B cannot solve, and can complement the performance. As a result, its LLM ensemble achieved 0.933 \boinflower{} accuracy, which is higher than the individual accuracies of Phi-4-reasoning (0.733) and GPT-OSS-20B (0.900). This demonstrates the effectiveness of combining different models to improve overall performance on challenging tasks.

\begin{table}
\centering
\caption{Basic performance for each problem. The final line at column ``accuracy'' indicates Bo1 performance, and the final line at ``majority answer'' indicates \boinflower{} performance. LLM=Phi-4-reasoning, Dataset=AIME2025.}
\begin{tabular}{lrrrrrr}
Problem No. & Total answers & Correct answers & Accuracy & Gold answer & Majority answer \\
\hline
1 & 160 & 159 & 0.994 & 70 & 70 \\
2 & 160 & 112 & 0.700 & 588 & 588 \\
3 & 160 & 154 & 0.963 & 16 & 16 \\
4 & 160 & 150 & 0.938 & 117 & 117 \\
5 & 160 & 146 & 0.912 & 279 & 279 \\
6 & 160 & 158 & 0.988 & 504 & 504 \\
7 & 160 & 96 & 0.600 & 821 & 821 \\
8 & 160 & 147 & 0.919 & 77 & 77 \\
9 & 160 & 134 & 0.838 & 62 & 62 \\
10 & 160 & 58 & 0.362 & 81 & 81 \\
11 & 160 & 120 & 0.750 & 259 & 259 \\
12 & 160 & 137 & 0.856 & 510 & 510 \\
13 & 160 & 5 & 0.031 & 204 & 487/3 \\
14 & 160 & 5 & 0.031 & 60 & 63 \\
15 & 160 & 0 & 0.000 & 735 & 147 \\
16 & 160 & 158 & 0.988 & 468 & 468 \\
17 & 160 & 157 & 0.981 & 49 & 49 \\
18 & 160 & 87 & 0.544 & 82 & 82 \\
19 & 160 & 154 & 0.963 & 106 & 106 \\
20 & 160 & 114 & 0.713 & 336 & 336 \\
21 & 160 & 143 & 0.894 & 293 & 293 \\
22 & 160 & 45 & 0.281 & 237 & 60671 \\
23 & 160 & 66 & 0.412 & 610 & 610 \\
24 & 160 & 77 & 0.481 & 149 & 149 \\
25 & 160 & 132 & 0.825 & 907 & 907 \\
26 & 160 & 111 & 0.694 & 113 & 113 \\
27 & 160 & 136 & 0.850 & 19 & 19 \\
28 & 160 & 1 & 0.006 & 248 & 625 \\
29 & 160 & 75 & 0.469 & 104 & 104 \\
30 & 160 & 48 & 0.300 & 240 & 240 \\
\hline
total & 4800 & 3085 & 0.643 &  & 0.833 \\
\end{tabular}
\label{tbl:single_performance_phi-4-reasoning_aime2025}
\end{table}
\begin{table}
\centering
\caption{Basic performance for each problem. The final line at column ``accuracy'' indicates Bo1 performance, and the final line at ``majority answer'' indicates \boinflower{} (limit) performance. LLM=GPT-OSS-20B, Dataset=AIME2025.}
\begin{tabular}{lrrrrrr}
Problem No. & Total answers & Correct answers & Accuracy & Gold answer & Majority answer \\
\hline
1 & 85 & 85 & 1.000 & 70 & 70 \\
2 & 85 & 76 & 0.894 & 588 & 588 \\
3 & 85 & 85 & 1.000 & 16 & 16 \\
4 & 85 & 83 & 0.976 & 117 & 117 \\
5 & 85 & 81 & 0.953 & 279 & 279 \\
6 & 85 & 85 & 1.000 & 504 & 504 \\
7 & 85 & 52 & 0.612 & 821 & 821 \\
8 & 85 & 80 & 0.941 & 77 & 77 \\
9 & 85 & 75 & 0.882 & 62 & 62 \\
10 & 85 & 53 & 0.624 & 81 & 81 \\
11 & 85 & 61 & 0.718 & 259 & 259 \\
12 & 85 & 56 & 0.659 & 510 & 510 \\
13 & 85 & 17 & 0.200 & 204 & 204 \\
14 & 85 & 3 & 0.035 & 60 & 74 \\
15 & 85 & 0 & 0.000 & 735 & 147 \\
16 & 85 & 81 & 0.953 & 468 & 468 \\
17 & 85 & 85 & 1.000 & 49 & 49 \\
18 & 85 & 62 & 0.729 & 82 & 82 \\
19 & 85 & 84 & 0.988 & 106 & 106 \\
20 & 85 & 79 & 0.929 & 336 & 336 \\
21 & 85 & 72 & 0.847 & 293 & 293 \\
22 & 85 & 85 & 1.000 & 237 & 237 \\
23 & 85 & 45 & 0.529 & 610 & 610 \\
24 & 85 & 58 & 0.682 & 149 & 149 \\
25 & 85 & 81 & 0.953 & 907 & 907 \\
26 & 85 & 72 & 0.847 & 113 & 113 \\
27 & 85 & 81 & 0.953 & 19 & 19 \\
28 & 85 & 36 & 0.424 & 248 & 248 \\
29 & 85 & 71 & 0.835 & 104 & 104 \\
30 & 85 & 14 & 0.165 & 240 & 188 \\
\hline
total & 2550 & 1898 & 0.744 &  & 0.900 \\
\end{tabular}
\label{tbl:single_performance_gpt-oss-20b_aime2025}
\end{table}

\clearpage

\section{Additional Experiments}\label{sec_additional_experiments}

To verify the robustness of our findings, we conducted similar experiments on other LLMs and datasets. The results are consistent with the main experiments in the paper, confirming the robustness of our proposed methods across different settings. As is the main paper, all error bars are standard two-sigma confidence intervals.

\subsection{\textsetone{}}\label{subsec_adaptive_vs_fixed}

In the following pages, we present the performance comparison between our proposed adaptive algorithm (Algorithm \ref{alg:adaptive_sampling}) and the fixed-sample BoN across various LLMs and datasets (Figures \ref{fig:adaptive_cost_phi4_all}--\ref{fig:adaptive_cost_EXAONE-Deep-32B_all}). The results consistently demonstrate that our adaptive approach outperforms the fixed-sample-size method given the same number of generation (= samples) or the same token budget.
This is because our algorithm is adaptive; for easy problems where the model always outputs the same answer, it uses fewer samples, while for hard problems where the model's answers vary, it uses more samples. This adaptivity leads to better overall performance compared to a fixed-sample-size approach.

\begin{figure}[h]
\centering
\begin{subfigure}{0.98\textwidth}
    \centering
    \includegraphics[width=\linewidth]{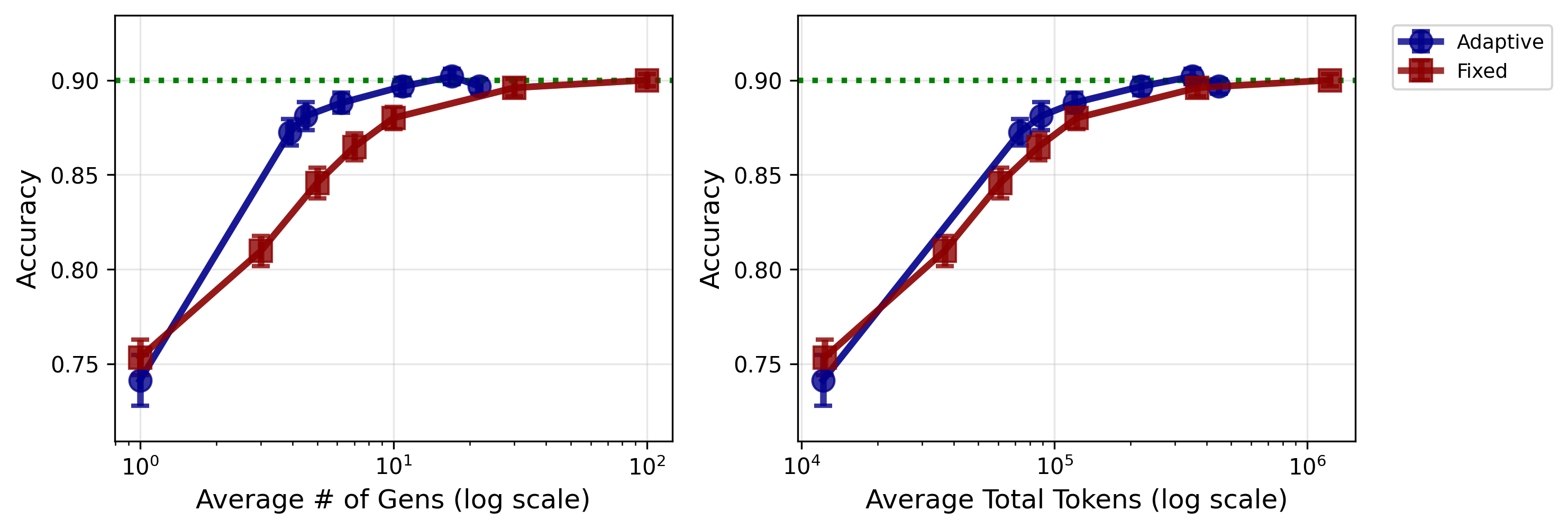}
    \caption{AIME2025}
\end{subfigure}

\begin{subfigure}{0.98\textwidth}
    \centering
    \includegraphics[width=\linewidth]{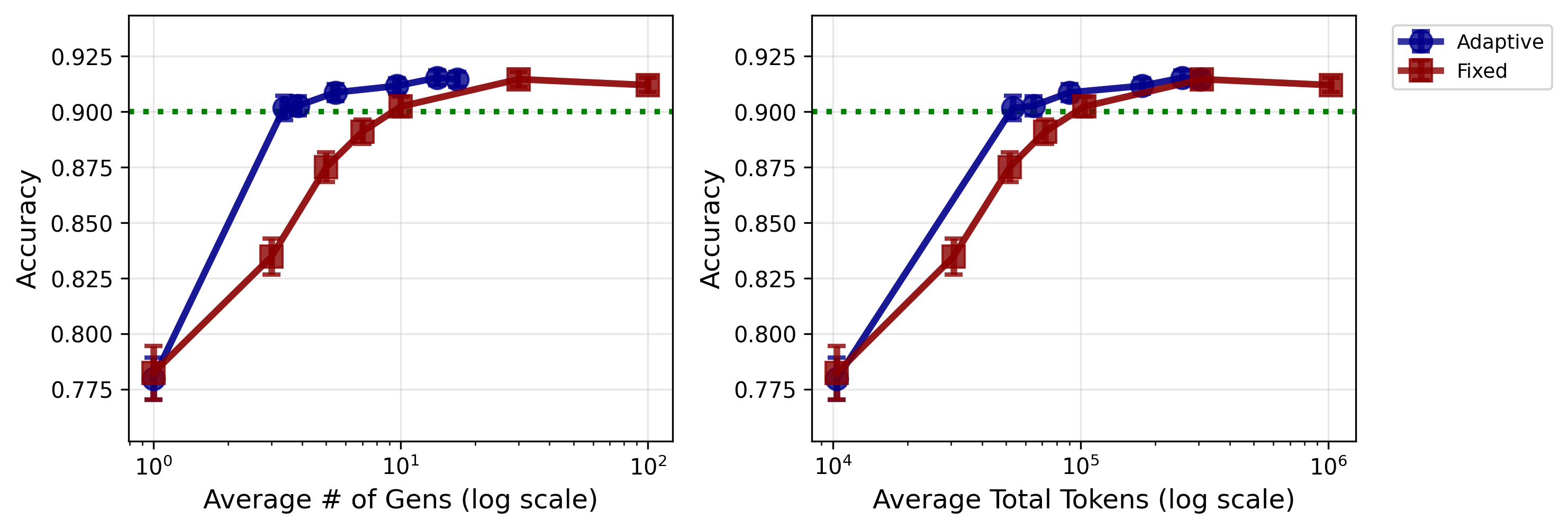}
    \caption{AIME2024}
\end{subfigure}

\begin{subfigure}{0.98\textwidth}
    \centering
    \includegraphics[width=\linewidth]{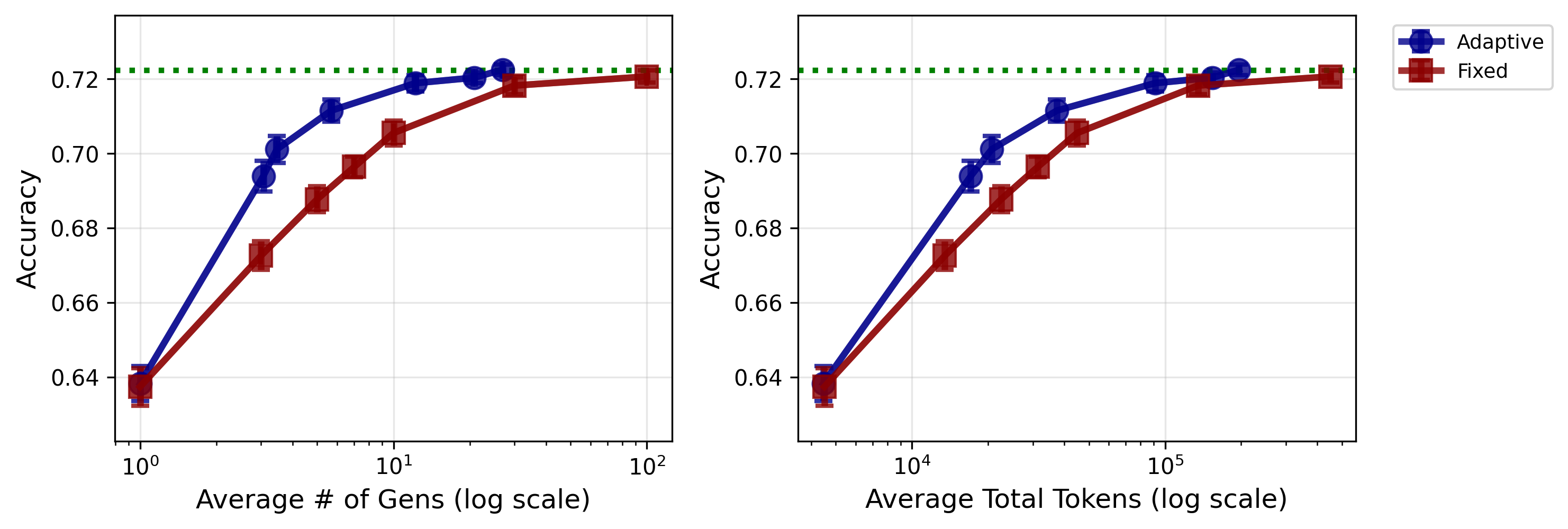}
    \caption{GPQA-Diamond}
\end{subfigure}

\begin{subfigure}{0.98\textwidth}
    \centering
    \includegraphics[width=\linewidth]{plots/adaptive_math500_gpt-oss-20b.png}
    \caption{MATH500}
\end{subfigure}
\caption{Cost-analysis of our proposed method and fixed BoN for GPT-OSS-20B. The error bars are standard two-sigma confidence intervals. Green dashed line indicates the \boinflower{} performance.}
\label{fig:adaptive_cost_gpt_all}
\end{figure}

\begin{figure}[h]
\centering
\begin{subfigure}{0.98\textwidth}
    \centering
    \includegraphics[width=\linewidth]{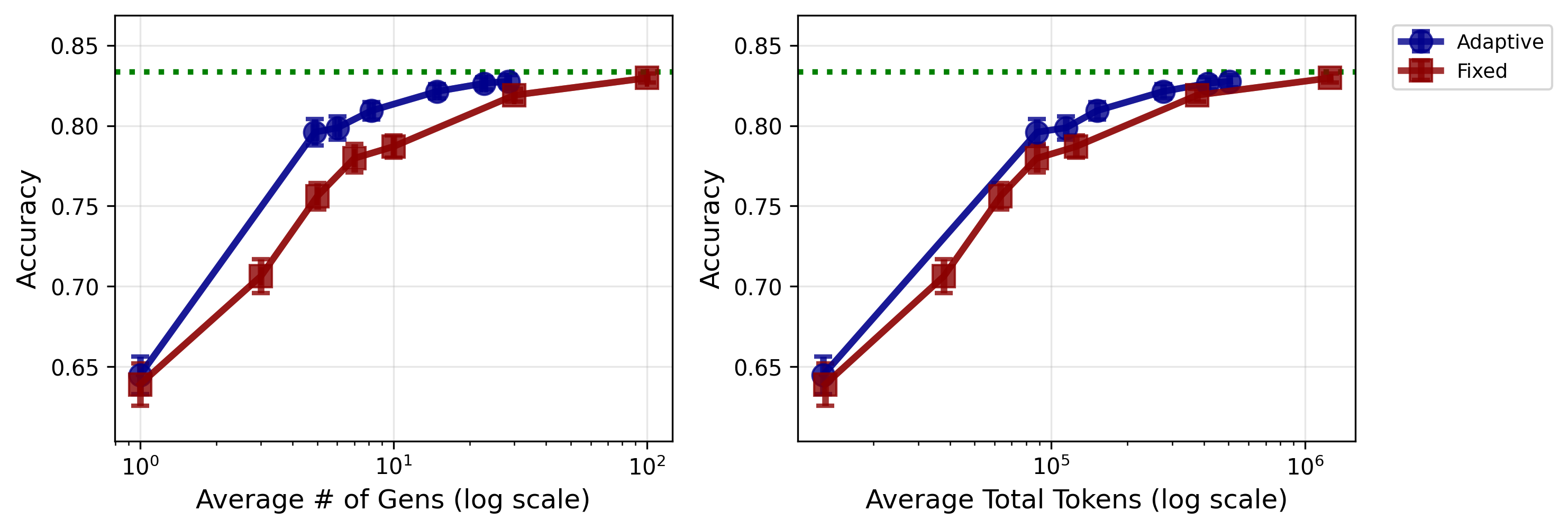}
    \caption{AIME2025}
\end{subfigure}

\begin{subfigure}{0.98\textwidth}
    \centering
    \includegraphics[width=\linewidth]{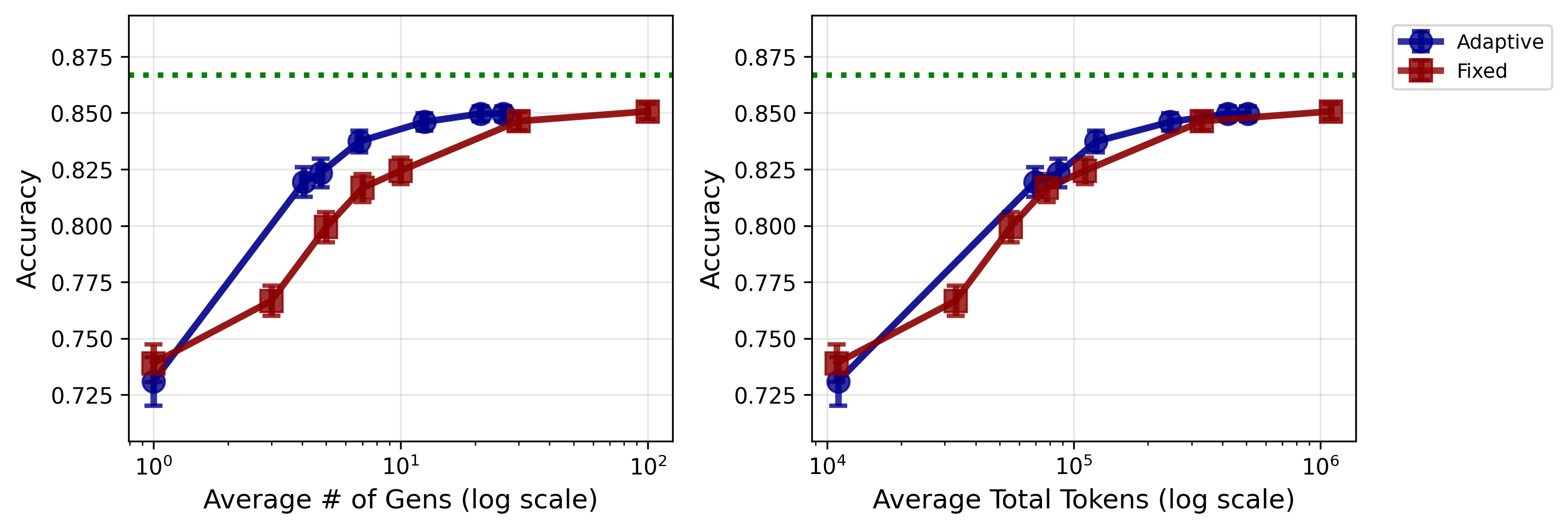}
    \caption{AIME2024}
\end{subfigure}

\begin{subfigure}{0.98\textwidth}
    \centering
    \includegraphics[width=\linewidth]{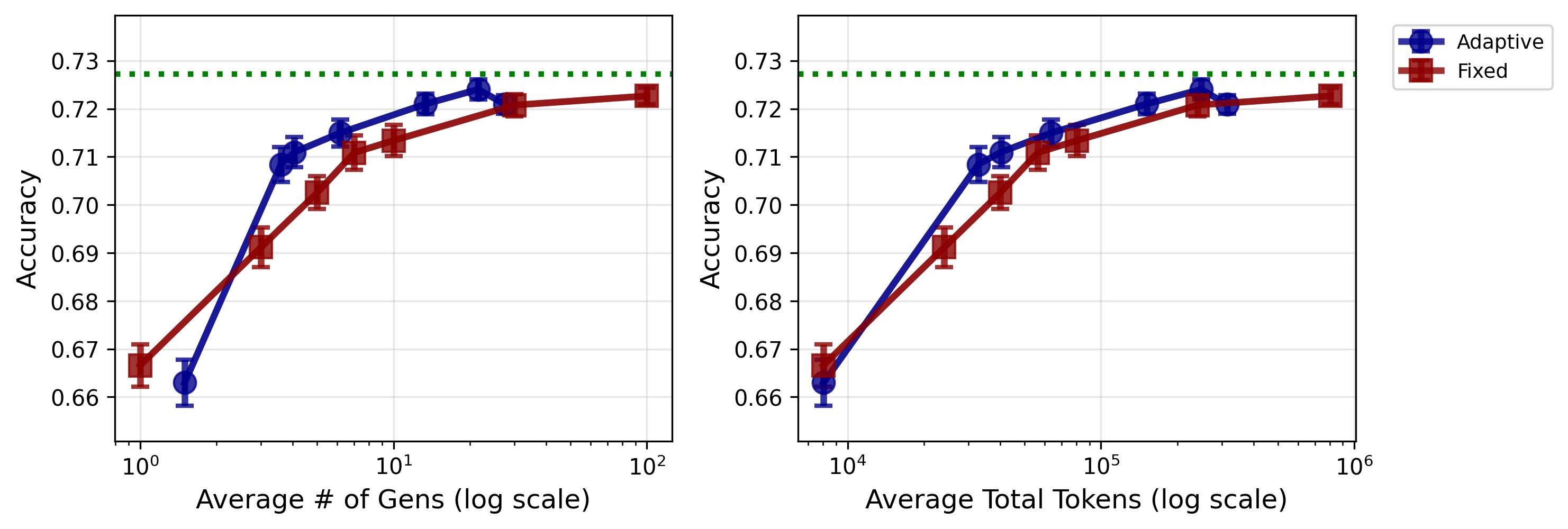}
    \caption{GPQA-Diamond}
\end{subfigure}

\begin{subfigure}{0.98\textwidth}
    \centering
    \includegraphics[width=\linewidth]{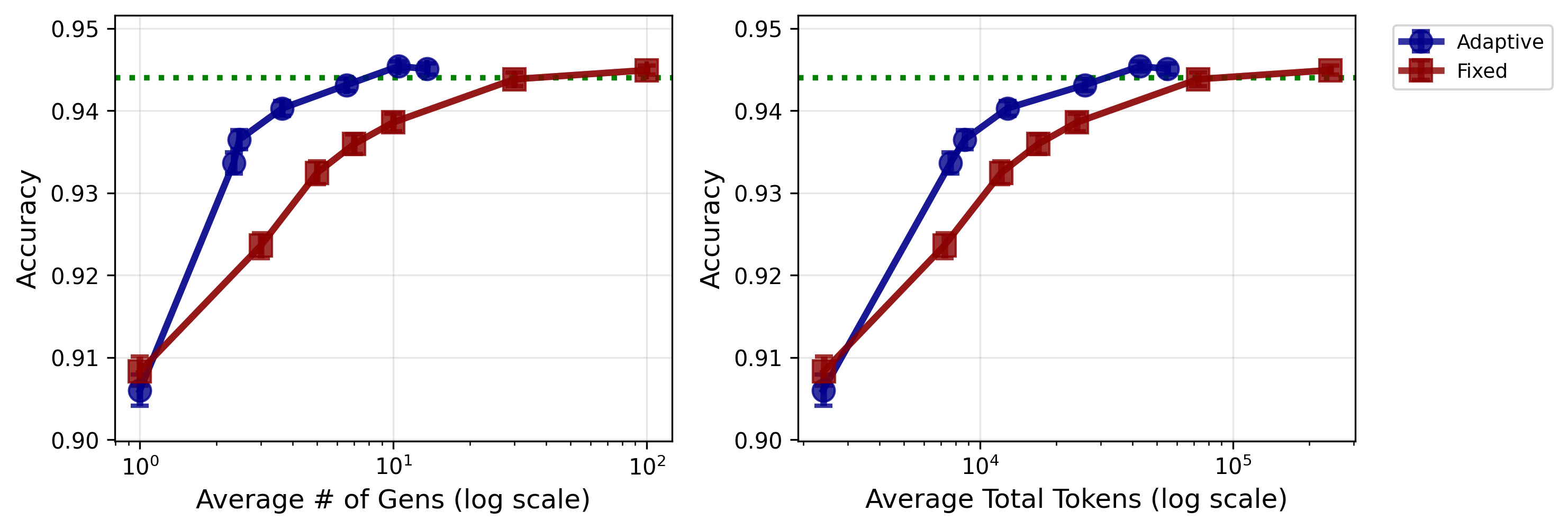}
    \caption{MATH500}
\end{subfigure}
\caption{Cost-analysis of our proposed method and fixed BoN for Phi-4-reasoning. The error bars are standard two-sigma confidence intervals. Green dashed line indicates the \boinflower{} performance.}
\label{fig:adaptive_cost_phi4_all}
\end{figure}

\begin{figure}[h]
\centering
\begin{subfigure}{0.98\textwidth}
    \centering
    \includegraphics[width=\linewidth]{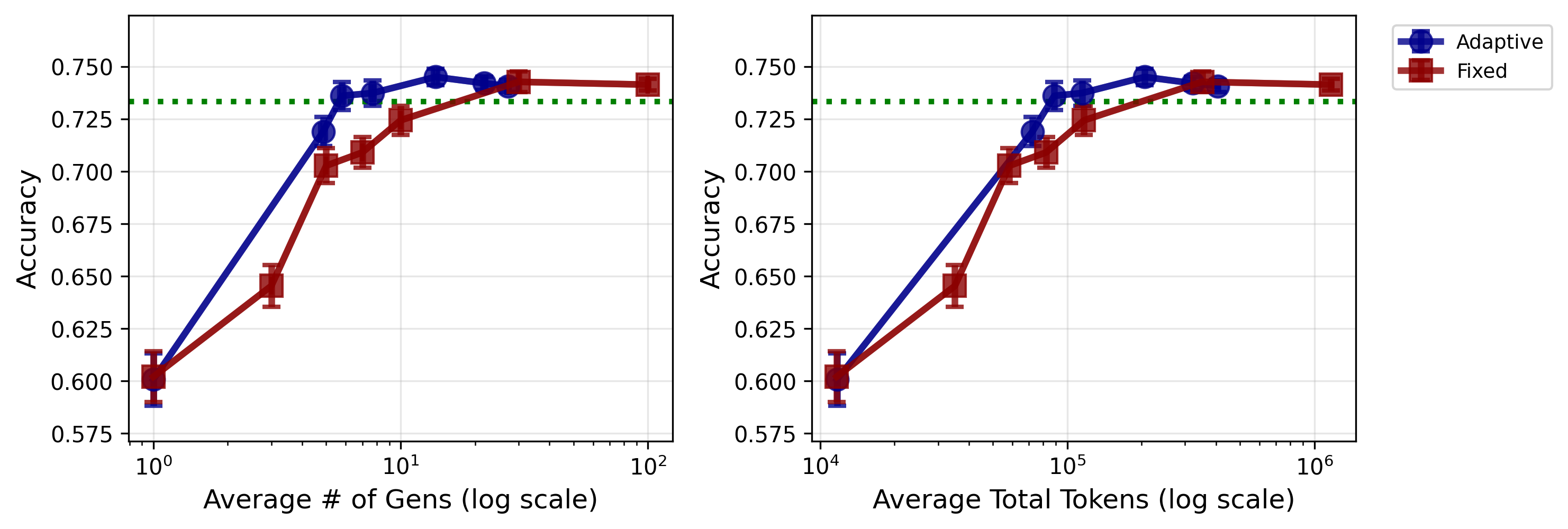}
    \caption{AIME2025}
\end{subfigure}

\begin{subfigure}{0.98\textwidth}
    \centering
    \includegraphics[width=\linewidth]{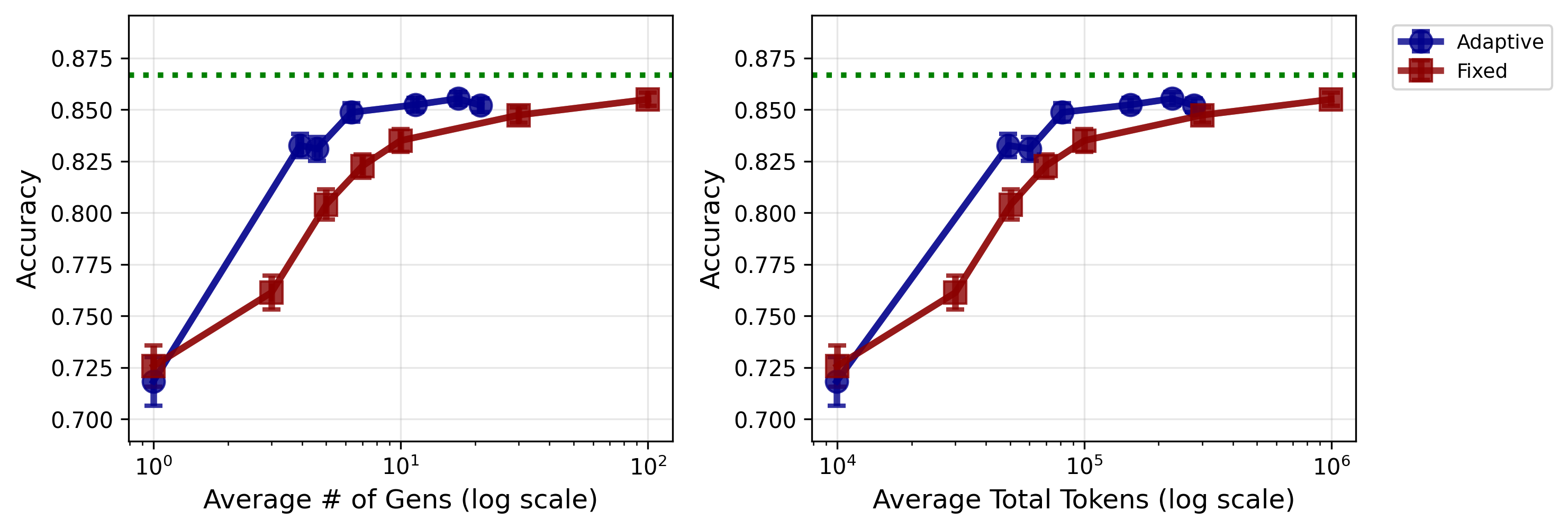}
    \caption{AIME2024}
\end{subfigure}

\begin{subfigure}{0.98\textwidth}
    \centering
    \includegraphics[width=\linewidth]{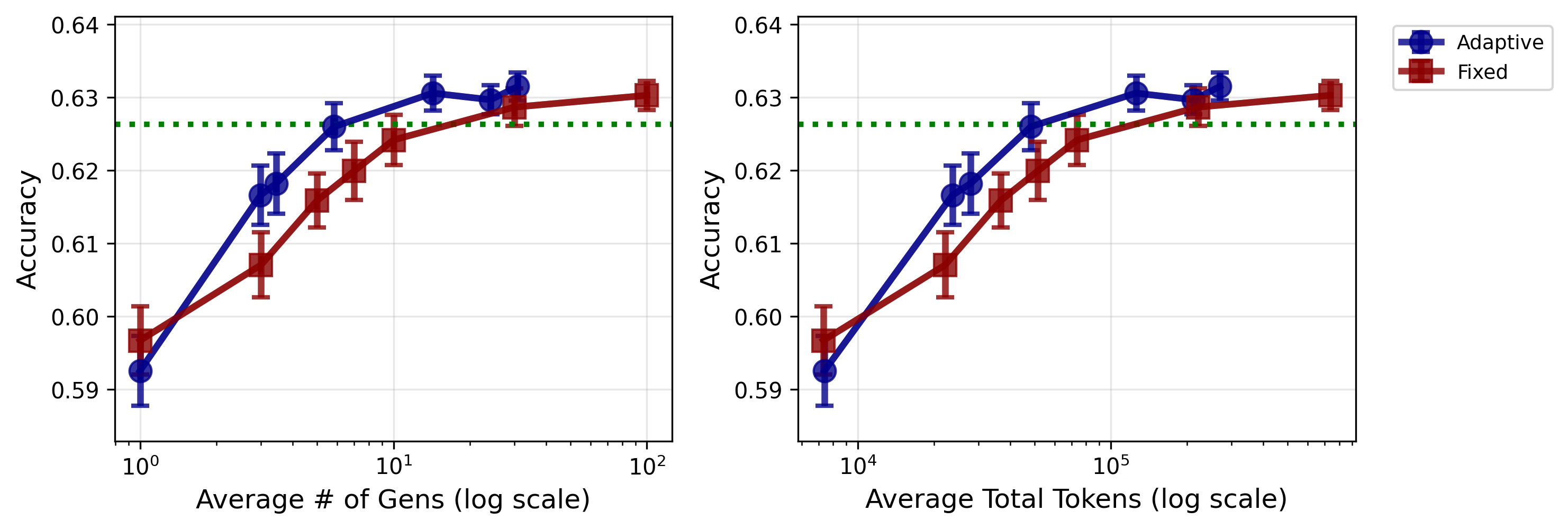}
    \caption{GPQA-Diamond}
\end{subfigure}

\begin{subfigure}{0.98\textwidth}
    \centering
    \includegraphics[width=\linewidth]{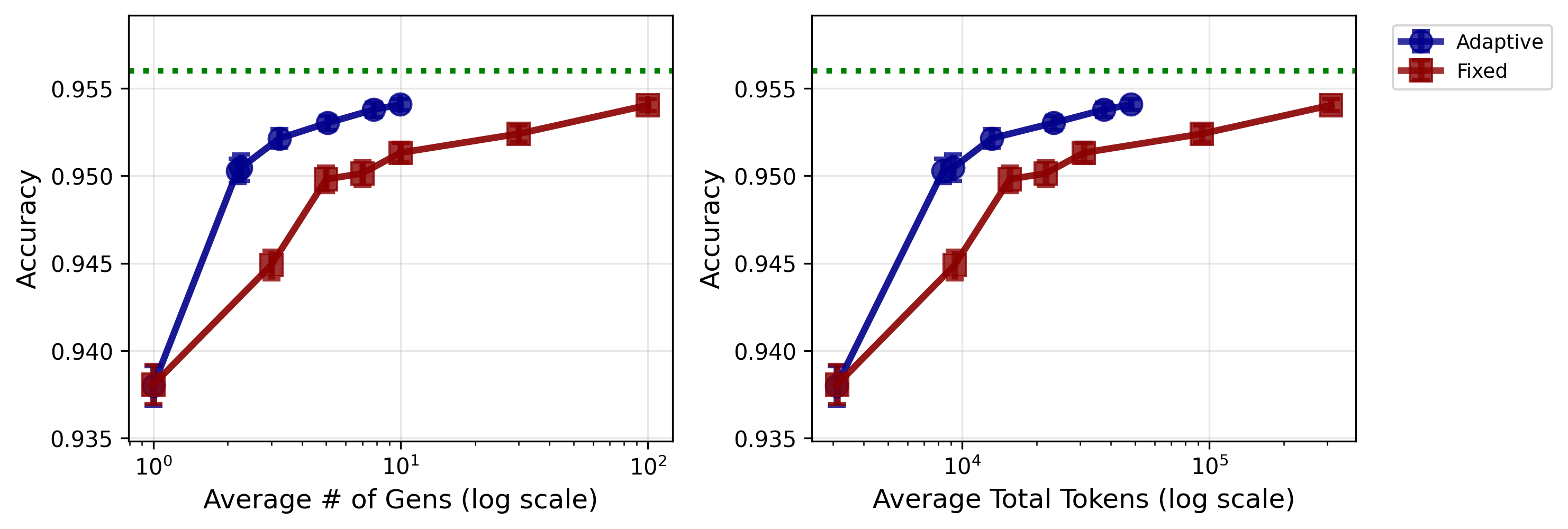}
    \caption{MATH500}
\end{subfigure}
\caption{Cost-analysis of our proposed method and fixed BoN for NVIDIA-Nemotron-Nano-9B-v2. The error bars are standard two-sigma confidence intervals. Green dashed line indicates the \boinflower{} performance.}
\label{fig:adaptive_cost_NVIDIA-Nemotron-Nano-9B-v2_all}
\end{figure}

\begin{figure}[h]
\centering
\begin{subfigure}{0.98\textwidth}
    \centering
    \includegraphics[width=\linewidth]{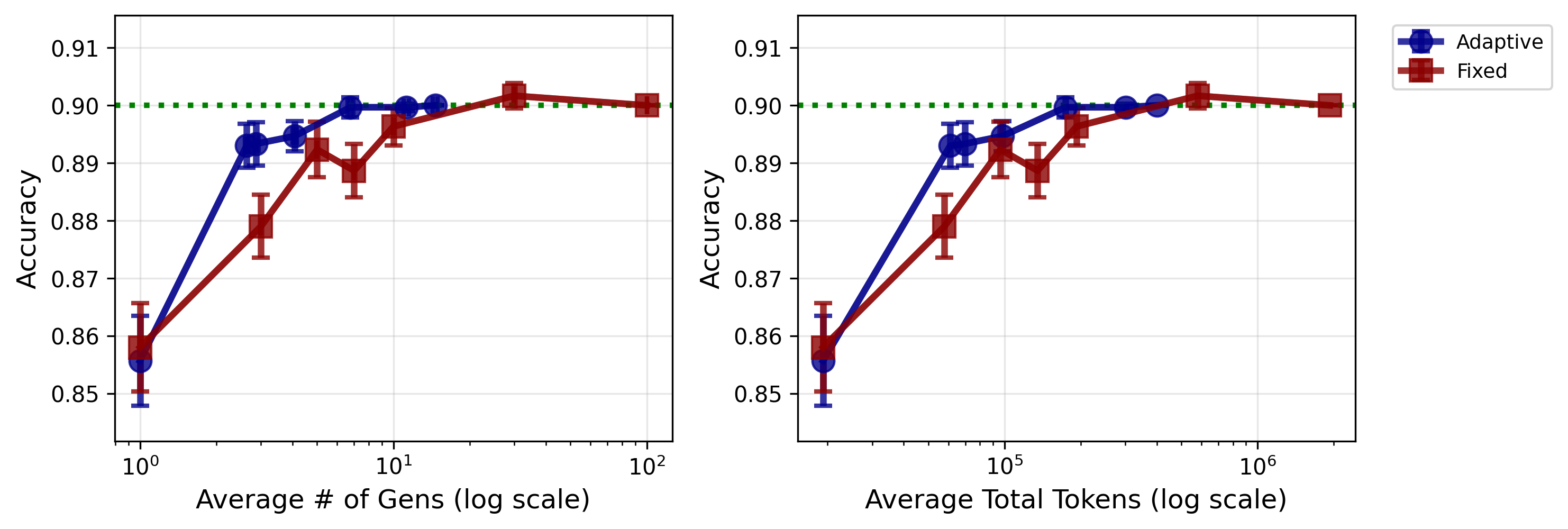}
    \caption{AIME2025}
\end{subfigure}

\begin{subfigure}{0.98\textwidth}
    \centering
    \includegraphics[width=\linewidth]{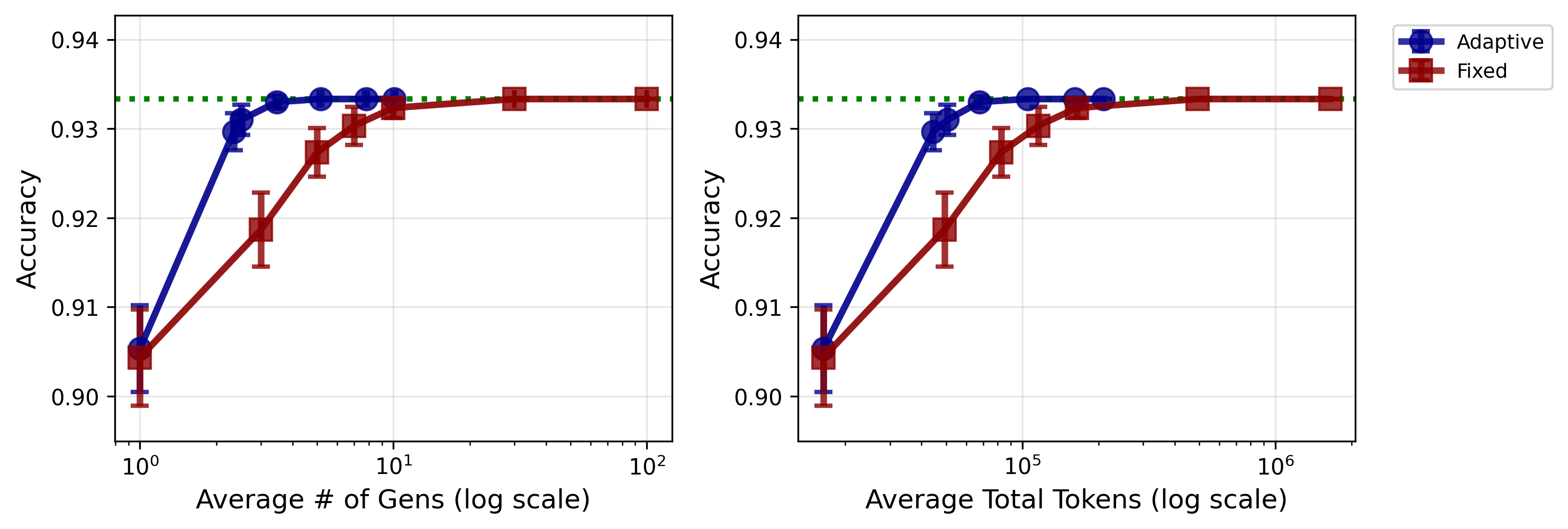}
    \caption{AIME2024}
\end{subfigure}

\begin{subfigure}{0.98\textwidth}
    \centering
    \includegraphics[width=\linewidth]{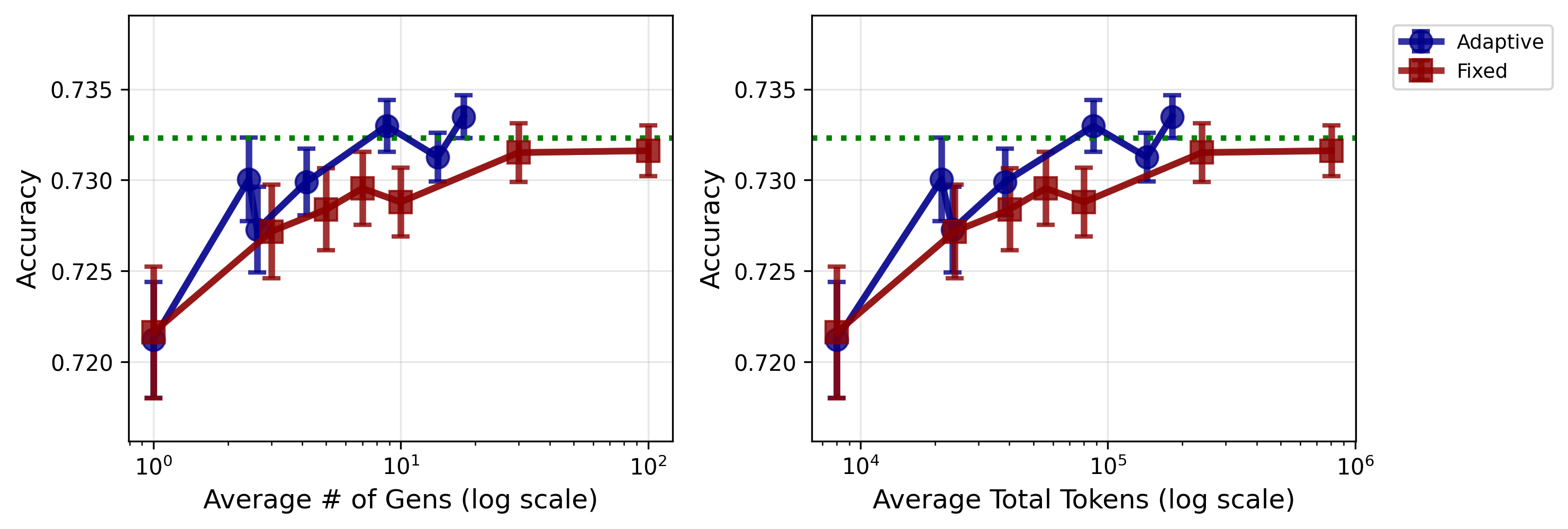}
    \caption{GPQA-Diamond}
\end{subfigure}

\begin{subfigure}{0.98\textwidth}
    \centering
    \includegraphics[width=\linewidth]{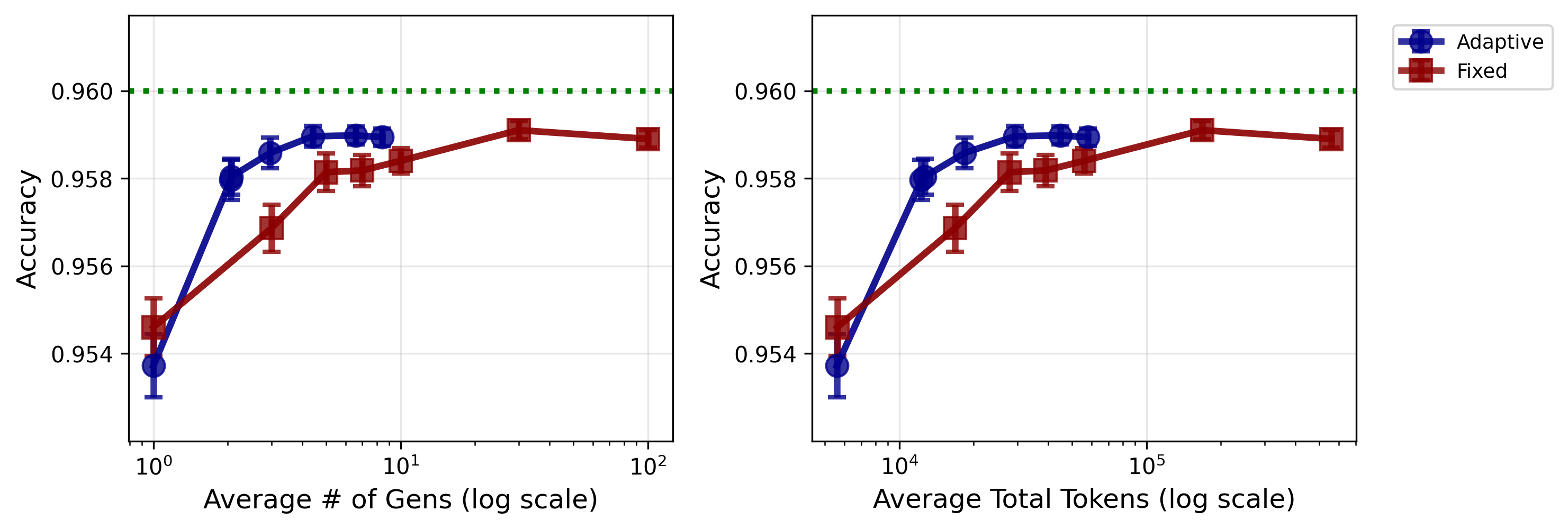}
    \caption{MATH500}
\end{subfigure}
\caption{Cost-analysis of our proposed method and fixed BoN for Qwen3-30B-A3B-Thinking-2507. The error bars are standard two-sigma confidence intervals. Green dashed line indicates the \boinflower{} performance.}
\label{fig:adaptive_cost_Qwen3-30B-A3B-Thinking-2507_all}
\end{figure}

\begin{figure}[h]
\centering
\begin{subfigure}{0.98\textwidth}
    \centering
    \includegraphics[width=\linewidth]{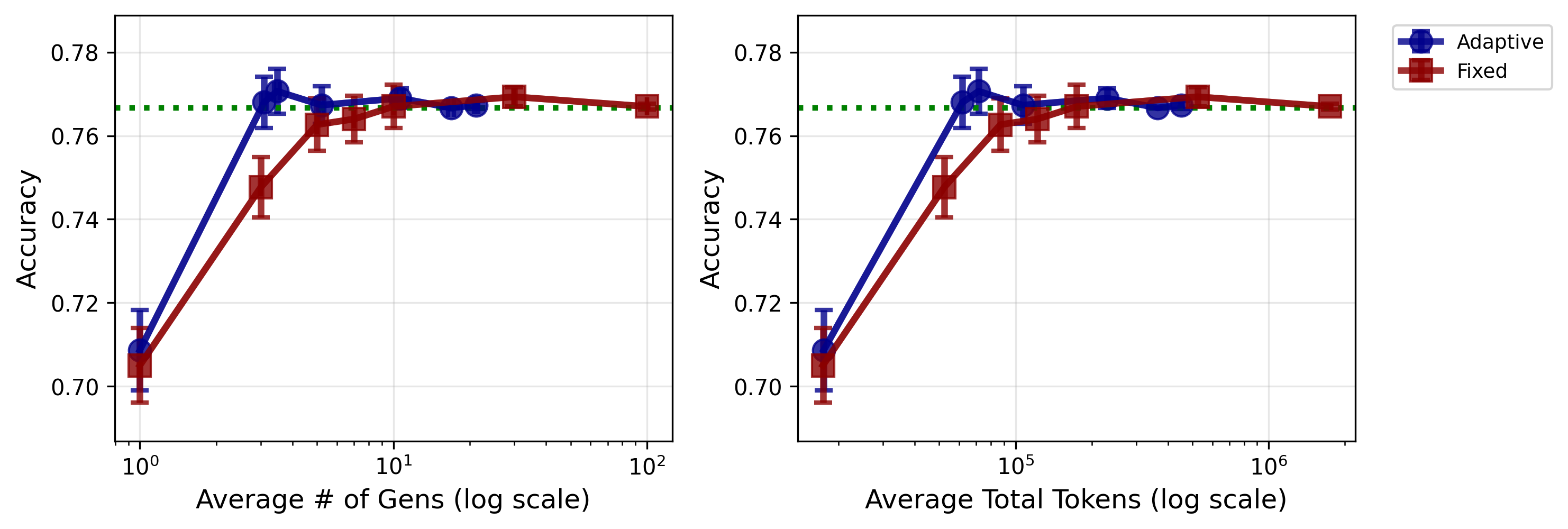}
    \caption{AIME2025}
\end{subfigure}

\begin{subfigure}{0.98\textwidth}
    \centering
    \includegraphics[width=\linewidth]{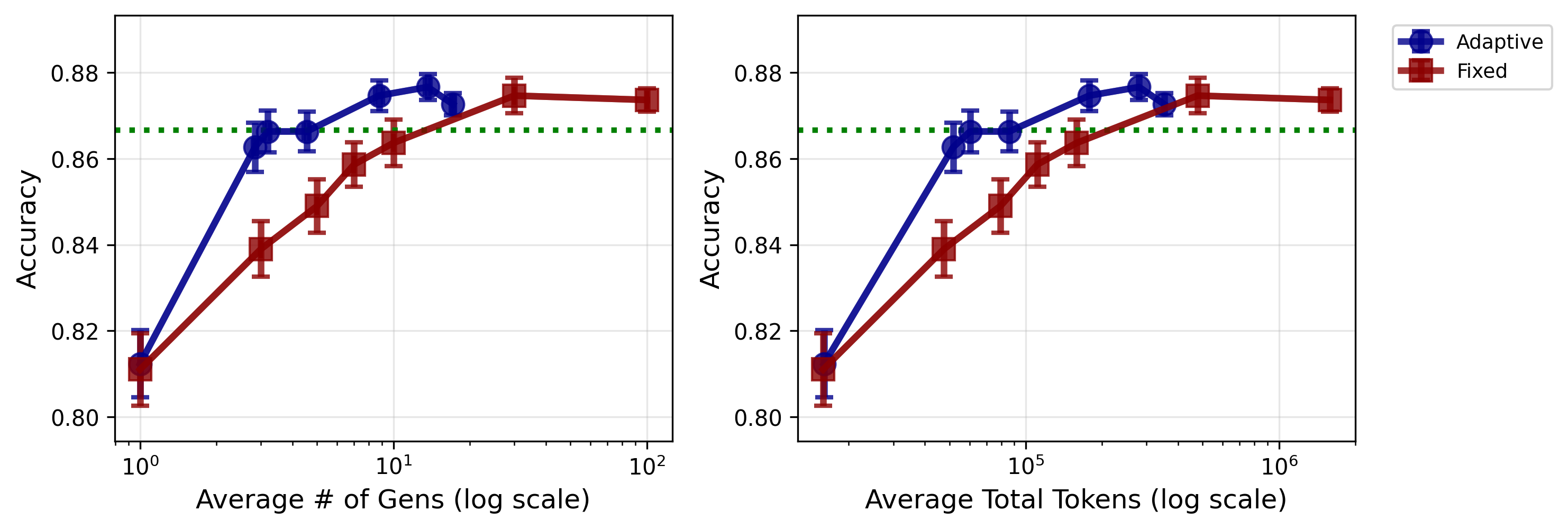}
    \caption{AIME2024}
\end{subfigure}

\begin{subfigure}{0.98\textwidth}
    \centering
    \includegraphics[width=\linewidth]{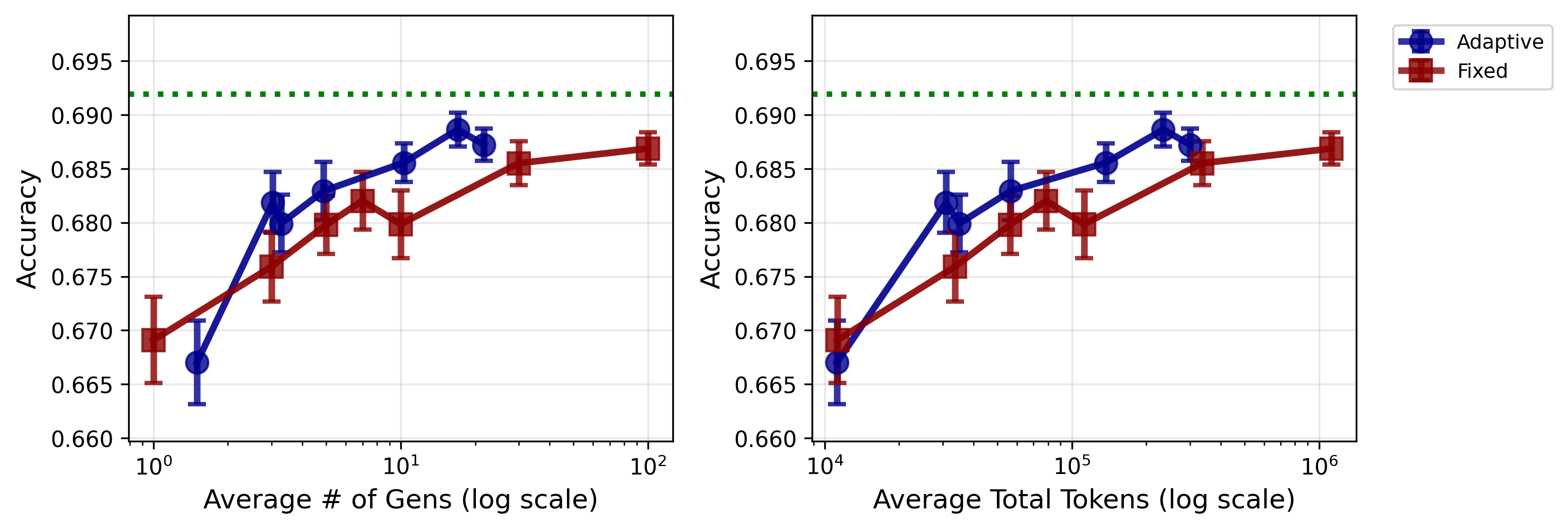}
    \caption{GPQA-Diamond}
\end{subfigure}

\begin{subfigure}{0.98\textwidth}
    \centering
    \includegraphics[width=\linewidth]{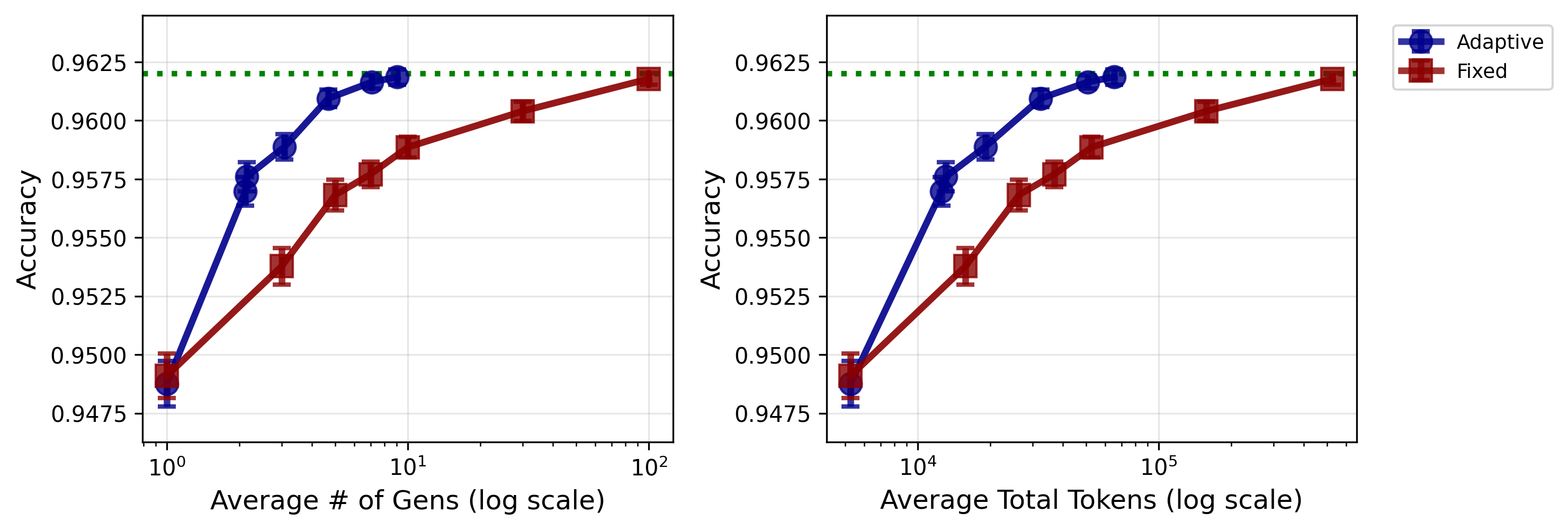}
    \caption{MATH500}
\end{subfigure}
\caption{Cost-analysis of our proposed method and fixed BoN for EXAONE-Deep-32B. The error bars are standard two-sigma confidence intervals. Green dashed line indicates the \boinflower{} performance.}
\label{fig:adaptive_cost_EXAONE-Deep-32B_all}
\end{figure}

\clearpage 

\subsection{\textsettwo{}}\label{subsec_ensemble_finite}

Figure \ref{fig:adaptive_cost_ensemble_appendix} demonstrates several more examples where the ensemble of LLMs outperforms the best single LLM. The weights are optimized by the MILP introduced in Section \ref{sec_llmensemble}. We used Algorithm \ref{alg:adaptive_sampling} to adaptively select and ask LLM for the answers.

\begin{figure}[h]

\begin{subfigure}{0.98\textwidth}
\centering
\includegraphics[width=0.98\textwidth]{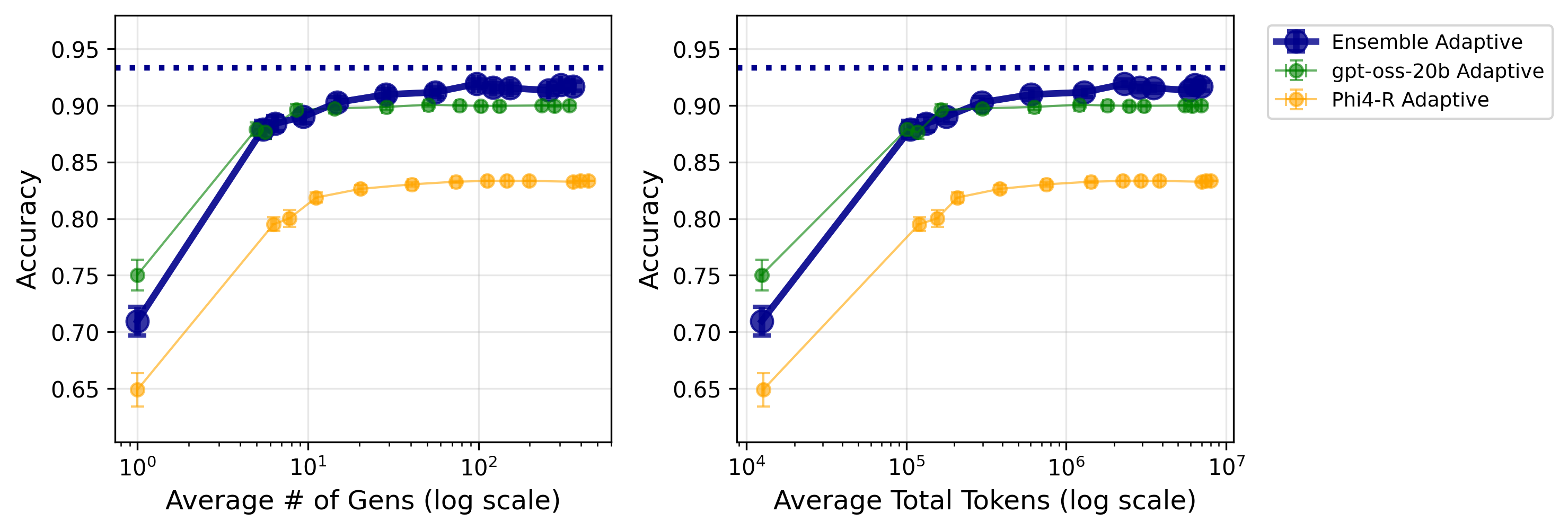}
\caption{Performance of a two-LLM ensemble. We used GPT-OSS-20B and Phi-4-reasoning on AIME2025. We tested with weight $w = (0.7, 0.3)$. The \boinflower{} performance of GPT-OSS-20B is 0.900 (90.0\%), whereas the ensemble's \boinflower{} performance is 0.933 (93.3\%).}
\end{subfigure}
\vspace{1em}
\begin{subfigure}{0.98\textwidth}
\centering
\includegraphics[width=\linewidth]{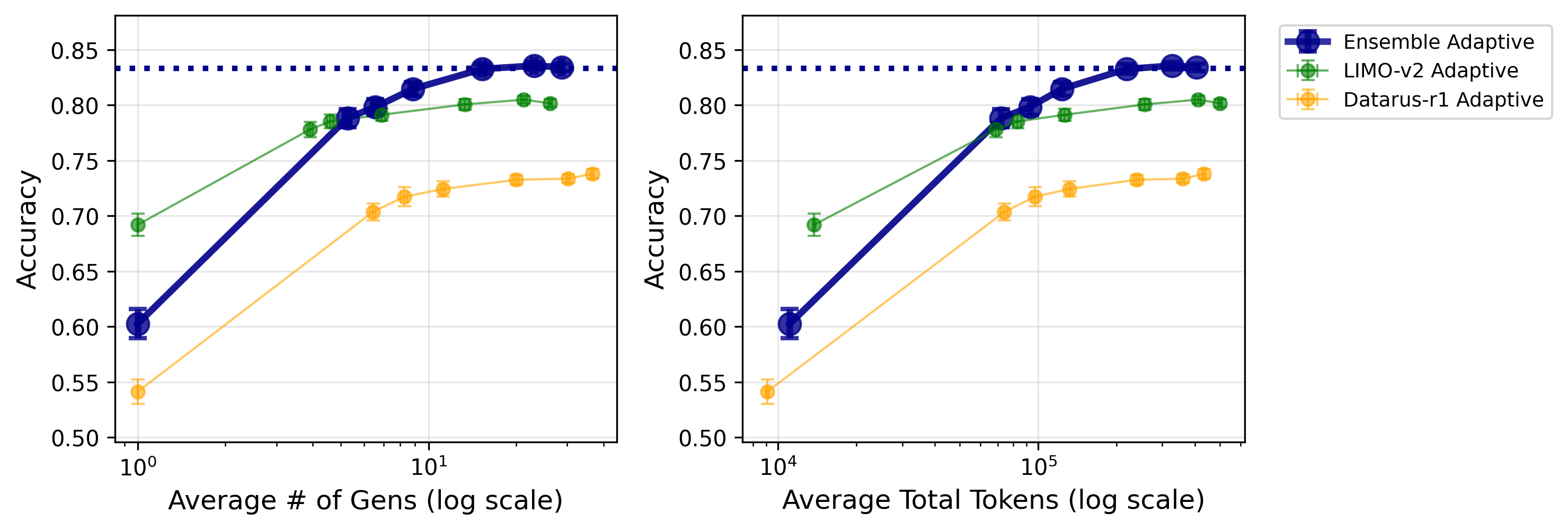}
\caption{Performance of two-LLM ensemble. We used LIMO-v2 and Datarus-R1-14B on AIME2024. The weight was optimized to $w = (0.4316, 0.5684)$.}
\end{subfigure}
\vspace{1em}

\vspace{1em}
\begin{subfigure}{0.98\textwidth}
\centering
\includegraphics[width=\linewidth]{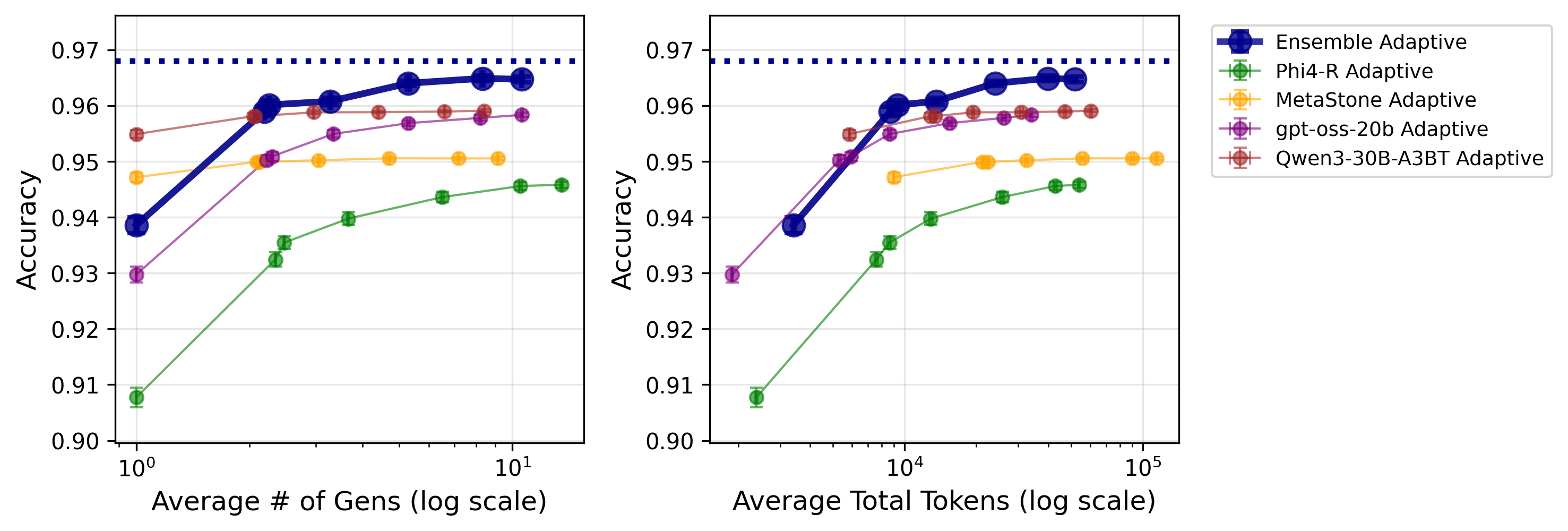}
\caption{Performance of four-LLM ensemble (MetaStone-S1-32B, Phi-4-reasoning, Qwen3-30B-A3B-Thinking-2507, and GPT-OSS-20B) on MATH500. The weight was optimized to $w = (0.0193, 0.0411, 0.3771, 0.5625)$.}
\end{subfigure}
\caption{Performance of LLM ensembles compared with single-LLM performance. We used Algorithm \ref{alg:adaptive_sampling} choosing the LLM. Blue dashed line indicates the \boinflower{} performance of the LLM ensemble.}
\label{fig:adaptive_cost_ensemble_appendix}
\end{figure}

\clearpage

\subsection{\textsetthree{}}

Figure \ref{fig:training_size_appendix} shows several additional examples of sample efficiency of learning the optimal weights in LLM ensembles. 
Dashed lines are the \boinflower{} performance of the individual LLMs. One can see that, with a small number of gold answers, the learned weights can outperform the best single LLM.

\begin{figure}[h]
\begin{subfigure}{0.98\textwidth}
\centering
\includegraphics[width=0.7\textwidth]{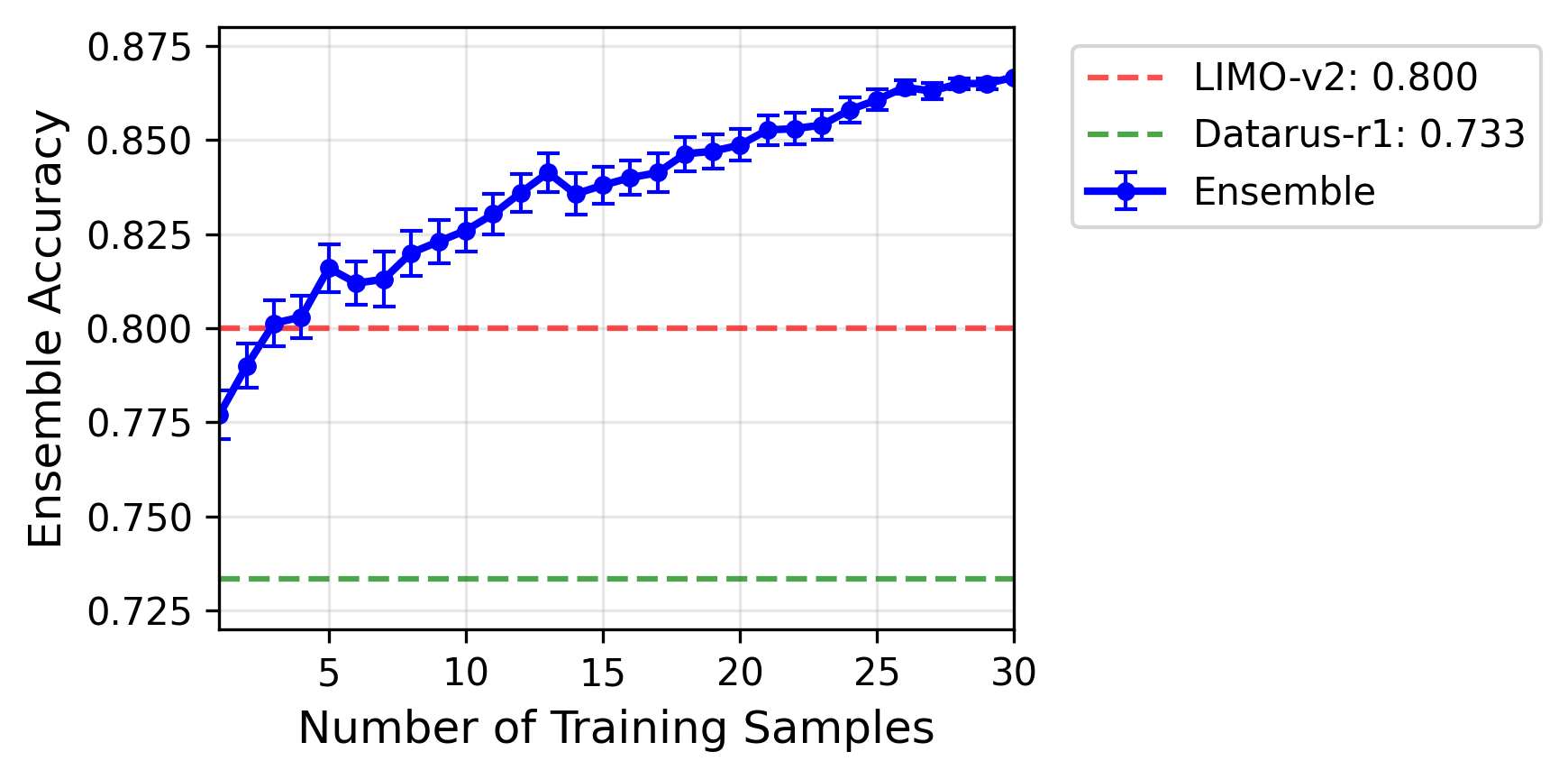}
\caption{The mixture of LIMO-v2 and Datarus-R1-14B on AIME2024. Note that the \boinflower{} performance of the two base LLMs is exactly the same and thus overlaps in the figure.}
\end{subfigure}
\begin{subfigure}{0.98\textwidth}
\centering
\includegraphics[width=0.7\textwidth]{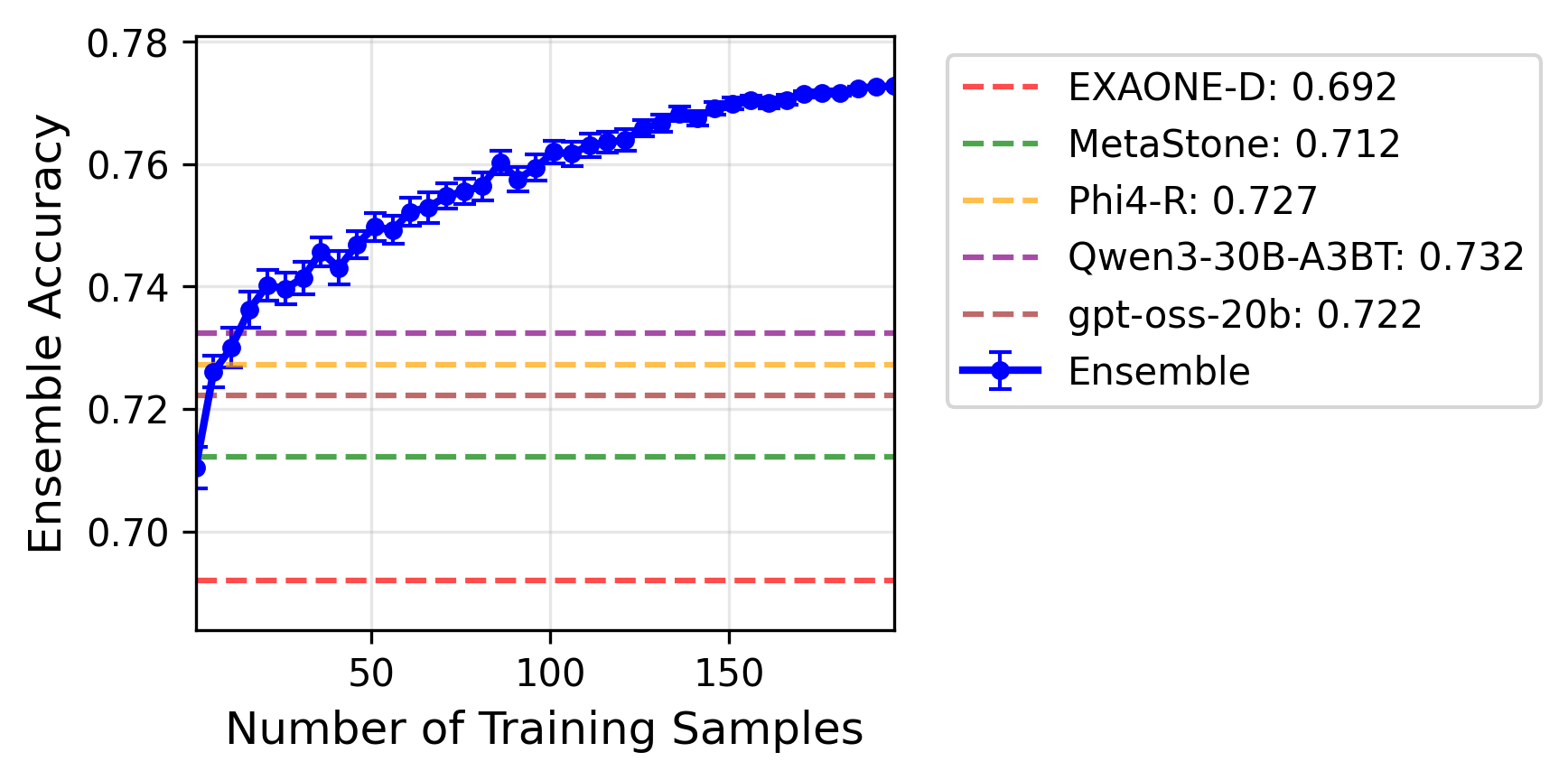}
\caption{The mixture of Phi-4-reasoning, Qwen3-30B-A3B-Thinking-2507, and GPT-OSS-20B on GPQA-Diamond. }
\end{subfigure}
\begin{subfigure}{0.98\textwidth}
\centering
\includegraphics[width=0.7\textwidth]{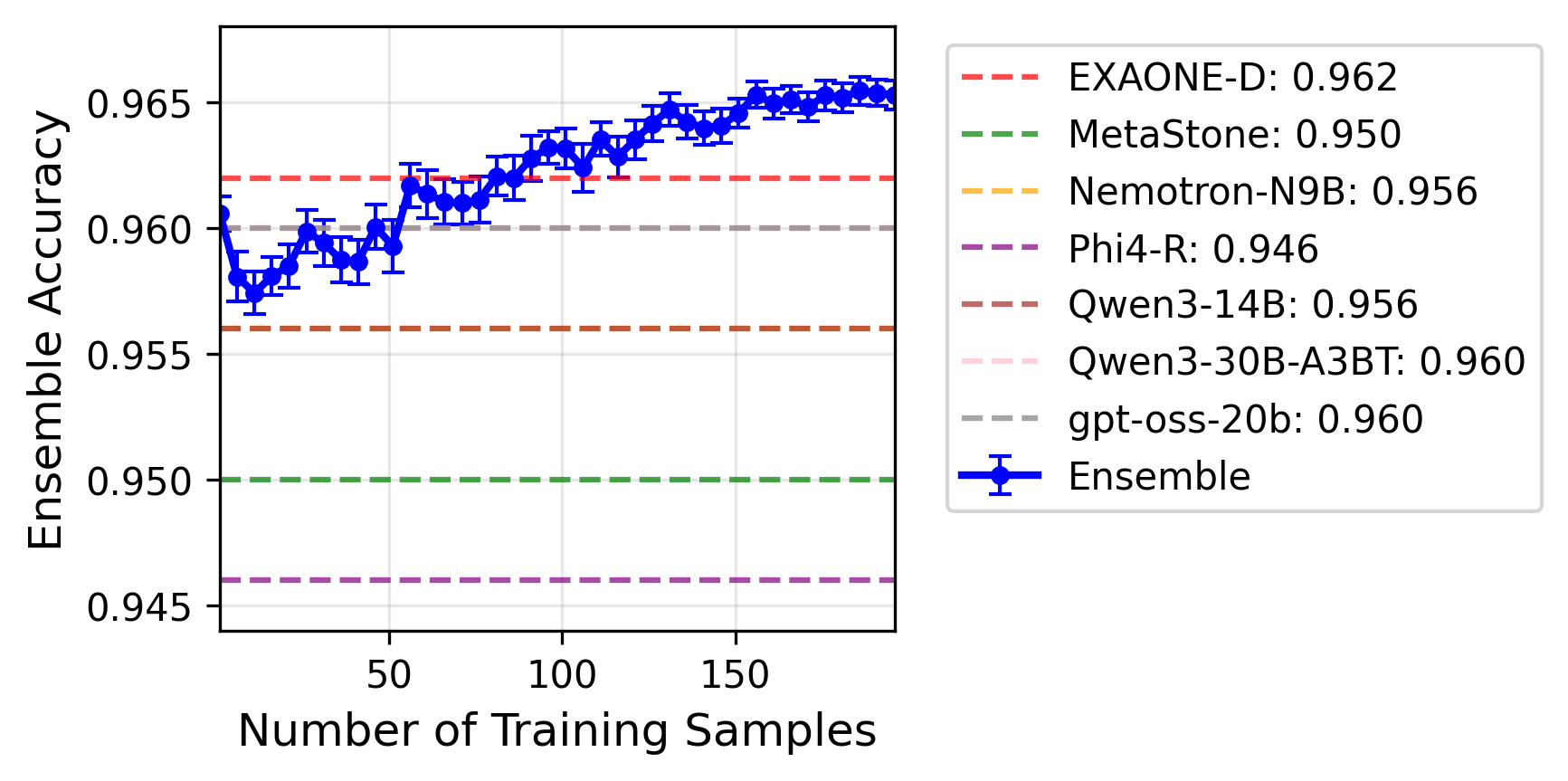}
\caption{The mixture of seven LLMS on MATH500. }
\end{subfigure}
\caption{The training of weights in an LLM ensemble. We show the number of samples to determine the weight (x-axis) versus the \boinflower{} performance (y-axis, i.e., performance of \boinflower{} with the weight). The $x$-axis indicates the number of problems used to learn the weight and the $y$-axis indicates the \boinflower{} performance. The score is averaged over 100 runs.}
\label{fig:training_size_appendix}
\end{figure}

\clearpage 
\subsection{\textsetfour{}}

We do not have additional experiments for this set of experiments.

\subsection{\textsetfive{}}\label{subsec_reward_models}

This section reports our comparison of majority voting with other aggregation methods. This appendix section complements Table \ref{tab:reward_gptoss_aime2025} of main paper by providing additional experimental results on AIME2025 and other LLMs, as well as more details on the compared methods.
The results are shown in Table \ref{tab:reward_gptoss_aime2025}. The compared methods are as follows:
\begin{itemize}
\item Omniscient is the hypothestical selection method that can always select the correct answer if it is included in the candidates, which is infeasible unless we know the gold answer. By definition, this is the best possible performance of any selection method.
\item Majority voting is the method that selects the most frequent answer among the candidates. Ties are broken randomly.
\item LLM-as-a-judge is the answer selection method that uses the target LLM itself to select the best answer among the candidates. Since the concatination of the all answers can exceed the context length, we extracted the last 5,000 characters before the \texttt{</think>} tag of the answers for each answer.\footnote{For an answer without \texttt{</think>} tag, we used the final 5,000 characters. We also tested an alternative method that asks LLM to summarize its own answer before the comparison, which, in our preliminary analysis, did not outperform the proposed method.} To avoid uninterpretable answer, we ask the LLM twice, which slightly increased the accuracy. There are two variants: (tournament) compares the answers pairwise and selects the best one, and (set) compares all answers at once and selects the best one.
\item INF-ORM-Llama3.1-70 is one of the state-of-the-art reward model \citep{INF-ORM-Llama3.1-70B}, which marked the 9th in the RewardBench leaderboard as of September 8 2025. 
\item Skywork-Reward-V2-Llama-3.1-8B and Skywork-Reward-V2-Qwen3-8B are two of the state-of-the-art reward model \citep{liu2024skywork}, which marked the 1st and the 6th in the RewardBench leaderboard as of September 8 2025.
\item Self-certainty is the method that selects the answer with the highest self-certainty score \citep{zhao2025learningreasonexternalrewards}, which measures intrinsic confidence by how the likelihood differs from the uniform distribution per token. Note that we used the sequence average of self-certainty. Very recently, \citep{fu2025deepthinkconfidence} introduced a version of self-certainty that weights more on the latter part of the sequence, which we have not tested and may improve the performance.
\item Random is the model that randomly selects one of the candidates, whose performance should be close to the accuracy of a Bo1. 
\end{itemize}
We use the same set of answers for comparing these selection methods, which reduces the variance due to the randomness in answer generation. Table \ref{tab:reward_gptoss_all}, Table \ref{tab:reward_phifour_aime2024_2025}, and Table \ref{tab:reward_qwen3_aime2024_2025} show the comparison of these methods on GPT-OSS-20B, Phi-4-reasoning, and Qwen3-30B-A3B-Thinking-2507, respectively. The results are consistent with the main experiments in the paper. All results are Bo5 settings.

\clearpage

\begin{table}[htbp]
\centering
\caption{The accuracy of several selection methods on the best-of-five (Bo5) setting across three datasets (AIME2024, MATH500, GPQA-Diamond). Answers are generated by GPT-OSS-20B. The scores are averaged over 16 trials and we report the two-sigma confidence intervals.}
\begin{tabular}{lrrr}
\toprule
Method & AIME2024  & GPQA-Diamond  & MATH500  \\
\midrule
Omniscient & 91.25 $\pm$ 1.03 & 85.98 $\pm$ 1.19 & 95.56 $\pm$ 0.23 \\
Majority voting & 88.12 $\pm$ 1.49 & 70.07 $\pm$ 2.02 & 95.31 $\pm$ 0.17 \\
LLM-as-a-judge (set) & 85.42 $\pm$ 1.48 & 69.14 $\pm$ 1.60 & 94.31 $\pm$ 0.28 \\
LLM-as-a-judge (tournament) & -- & 70.22 $\pm$ 1.96 & -- \\
INF-ORM-Llama3.1-70B & 85.42 $\pm$ 2.18 & 68.38 $\pm$ 1.84 & 94.21 $\pm$ 0.29 \\
Skywork-Reward-V2-Llama-3.1-8B & 85.42 $\pm$ 2.10 & 68.13 $\pm$ 1.95 & -- \\
Skywork-Reward-V2-Qwen3-8B & -- & 68.42 $\pm$ 1.93 & -- \\
Self-certainty & 81.67 $\pm$ 2.98 & 67.65 $\pm$ 1.38 & 93.50 $\pm$ 0.47 \\
Random ($\approx$ Bo1) & 79.17 $\pm$ 2.89 & 67.65 $\pm$ 1.38 & 93.91 $\pm$ 0.40 \\
\bottomrule
\end{tabular}
\label{tab:reward_gptoss_all}
\end{table}

\begin{table}[htbp]
\centering
\caption{The accuracy of several selection methods on the best-of-five (Bo5) setting on the AIME2025 and AIME2024 datasets. Answers are generated by Phi-4-reasoning. Scores are averaged over $16$ trials and we report the two-sigma confidence intervals.}
\begin{tabular}{lrr}
\toprule
Method & AIME2025 & AIME2024 \\
\midrule
Omniscient & 85.00 $\pm$ 1.72 & 85.21 $\pm$ 1.21 \\
Majority voting & 76.67 $\pm$ 2.58 & 80.00 $\pm$ 1.72 \\
LLM-as-a-judge (set) & 72.92 $\pm$ 3.10 & 80.42 $\pm$ 1.81 \\
INF-ORM-Llama3.1-70B & 70.42 $\pm$ 2.78 & 78.54 $\pm$ 2.51 \\
Skywork-Reward-V2-Qwen3-8B & 70.62 $\pm$ 2.87 & 77.29 $\pm$ 2.60 \\
Self-certainty & 63.12 $\pm$ 3.36 & 73.54 $\pm$ 2.31 \\
Random ($\approx$ Bo1) & 63.96 $\pm$ 2.45 & 73.54 $\pm$ 2.31 \\
\bottomrule
\end{tabular}
\label{tab:reward_phifour_aime2024_2025}
\end{table}

\begin{table}[htbp]
\centering
\caption{The accuracy of several selection methods on the best-of-five (Bo5) setting on the AIME2025 and AIME2024 datasets. Answers are generated by Qwen3-30B-A3B-Thinking-2507. Scores are averaged over $16$ trials and we report the two-sigma confidence intervals.}
\begin{tabular}{lrr}
\toprule
Method & AIME2025 & AIME2024 \\
\midrule
Omniscient & 92.71 $\pm$ 1.09 & 93.54 $\pm$ 0.74 \\
Majority voting & 88.75 $\pm$ 1.20 & 92.92 $\pm$ 0.57 \\
LLM-as-a-judge (set) & 88.13 $\pm$ 1.49 & 92.29 $\pm$ 0.80 \\
LLM-as-a-judge (tournament) & 87.50 $\pm$ 1.29 & 91.25 $\pm$ 1.48 \\
INF-ORM-Llama3.1-70B & 89.38 $\pm$ 1.09 & 92.29 $\pm$ 1.00 \\
Skywork-Reward-V2-Qwen3-8B & 89.38 $\pm$ 1.09 & 92.71 $\pm$ 0.67 \\
Self-certainty & 87.50 $\pm$ 2.06 & 91.25 $\pm$ 1.20 \\
Random ($\approx$ Bo1) & 86.04 $\pm$ 2.04 & 90.00 $\pm$ 1.36 \\
\bottomrule
\end{tabular}
\label{tab:reward_qwen3_aime2024_2025}
\end{table}

\subsection{Comparison with Beta stopping}

\updated{
In this section, we compare our proposed adaptive sampling algorithm (Algorithm \ref{alg:adaptive_sampling}) with the Beta stopping method introduced by \cite{aggarwal-etal-2023-lets}, which is a state-of-the-art adaptive sampling method for best-of-$N$ (BoN) in LLMs. The Beta stopping method uses a Bayesian approach to determine when to stop sampling based on the posterior distribution of the majority and the second majority. Such a posterior projected to two-dimensional space follows a Beta distribution, and they uses the posterior probability such that the second majority exceeds the majority to decide whether to stop or not. Key findings are twofold. First, our proposed method dominates in the sense that our method is always as good as the Beta stopping. Second, there are several case where our method outperforms the Beta stopping. Results for GPT-OSS-20B, Phi-4-reasoning, EXAONE-Deep-32B are shown in Figure \ref{fig:adaptive_further_cost_gpt_all}, Figure \ref{fig:adaptive_further_cost_phi4_all}, and Figure \ref{fig:adaptive_further_cost_exaone_all}, respectively.
}

\begin{figure}[h]
\centering
\begin{subfigure}{0.98\textwidth}
    \centering
    \includegraphics[width=\linewidth]{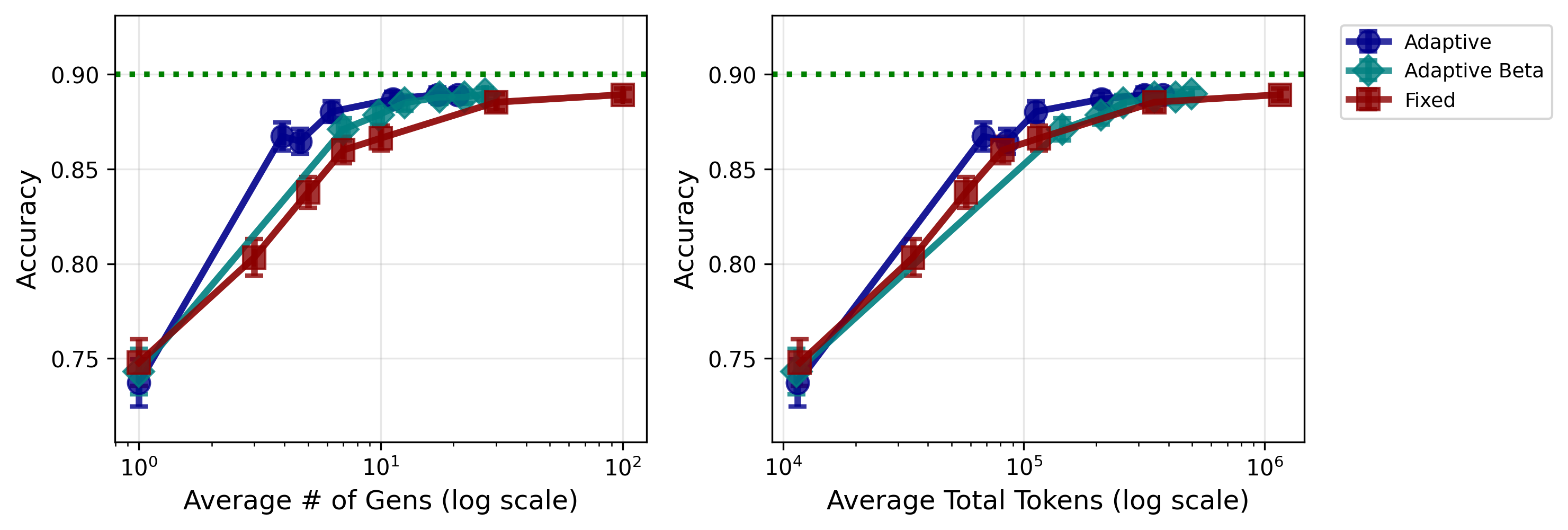}
    \caption{AIME2025}
\end{subfigure}

\begin{subfigure}{0.98\textwidth}
    \centering
    \includegraphics[width=\linewidth]{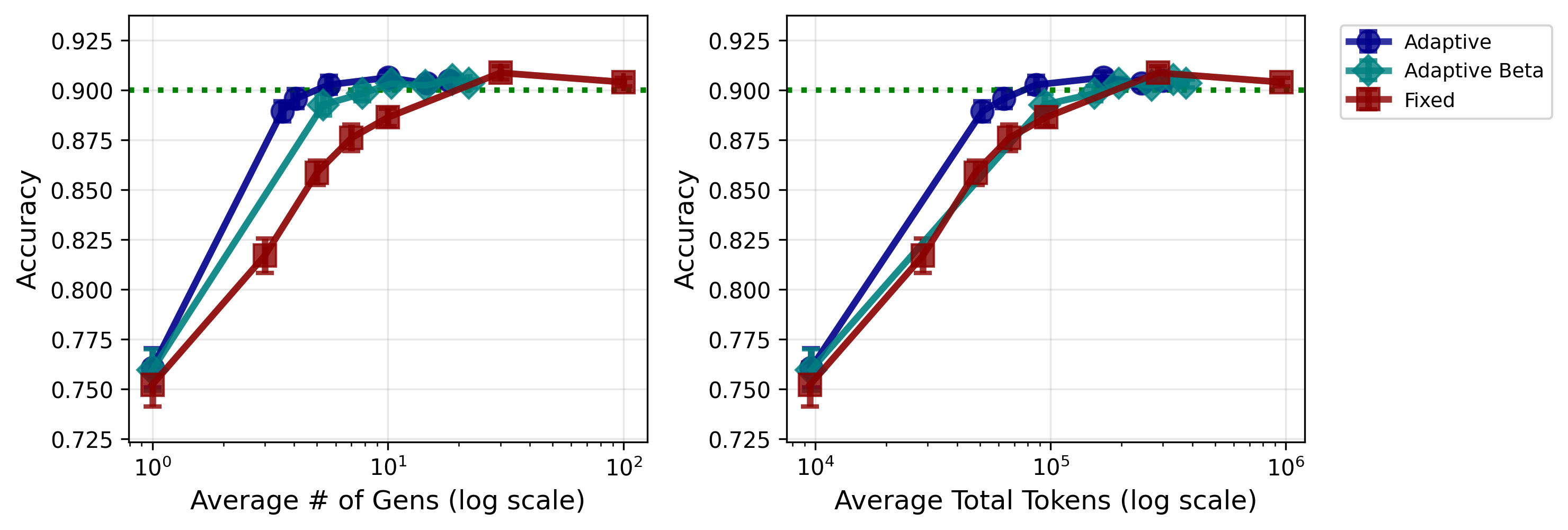}
    \caption{AIME2024}
\end{subfigure}

\begin{subfigure}{0.98\textwidth}
    \centering
    \includegraphics[width=\linewidth]{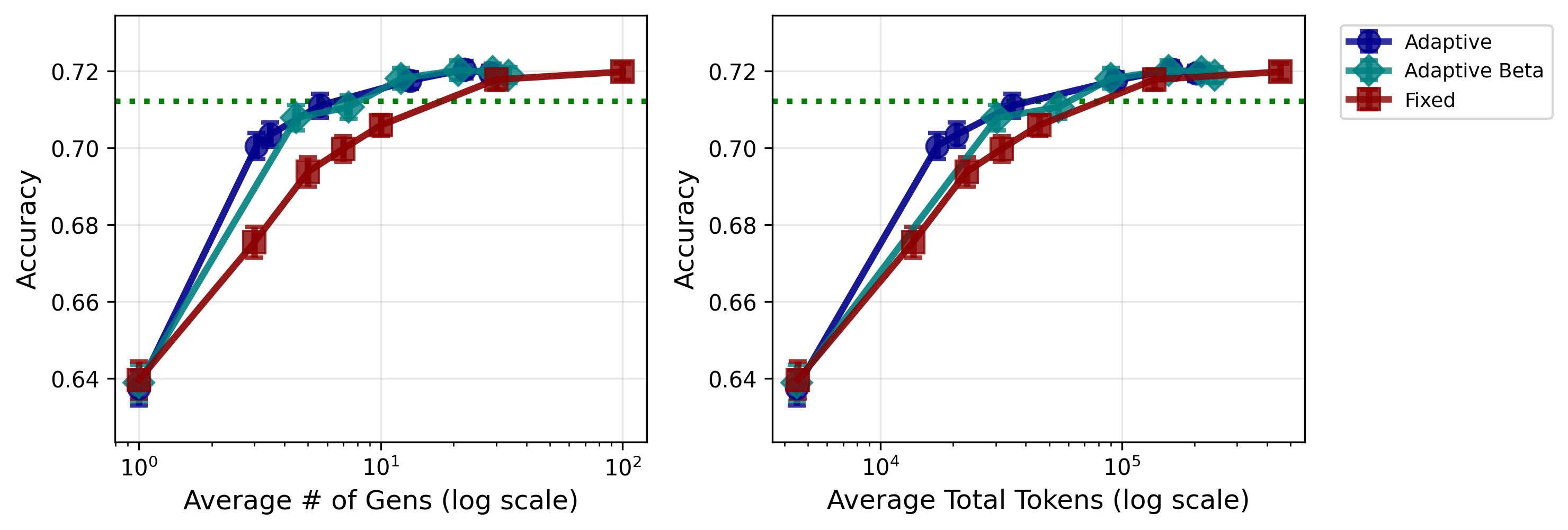}
    \caption{GPQA-Diamond}
\end{subfigure}

\begin{subfigure}{0.98\textwidth}
    \centering
    \includegraphics[width=\linewidth]{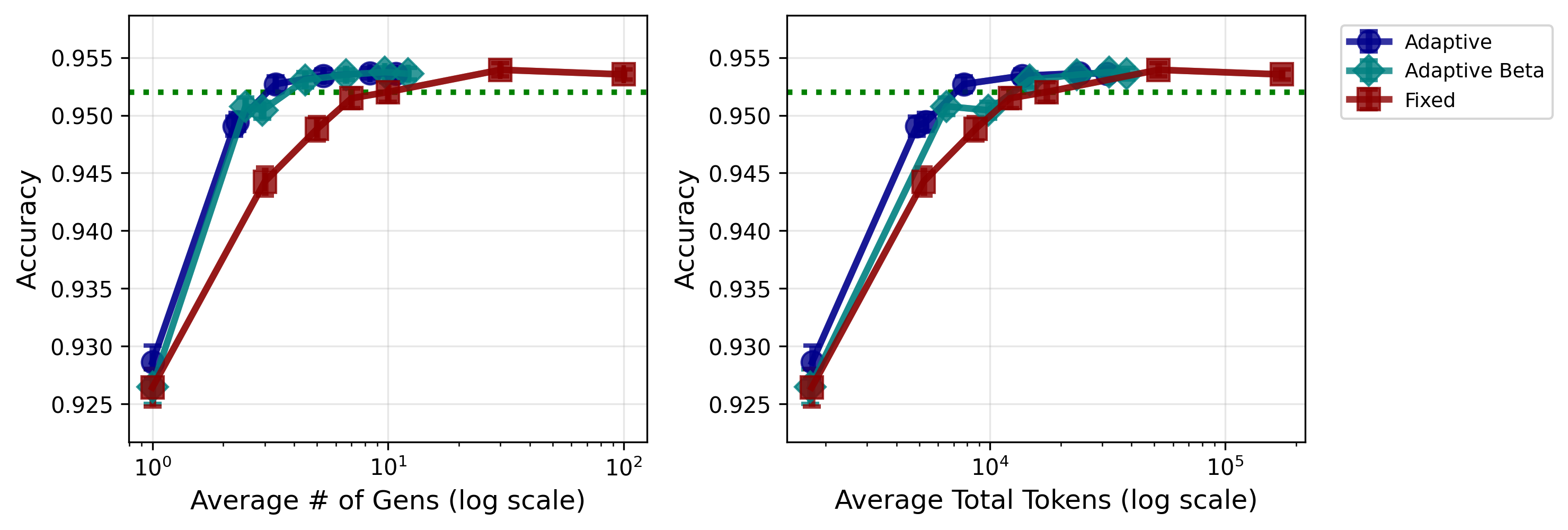}
    \caption{MATH500}
\end{subfigure}
\caption{\updated{Cost-analysis of our proposed method (``adaptive''), Beta stopping \citep{aggarwal-etal-2023-lets}, and fixed BoN for GPT-OSS-20B. The error bars are standard two-sigma confidence intervals. Green dashed line indicates the \boinflower{} performance.}}
\label{fig:adaptive_further_cost_gpt_all}
\end{figure}

\begin{figure}[h]
\centering
\begin{subfigure}{0.98\textwidth}
    \centering
    \includegraphics[width=\linewidth]{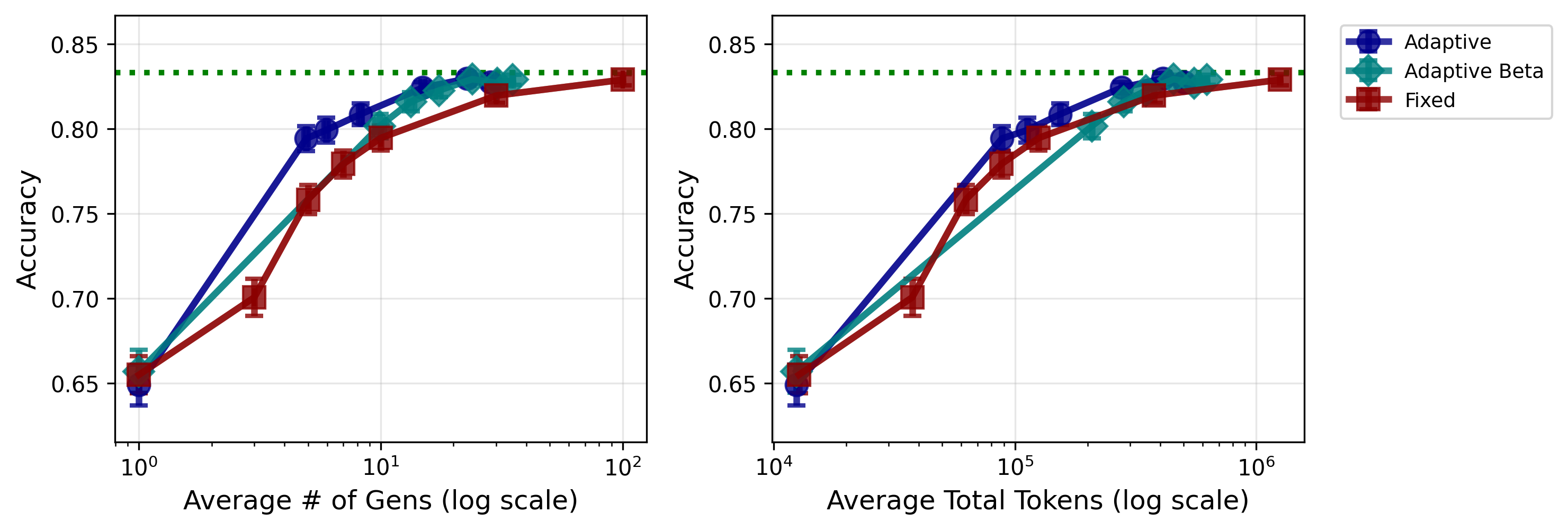}
    \caption{AIME2025}
\end{subfigure}

\begin{subfigure}{0.98\textwidth}
    \centering
    \includegraphics[width=\linewidth]{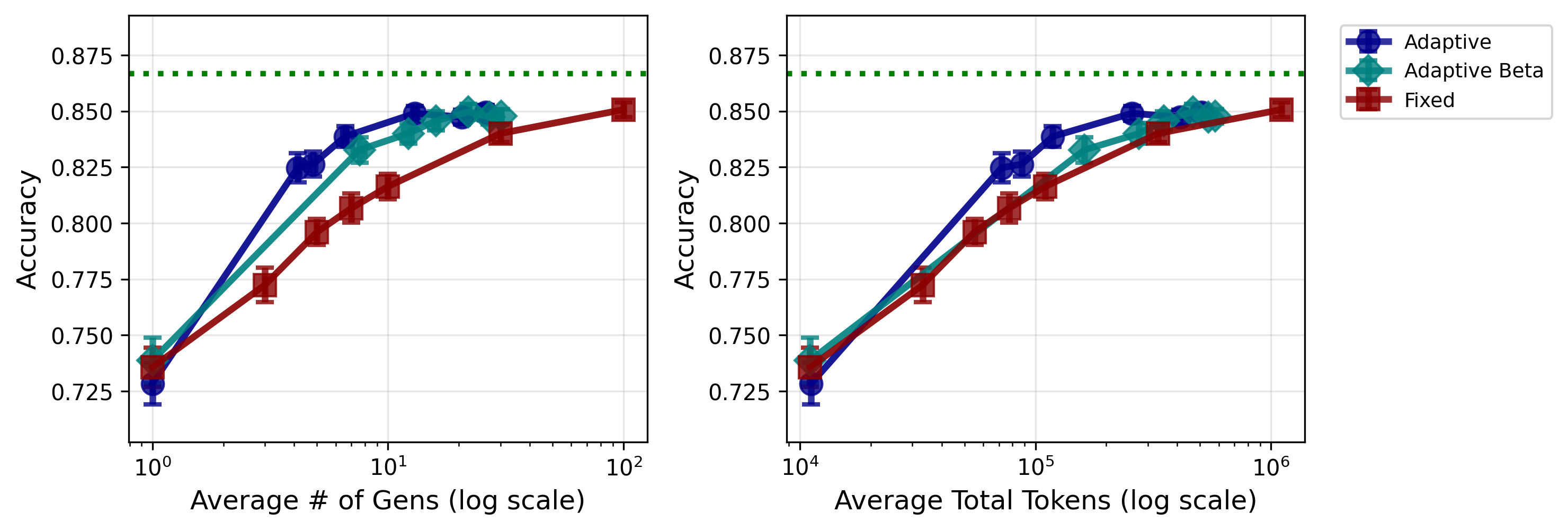}
    \caption{AIME2024}
\end{subfigure}

\begin{subfigure}{0.98\textwidth}
    \centering
    \includegraphics[width=\linewidth]{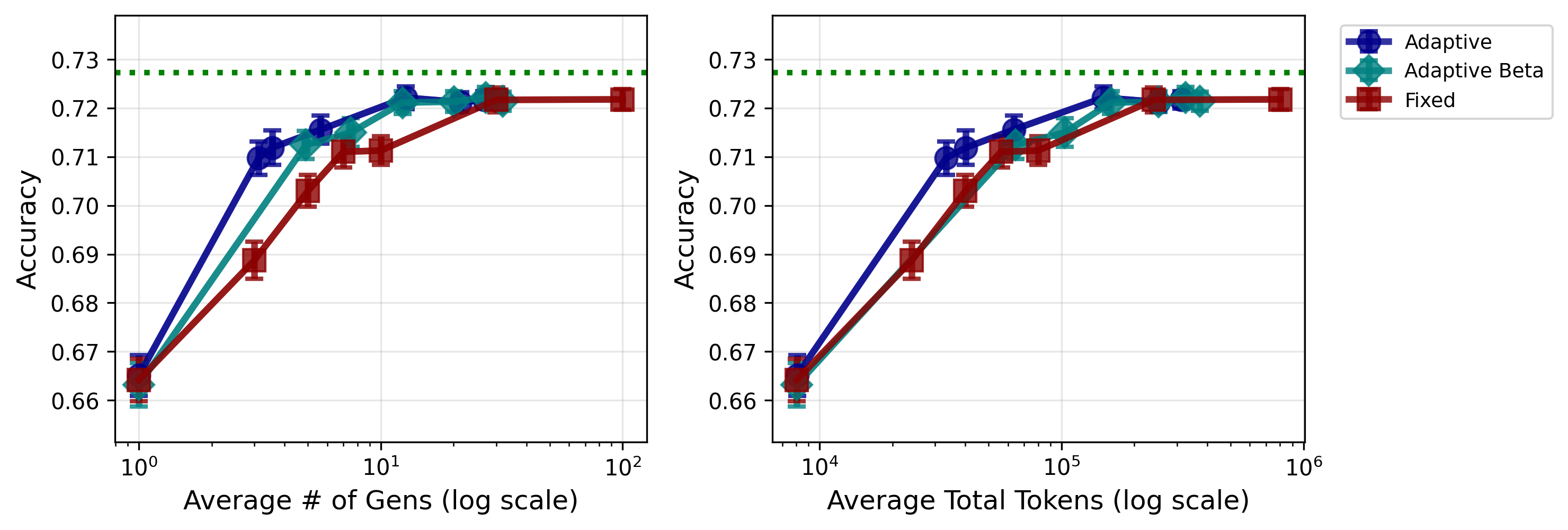}
    \caption{GPQA-Diamond}
\end{subfigure}

\begin{subfigure}{0.98\textwidth}
    \centering
    \includegraphics[width=\linewidth]{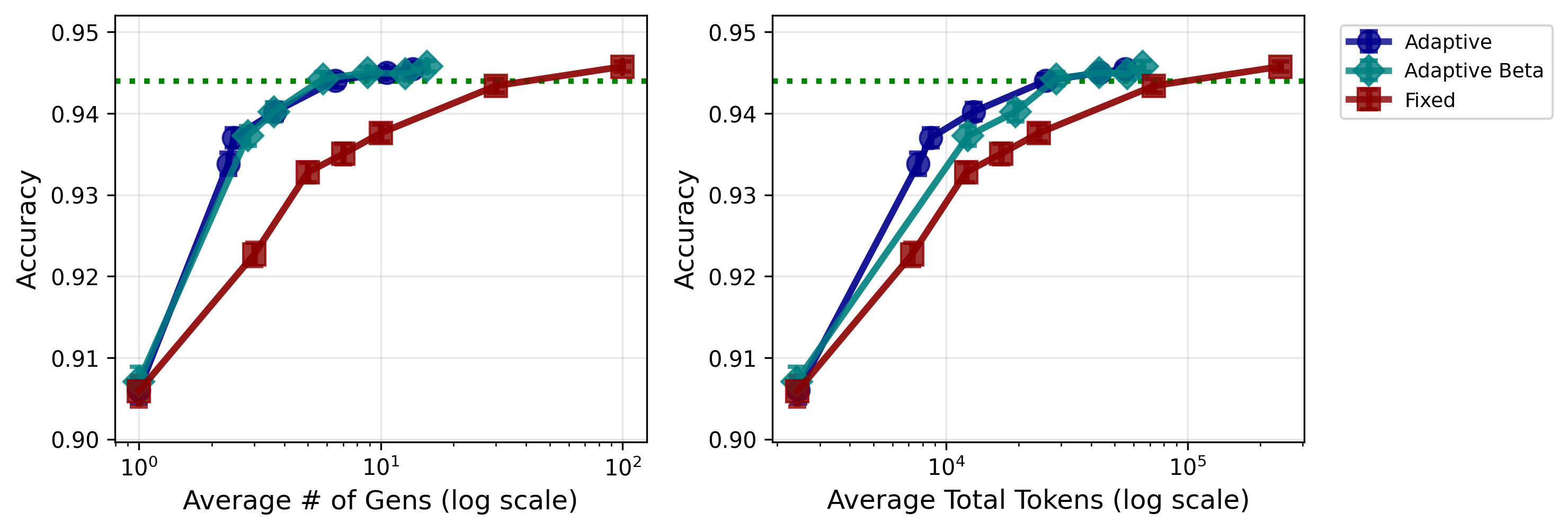}
    \caption{MATH500}
\end{subfigure}
\caption{\updated{Cost-analysis of our proposed method (``adaptive''), Beta stopping \citep{aggarwal-etal-2023-lets}, and fixed BoN for Phi-4-reasoning. The error bars are standard two-sigma confidence intervals. Green dashed line indicates the \boinflower{} performance.}}
\label{fig:adaptive_further_cost_phi4_all}
\end{figure}

\begin{figure}[h]
\centering
\begin{subfigure}{0.98\textwidth}
    \centering
    \includegraphics[width=\linewidth]{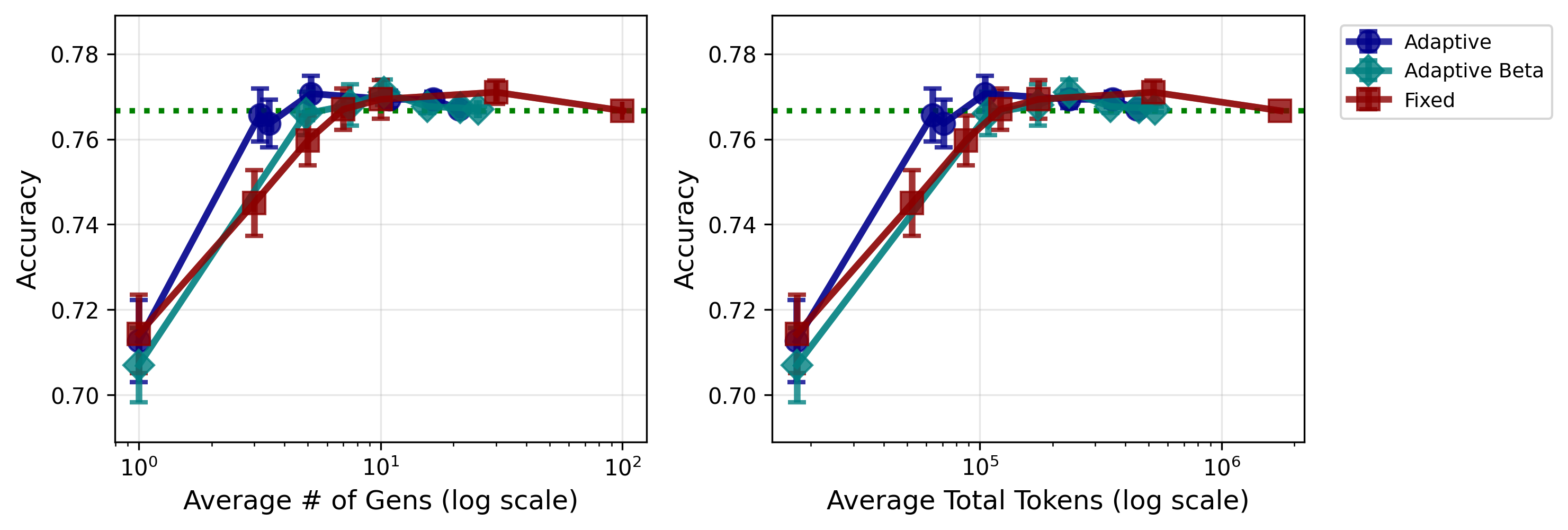}
    \caption{AIME2025}
\end{subfigure}

\begin{subfigure}{0.98\textwidth}
    \centering
    \includegraphics[width=\linewidth]{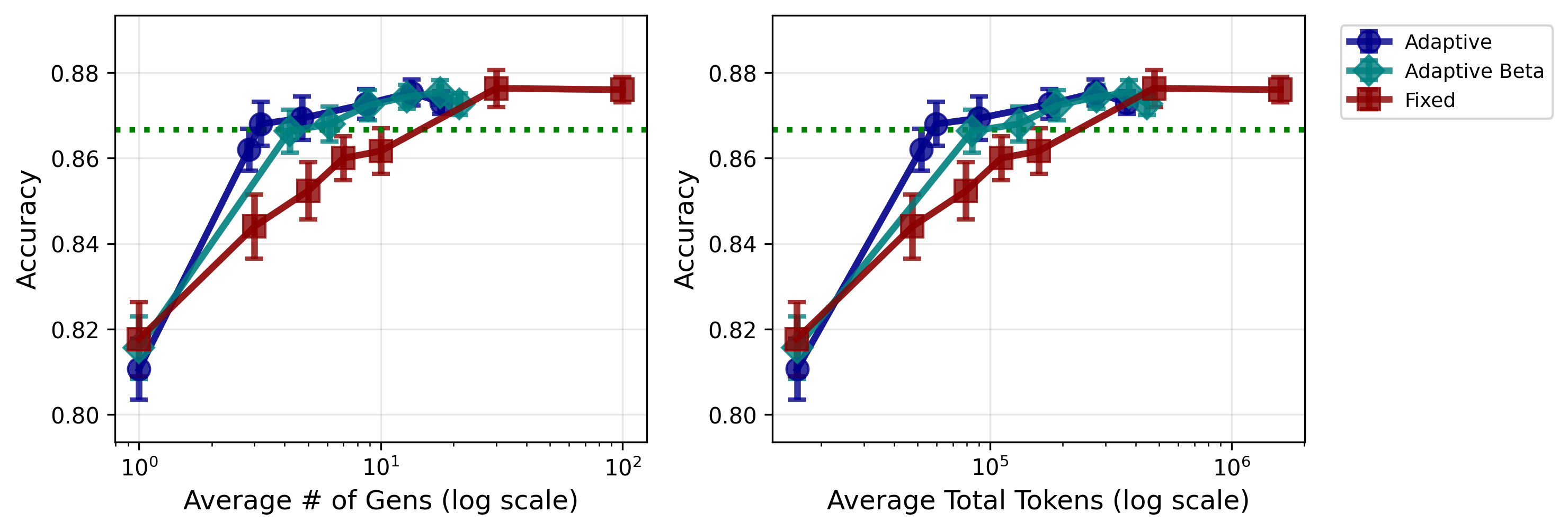}
    \caption{AIME2024}
\end{subfigure}

\begin{subfigure}{0.98\textwidth}
    \centering
    \includegraphics[width=\linewidth]{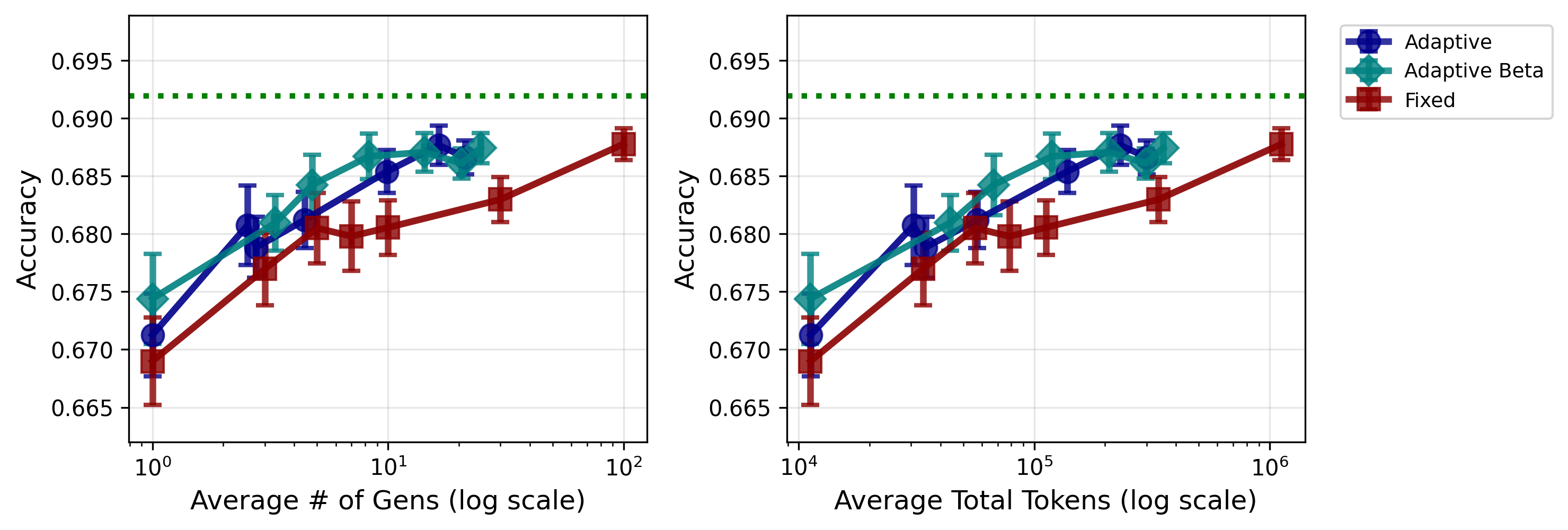}
    \caption{GPQA-Diamond}
\end{subfigure}

\begin{subfigure}{0.98\textwidth}
    \centering
    \includegraphics[width=\linewidth]{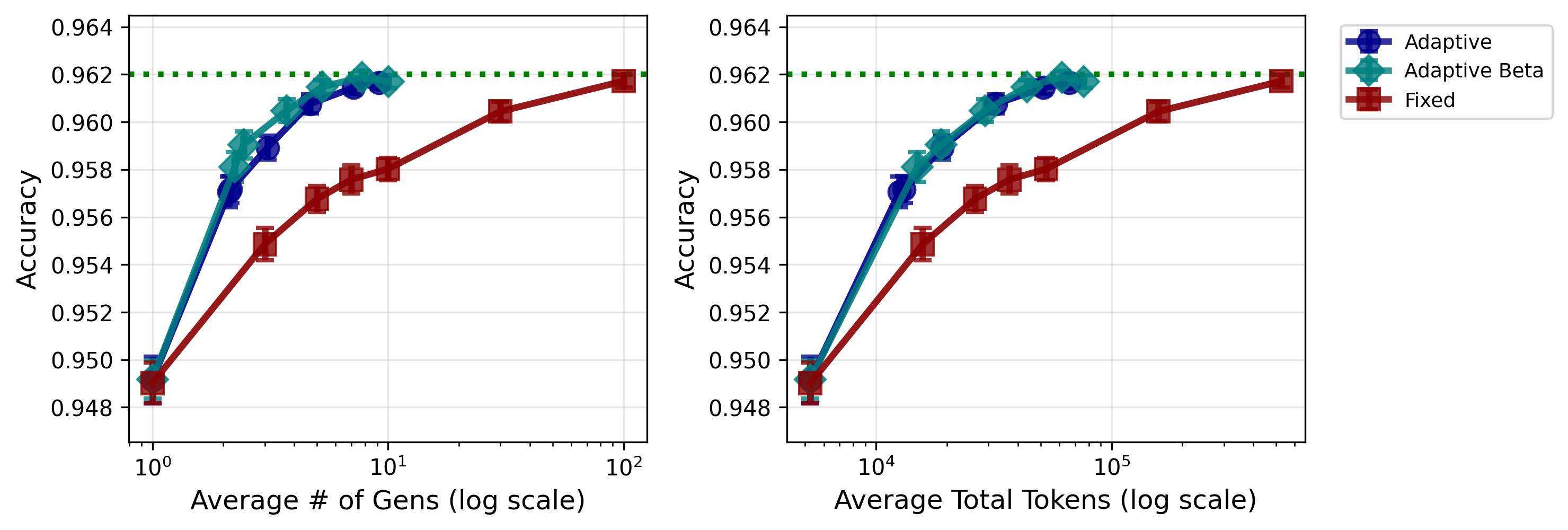}
    \caption{MATH500}
\end{subfigure}
\caption{\updated{Cost-analysis of our proposed method (``adaptive''), Beta stopping \citep{aggarwal-etal-2023-lets}, and fixed BoN for EXAONE-Deep-32B. The error bars are standard two-sigma confidence intervals. Green dashed line indicates the \boinflower{} performance.}}
\label{fig:adaptive_further_cost_exaone_all}
\end{figure}

\clearpage

\section{\updated{Oracle Adaptive Sampler}}

Finally, we compare the performance of the proposed method with "oracle adaptive sampler" that stops when at least one of the samples are correct or reaches $N_{\mathrm{max}}$.
Of course, such an adaptive algorithm requires an oracle (omniscient) verifier \citep{DBLP:journals/corr/abs-2407-21787,wucomputeoptimal2024} that requires the knowledge of the gold answer and is unavailable in practice.
Too see the performance of such oracle adaptive sampler, take an example of GPT-OSS-20B on the AIME2025 dataset (Figure \ref{fig:adaptive_further_cost_gpt_all} (a)). 
For example, an oracle adaptive sampler with $N_{\mathrm{max}} = 5$ has average $N \approx 1.6$ and accuracy $\ge 0.90$, which, by a large margin, exceeds the performance of the proposed algorithm. This is reasonable given that oracle does not need to depend on majority and one correct sample suffices to stop.

\section{\updated{Additional Datasets and Experiments}}
To further verify the robustness of the results, we tested two additional datasets.

The MedRECT dataset \citep{iwase2025medrectmedicalreasoningbenchmark} is a cross-lingual dataset that measures the ability of LLMs to detect and correct errors in clinical texts, where reasoning capability improves accuracy.
In particular, we used the Japanese dataset\footnote{\url{https://github.com/pfnet-research/medrect/blob/main/data/medrect/medrect-ja-step4-accepted.json}}
, which consists of 663 texts on medical diagnosis. Following \cite{iwase2025medrectmedicalreasoningbenchmark}, we extracted 367 texts that contain misdiagnoses and asked the LLM to detect the number of sentences that require correction.

The IMOBench dataset \citep{luong-etal-2025-towards} contains 400 mathematical questions extracted from International Mathematical Olympiad (IMO) problems.\footnote{\url{https://github.com/google-deepmind/superhuman/blob/main/imobench/answerbench_v2.csv}}
 Unlike AIME, answers in this dataset may have multiple valid expressions (e.g., "1/2" is equivalent to "\textbackslash frac{1}{2}"), and our answer normalizer is imperfect. For evaluation, we used only the subset of questions for which at least one of the 100 generated answers matched the gold answer.

The results on these two datasets, which are consistent with our other experimental findings, are shown in Figure~\ref{fig:adaptive_cost_gpt_newdataset}.

\begin{figure}[h]
\centering
\begin{subfigure}{0.98\textwidth}
    \centering
    \includegraphics[width=\linewidth]{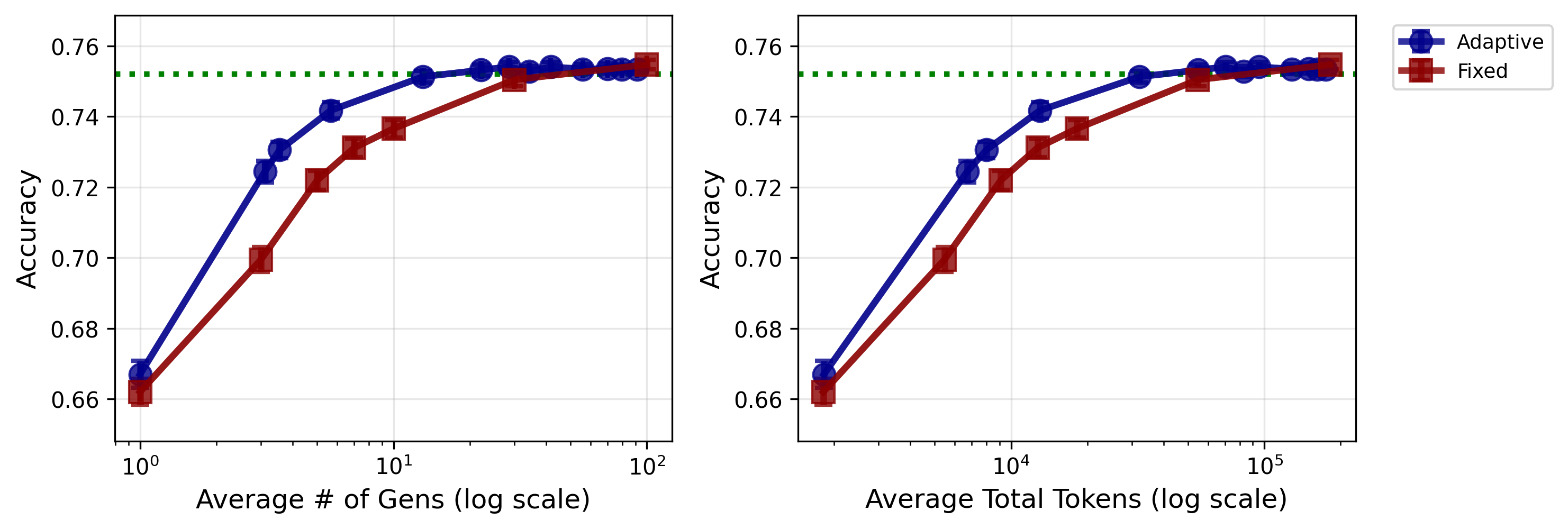}
    \caption{MedRECT}
\end{subfigure}

\begin{subfigure}{0.98\textwidth}
    \centering
    \includegraphics[width=\linewidth]{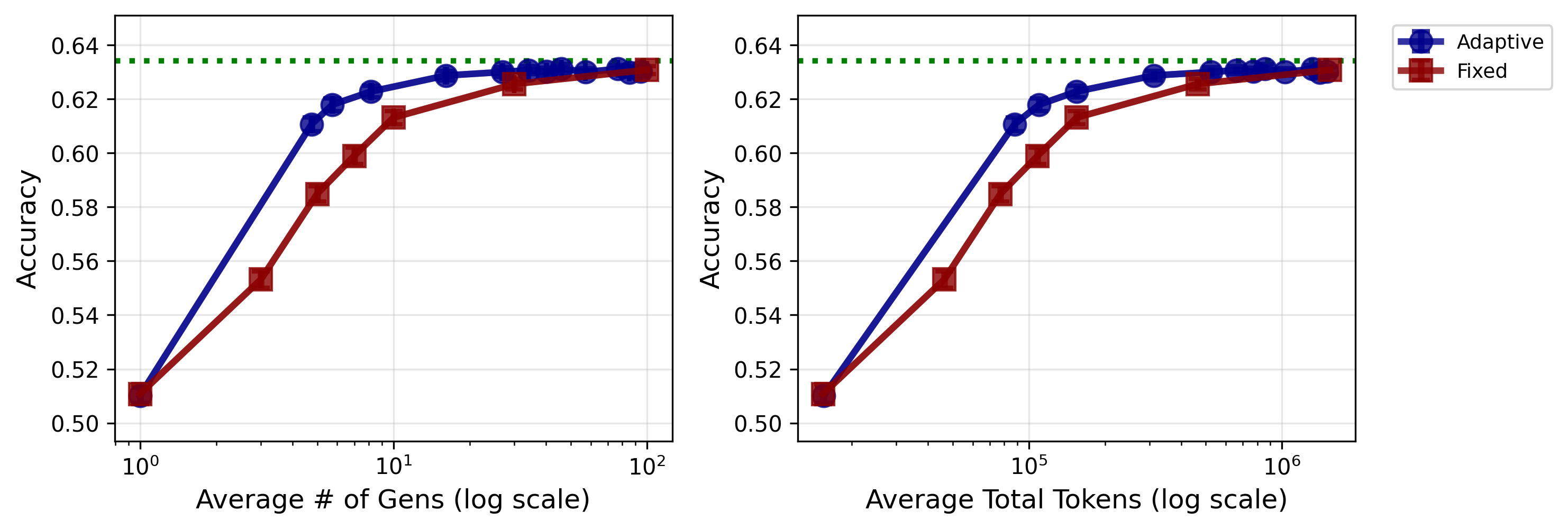}
    \caption{IMOBench}
\end{subfigure}

\caption{Cost-analysis of our proposed method and fixed BoN for GPT-OSS-20B. The error bars are standard two-sigma confidence intervals. Green dashed line indicates the \boinflower{} performance.}
\label{fig:adaptive_cost_gpt_newdataset}
\end{figure}

\end{document}